
\documentclass{article}

\usepackage[utf8]{inputenc} 
\usepackage[T1]{fontenc}    
\usepackage[colorlinks=true,citecolor=blue,linkcolor=black]{hyperref}       
\usepackage{url}            
\usepackage{booktabs}       
\usepackage{amsfonts}       
\usepackage{nicefrac}       
\usepackage{microtype}      
\usepackage{xcolor}         

\usepackage{float}
\usepackage{proof}
\usepackage{amsmath,amssymb,amsthm}
\usepackage{float}
\usepackage{mathrsfs}
\usepackage{bbm}
\usepackage{stmaryrd}
\usepackage{braket}
\usepackage{graphicx}
\usepackage{comment}
\usepackage[ruled,vlined]{algorithm2e}
\usepackage{subfigure}
\usepackage{wrapfig}

\usepackage{fullpage}
\newcommand{\Rnum}[1]{\uppercase\expandafter{\romannumeral #1} }
\makeatletter
\newcommand{\rmnum}[1]{\romannumeral #1}
\newcommand{\Rmnum}[1]{\expandafter\@slowromancap\romannumeral #1@}
\makeatother

\newtheorem{definition}{Definition}
\newtheorem{theorem}{Theorem}
\newtheorem{lemma}{Lemma}

\newtheorem{claim}[lemma]{Claim}

\newtheorem{remark}[lemma]{Remark}

%
\title{Robust and Fully-Dynamic Coreset for Continuous-and-Bounded Learning (With Outliers) Problems}

\author{%
	Zixiu Wang$^1$\thanks{Part of this work was done when Zixiu Wang was an intern under the supervision of Yiwen Guo at ByteDance.} \quad Yiwen Guo$^2$ \quad Hu Ding$^1$\thanks{Corresponding author.} \\
	{\small$^1$School of Computer Science and Technology,}\\
	{\small University of Science and Technology of China}\\
	{\small$^2$ByteDance AI Lab}\\
	{\small \texttt{wzx2014@mail.ustc.edu.cn, guoyiwen89@gmail.com, huding@ustc.edu.cn}}
}
\date{\vspace{-1cm}}

\begin{document}
\allowdisplaybreaks 

\maketitle
\begin{abstract}
In many machine learning tasks, a common approach for dealing with large-scale data is to build a small summary, {\em e.g.,} coreset,  that can efficiently represent the original input. However, real-world datasets usually contain outliers and most existing coreset construction methods are not resilient against outliers (in particular, an outlier can be located arbitrarily in the space by an adversarial attacker). In this paper, we propose a novel robust coreset method for the {\em continuous-and-bounded learning} problems (with outliers) which includes a broad range of popular optimization objectives in machine learning, {\em e.g.,} logistic regression and $ k $-means clustering. Moreover, our robust coreset  can be efficiently maintained in fully-dynamic environment. To the best of our knowledge, this is the first robust and fully-dynamic coreset construction method for these optimization problems. Another highlight is that our coreset size can depend on the doubling dimension of the parameter space, rather than the VC dimension of the objective function which could be very large or even challenging to compute. Finally, we conduct the experiments on real-world datasets to evaluate the effectiveness of our proposed robust coreset method. 

\end{abstract}

\section{Introduction}

As the rapid increasing of data volume in this big data era, we often need to develop low-complexity ({\em e.g.,} linear or even sublinear) algorithms for machine learning tasks.  
Moreover, our dataset is often maintained in a dynamic environment so that we have to consider the  issues like  data insertion and deletion. For example, as mentioned in the recent article~\cite{GinartGVZ19}, Ginart {\em et al.} discussed the scenario that some sensitive training data  have to be deleted due to the reason of privacy preserving. Obviously, it is prohibitive to re-train our model when the training data is changed dynamically, if the data size is extremely large. To remedy these issues, a natural way is to construct a small-sized summary of the training data so that we can run existing algorithms on the summary rather than the whole data. \textbf{Coreset}~\cite{core-survey}, which was originally studied in the community of computational geometry~\cite{AgarwalHV04}, has become a widely used data summary for many large-scale machine learning problems~\cite{BravermanFL16,HugginsCB16,LucicFKF17,MunteanuSSW18,MirzasoleimanCL20,DBLP:conf/icml/HuangHLFD21}. 
As a succinct data compression technique, coreset also enjoys several nice properties. For instance, coreset is usually composable and thus can be applied in the environment like distributed computing~\cite{DBLP:conf/pods/IndykMMM14}. Also, it is  usually able to obtain small coresets  for streaming algorithms~\cite{Har-PeledM04,Chen09} and fully-dynamic algorithms  with data insertion and deletion~\cite{DBLP:journals/dcg/Chan09,HenzingerK20}.

However, the existing coreset construction methods are still far from being satisfactory in practice. A major bottleneck is that most of them are sensitive to outliers. We are aware that 
real-world dataset is usually noisy and may contain outliers; note that the outliers can be located arbitrarily in the space and even a single outlier can significantly destroy the final machine learning result.  A typical example is poisoning attack, where an adversarial attacker may inject several specially crafted samples into the training data which can make the decision boundary severely deviate and cause unexpected misclassification~\cite{BiggioR18}. In the past decades, a number of algorithms have been proposed for solving optimization with outliers problems, like clustering~\cite{CharikarKMN01,Chen08,Chawla2013kmeansAU,GuptaLSM2017,k-Means+++}, regression~\cite{books/wi/RousseeuwL87,MountNPSW14,DBLP:conf/icml/DingW20}, and PCA~\cite{DBLP:journals/jacm/CandesLMW11}. 

To see why the existing coreset methods are sensitive to outliers, we can take the popular sampling based coreset framework~\cite{FeldmanL11} as an example. The framework needs to compute a ``sensitivity'' for each data item, which measures the importance degree of the data item to the whole data set; however, it tends to assign high sensitivities to the points who are far from the majority of the data, that is, an outlier is likely to have a high sensitivity and thus has a high chance to be selected to the coreset. Obviously, the coreset obtained by this way is not pleasant since we expect to contain more inliers rather than outliers in the coreset. It is also more challenging to further construct a fully-dynamic robust coreset. The existing robust coreset construction methods~\cite{FeldmanL11,HuangJLW18} often rely on simple uniform sampling and are efficient only when the number of outliers is a constant factor of the input size (we will discuss this issue in Section~\ref{sec-wu}). 
Note that other 
outlier-resistant data summary methods like~\cite{GuptaLSM2017,ChenA018} usually yield large approximation factors 
and are not easy to be maintained in a fully dynamic scenario, to our knowledge.

\subsection{Our Contributions}
\label{sec-ourcon}


In this paper, we propose a  \textbf{unified fully-dynamic robust coreset framework} for a class of optimization problems which is termed \emph{continuous-and-bounded (CnB) learning}. This type of learning problems covers a broad range of optimization objectives in machine learning~\cite[Chapter 12.2.2]{understandingML}. Roughly speaking, ``CnB learning'' requires that the optimization objective is a continuous function ({\em e.g.,} smooth or Lipschitz), and meanwhile the solution is restricted within a bounded region. 
We emphasize that this ``bounded'' assumption  is quite natural in real machine learning scenarios.  
To shed some light, we can consider running an iterative algorithm ({\em e.g.,} the popular gradient descent or expectation maximization) for optimizing some objective; the solution is always restricted within a local region except for the first few rounds. 
Moreover, it is also reasonable to bound the solution range in a dynamic environment because one single update (insertion or deletion) is not likely to dramatically change the solution.  

Our coreset construction is a novel \textbf{hybrid framework}. First, we suppose that there exists an ordinary coreset construction method $\mathcal{A}$ for the given CnB optimization objective (without considering outliers). Our key idea is to classify the input data into two parts: the ``suspected'' inliers and the ``suspected'' outliers, where the ratio of the sizes of these two parts is a carefully designed parameter $\lambda$. For the ``suspected'' inliers, we run the method $\mathcal{A}$ (as a black box); for the ``suspected'' outliers, we directly take a small sample uniformly at random; finally, we prove that these two parts together yield a robust coreset. Our framework can be also efficiently implemented under the merge-and-reduce framework for dynamic setting (though the original merge-and-reduce framework is not designed for the case with outliers)~\cite{BentleyS80,Har-PeledM04}. A cute feature of our framework is that we can easily tune the parameter $\lambda$ for updating our coreset dynamically, if the fraction of outliers is changed in the dynamic environment.

The other contribution of this paper is that we propose two different coreset construction methods for CnB optimization objectives ({\em i.e.,} the aforementioned black box $\mathcal{A}$). 
The first method is based on the importance sampling framework~\cite{FeldmanL11}, and the second one is based on a space partition idea. Our coreset sizes depend on the doubling dimension of the solution space rather than the VC (shattering) dimension. This property is particularly useful if the VC dimension is too high or not easy to compute, or considering the scenarios like sparse optimization (the domain of the solution vector has a low doubling dimension). To our knowledge, the only existing coreset construction methods that depend on doubling dimension are from Huang {\em et al.}~\cite{HuangJLW18} and Cohen-Addad {\em et al.}~\cite{DBLP:conf/stoc/Cohen-AddadSS21}, but their results are only for clustering problems. Our methods can be applied for a broad range of widely studied optimization objectives, such as  
logistic regression~\cite{MunteanuSSW18}, Bregman clustering~\cite{BanerjeeMDG05}, and truth discovery~\cite{DBLP:journals/sigkdd/LiGMLSZFH15}. It is worth noting that although some coreset construction methods for them have been proposed before ({\em e.g.,}~\cite{DBLP:conf/aistats/LucicBK16,MunteanuSSW18,tolochinsky2018generic,DBLP:conf/icml/DingW20,DBLP:conf/icml/HuangHLFD21}), they are all problem-dependent and we are the first, to the best of our knowledge, to study them from a unified ``CnB'' perspective.

\section{Preliminaries}\label{sec:pre}

We introduce several important notations used throughout this paper. 
Suppose $\mathcal{P}$ is the parameter space. Let $X$ be the input data set that contains $n$ items in a metric space $\mathcal{X}$, and each $x\in X$ has a weight $w(x)\geq 0$. Further, we use $ (X,z) $ to  denote a given instance $X$ with $z$ outliers. 
We always use $|\cdot|$ and $\llbracket \cdot \rrbracket$ respectively to denote the number of data items and the total weight of a given data set. We consider the learning problem whose objective function is the weighted sum of the cost over $X$, {\em i.e.}, 
\begin{eqnarray}
 f(\theta,X):=\sum_{x\in X} w(x) f(\theta,x),\label{for-obj} 
\end{eqnarray}
where $f(\theta,x)$ is the non-negative cost contributed by $x$ with the parameter vector $\theta\in \mathcal{P}$. The goal is to find an appropriate $\theta$ so that the objective function $f(\theta,X)$ is minimized. Usually we assume each $x\in X $ has unit weight ({\em i.e.,}  $ w(x)=1 $), and it is straightforward  to extend our method to weighted case. Given the pre-specified number $ z\in\mathbb{Z}^+ $ of outliers in $ X $ (for weighted case, ``$z$'' refers to the total weight of outliers), we then define the ``robust'' objective function: 
\begin{equation}
    f_{z}(\theta,X) := \min_{O\subset X, \llbracket O \rrbracket = z}\ {f}(\theta,X\backslash O).\label{for-obj2} 
\end{equation}
Actually, the above definition~(\ref{for-obj2}) comes from the popular ``trimming'' idea~\cite{books/wi/RousseeuwL87} that has been widely used for robust optimization problems. 

Below we present the formal definition of continuous-and-bound learning problem. A function $ g:\mathcal{P}\rightarrow \mathbb{R} $ is \emph{$ \alpha $-Lipschitz continuous} if for any $ \theta_1,\theta_2 \in \mathcal{P} $, $ |g(\theta_1)-g(\theta_2)|\leq \alpha \|\Delta\theta \| $, where $ \Delta\theta= \theta_1-\theta_2 $ and $ \|\cdot\| $ is some specified norm in $ \mathcal{P} $. 

\begin{definition}[Continuous-and-Bounded (CnB) Learning~\cite{understandingML}]\label{def:CB-learning} 
Let $\alpha$, $\ell>0$, and $\tilde{\theta}\in \mathcal{P}$. Denote by $\mathbb{B}(\tilde{\theta}, \ell)$ the ball centered at $\tilde{\theta}$ with radius   $ \ell $ in the parameter space $\mathcal{P}$.	An objective (\ref{for-obj}) is called a CnB  learning problem with the parameters $( \alpha,\ell, \tilde{\theta} )$ if (\rmnum{1}) the loss function $ f(\cdot,x) $ is $\alpha $-Lipschitz continuous for any $x\in X$, and (\rmnum{2})  $\theta $ is always restricted within $\mathbb{B}(\tilde{\theta}, \ell)$. 
\end{definition}
\begin{remark}
\label{rem-cnb}
We can also consider other variants for CnB learning with replacing the ``$\alpha $-Lipschitz continuous'' assumption. For example, a differentiable   function $ g $ is ``$ \alpha $-Lipschitz continuous gradient''   if its gradient  $ \nabla g $  is $ \alpha $-Lipschitz continuous (it is also called ``$ \alpha $-smooth'').   Similarly, a  twice-differentiable function $ g$ is  ``$ \alpha $-Lipschitz continuous Hessian'' if its  Hessian matrix  $ \nabla^2 g $ is $ \alpha $-Lipschitz continuous.   In this paper, we mainly focus the problems under the ``$\alpha $-Lipschitz continuous'' assumption, and our analysis can be also applied to these two variants via slight modifications. Please see more details in Section~\ref{sec-appcbl}.
\end{remark}

Several examples for the CnB learning problem are shown in Section~\ref{supp:instance}. 
We also define the coreset for the CnB learning problems below. 
\begin{definition}[$ \varepsilon $-coreset]
\label{def-coreset}
	Let $\varepsilon>0$. Given a dataset $ X\subset \mathcal{X} $ and the objective function $ f(\theta,X) $,  we say that a weighted set $ C\subset\mathcal{X} $ is an \emph{$ \varepsilon $-coreset} of $ X $ if for any $ \theta\in\mathbb{B}(\tilde{\theta}, \ell) $, we have
\begin{equation}\label{coreset}
	|f(\theta,C)-f(\theta,X)| \leq \varepsilon f(\theta,X).
\end{equation} 
\end{definition}

If $C$ is an  $ \varepsilon $-coreset of $ X $, we can run an existing optimization algorithm on $C$ so as to obtain an approximate solution. Obviously, we expect that the size of $C$ to be as small as possible. Following Definition~\ref{def-coreset}, we  also define the corresponding \emph{robust coreset} (the similar definition was also introduced in~\cite{FeldmanL11,HuangJLW18} before).

\begin{definition}[robust coreset]\label{robust-coreset}
	Let $\varepsilon>0$, and $\beta\in [0,1)$. Given the dataset $ X\subset \mathcal{X} $ and the objective function $ {f}(\theta,x) $, we say that a weighted dataset $ C\subset \mathcal{X} $ is a $ (\beta,\varepsilon) $-robust coreset of $ X $ if for any $ \theta\in \mathbb{B}(\tilde{\theta}, \ell) $, we have
	\begin{equation}
		(1-\varepsilon) f_{(1+\beta)z}(\theta,X)  \leq
		 {f}_{z}(\theta,C)  \leq (1+\varepsilon) f_{(1-\beta)z}(\theta,X).
	\end{equation}

\end{definition}
Roughly speaking, if we obtain an approximate solution $\theta'\in\mathcal{P}$ on $C$, its quality can be preserved on the original input data $X$. The parameter $\beta$ indicates the error on the number of outliers if using $\theta'$ as our solution on $X$. If we set $\beta=0$, that means we allow no error on the number of outliers.  In Section~\ref{supp:quality}, we present our detailed discussion on the quality loss (in terms of the objective value and the number of outliers) of this transformation from $C$ to $X$.

\textbf{The rest of this paper is organized as follows.} In Section~\ref{robust}, we introduce our robust coreset framework and show how to implement it in a fully-dynamic environment. In Section~\ref{standard-coreset}, we propose two different ordinary coreset (without outliers) construction methods for CnB learning problems, which can be used as the black box in our robust coreset framework of Section~\ref{robust}. Finally, in Section~\ref{sec-app} we illustrate the application of our coreset method in practice.

\section{Our Robust Coreset Framework}
\label{robust}   
We first consider the simple uniform sampling as the robust coreset in Section~\ref{sec-wu} (in this part, we consider the general learning problems without the CnB assumption). To improve the result, we further introduce our major contribution, the hybrid framework for robust coreset construction and its fully-dynamic realization in Section~\ref{sec:robust-coreset} and \ref{sec-dynamic}, respectively. 


\subsection{Uniform Sampling for General Case}
\label{sec-wu}
As mentioned before, the existing robust coreset construction methods~\cite{FeldmanL11,HuangJLW18} are based on uniform sampling. Note that their methods are only for the clustering problems ({\em e.g.,} $k$-means/median clustering). Thus a natural question is that whether the uniform sampling idea also works for the general  learning problems in the form of (\ref{for-obj}). Below we answer this question in the affirmative.  To illustrate our idea, we need the following  definition for range space. 

\begin{definition}[$f$-induced range space]
\label{def-range}
	Suppose $\mathcal{X}$ is an arbitrary metric space. 
	Given the cost function $ f(\theta,x)$ as (\ref{for-obj}) over $\mathcal{X}$, we let
	\begin{equation}\label{range}
		\mathfrak{R}=\Big\{ \{  x\in \mathcal{X}:f(\theta,x)\leq r \}\mid \forall r \geq 0, \forall \theta\in\mathcal{P} \Big\} ,
	\end{equation} 
	then $ (\mathcal{X},\mathfrak{R}) $ is called the $f$-induced range space. Each $R\in \mathfrak{R}$ is called a range of $\mathcal{X}$.
	\end{definition}
The following ``$ \delta $-sample'' concept comes from the theory of VC dimension~~\cite{li2001improved}. 
Given a range space $ (\mathcal{X},\mathfrak{R}) $, let $ C$ and $ X $ be two finite subsets of $ \mathcal{X} $. Suppose $\delta\in(0,1)$. We say $ C $ is a \emph{$ \delta $-sample} of $ X $ if $ C\subseteq X $ and   
\begin{equation}
  \left| \frac{|X\cap R|}{|X|}-\frac{|C\cap R|}{|C|} \right|\leq \delta  \text{ for any } R\in\mathfrak{R}.\label{for-deltasample}
  \end{equation} 
Denote by $ \mathtt{vcdim} $ the VC dimension of the range space of Definition~\ref{def-range}, then we can achieve a $ \delta $-sample with probability $ 1-\eta $ by uniformly sampling $ O(\frac{1}{\delta^2}(\mathtt{vcdim}+\log \frac{1}{\eta})) $ points from $X$~\cite{li2001improved}. The value of $ \mathtt{vcdim} $ depends on the function ``$ f$''. For example, if ``$ f$'' is the loss function of logistic regression in $ \mathbb{R}^d $, then $ \mathtt{vcdim}$ can be as large as $ \Theta(d) $~\cite{MunteanuSSW18}. 
The following theorem shows that a $ \delta $-sample can serve as a robust coreset if $z$ is a constant factor of $n$. Note that in the following theorem, the objective $f$ can be any function without following Definition~\ref{def:CB-learning}. 

\begin{theorem}\label{lemma-eps-sample}
Let $(X,z)$ be an instance of the robust  learning problem (\ref{for-obj2}). 
	If $ C $ is a $ \delta $-sample of $ X $ in  the $f$-induced range space. We assign $w(c)=\frac{n}{|C|}$ for each $c\in C$. Then we have 
	\begin{equation}
		f_{z+\delta n}(\theta,X)\leq
		f_{z}(\theta,C)\leq f_{z-\delta n}(\theta,X) \label{for-the-uniform}
	\end{equation}
	for any $ \theta\in\mathcal{P} $ and any $ \delta \in (0,z/n] $. In particular, if $\delta=\beta z/n$, $C$ is a $(\beta,0) $-robust coreset of $X$ and the size of $C$ is $O(\frac{1}{\beta^2}(\frac{n}{z})^2(\mathtt{vcdim}+\log \frac{1}{\eta}))$.  
\end{theorem}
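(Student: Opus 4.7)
My plan is to reduce both inequalities to a single pointwise comparison of two monotone functions, using a ``layer-cake'' (or sorted-sum) representation of the trimmed objective. Fix an arbitrary $\theta\in\mathcal{P}$. For each $r\geq 0$, let $a_X(r)=|\{x\in X:f(\theta,x)>r\}|$ and $a_C(r)=|\{c\in C:f(\theta,c)>r\}|$. Because the sublevel sets $\{x:f(\theta,x)\leq r\}$ are exactly the ranges in $\mathfrak{R}$ from Definition~\ref{def-range}, the $\delta$-sample property in (\ref{for-deltasample}), applied to the complement of these ranges, gives
\begin{equation*}
\left| \frac{a_X(r)}{n} - \frac{a_C(r)}{|C|} \right| \leq \delta \qquad \text{for every } r\geq 0.
\end{equation*}

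Next I would rewrite the trimmed sums as integrals. Sorting $X$ by $f(\theta,\cdot)$ in decreasing order, the optimum in (\ref{for-obj2}) is to discard the $z$ heaviest points, hence $f_{z'}(\theta,X)=\int_0^{\infty}\max\{0,\,a_X(r)-z'\}\,dr$ for any $z'\in[0,n]$. For $C$ each point has weight $n/|C|$, so removing weight $z$ means deleting the $z|C|/n$ highest-cost points, giving $f_z(\theta,C)=(n/|C|)\int_0^{\infty}\max\{0,\,a_C(r)-z|C|/n\}\,dr$. Dividing through by $n$, set $u(r)=a_X(r)/n$ and $v(r)=a_C(r)/|C|$, so that
\begin{equation*}
\tfrac{1}{n}f_z(\theta,C)=\int_0^{\infty}\max\{0,v(r)-z/n\}\,dr, \quad \tfrac{1}{n}f_{z'}(\theta,X)=\int_0^{\infty}\max\{0,u(r)-z'/n\}\,dr,
\end{equation*}
and $|u(r)-v(r)|\leq\delta$ for all $r$. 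From $u(r)-\delta\leq v(r)$ we get $u(r)-(z/n+\delta)\leq v(r)-z/n$; taking $\max\{0,\cdot\}$ preserves the inequality, so integrating yields $f_{z+\delta n}(\theta,X)\leq f_z(\theta,C)$. Symmetrically, $v(r)\leq u(r)+\delta$ gives $v(r)-z/n\leq u(r)-(z/n-\delta)$, whence $f_z(\theta,C)\leq f_{z-\delta n}(\theta,X)$; the assumption $\delta\leq z/n$ is exactly what is needed to keep $z-\delta n\geq 0$ so that the right-hand side is well defined.

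For the ``in particular'' part, substituting $\delta=\beta z/n$ turns the two inequalities into $f_{(1+\beta)z}(\theta,X)\leq f_z(\theta,C)\leq f_{(1-\beta)z}(\theta,X)$, which is Definition~\ref{robust-coreset} with $\varepsilon=0$. The size bound then follows immediately from the classical VC-dimension sampling bound quoted in the text: a uniform sample of size $O\bigl(\tfrac{1}{\delta^{2}}(\mathtt{vcdim}+\log\tfrac{1}{\eta})\bigr)=O\bigl(\tfrac{n^{2}}{\beta^{2}z^{2}}(\mathtt{vcdim}+\log\tfrac{1}{\eta})\bigr)$ is a $\delta$-sample with probability $1-\eta$.

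The main technical nuisance I foresee is the integrality of $z|C|/n$ and the possibility that the ``optimal outlier set'' in the definition of $f_{z'}(\theta,\cdot)$ is required to have exactly weight $z'$; in general $z\pm\delta n$ and $z|C|/n$ need not be integers. I would address this either by allowing a single boundary item to be taken fractionally (which matches the weighted ``trimming'' interpretation of (\ref{for-obj2})), or by standard rounding that shifts the thresholds by at most $1$ and is absorbed into the $O(\cdot)$ size. Apart from this bookkeeping, the argument is entirely driven by the pointwise comparison $|u(r)-v(r)|\leq\delta$ that the $\delta$-sample property provides.
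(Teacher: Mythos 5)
Your argument is correct, and it reaches the conclusion by a genuinely different (and arguably cleaner) route than the paper. The paper's proof makes the two sets comparable by brute force: it duplicates $C$ and $X$ into two multisets of common size $N=nm$, sorts the costs, and proves by contradiction (choosing the range $R_0=\{x: f(\theta,x)\leq f(\theta,c'_{i_0+\delta N})\}$ and invoking the $\delta$-sample property) the order-statistics shift $f(\theta,c'_{i+\delta N})\geq f(\theta,x'_i)$, after which the trimmed sums are compared term by term. You instead pass to the survival functions $a_X(r), a_C(r)$, observe that the $\delta$-sample condition on the sublevel ranges transfers to their complements to give the uniform bound $|a_X(r)/n-a_C(r)/|C||\leq\delta$ for all $r$, and write each trimmed objective as a layer-cake integral $\int_0^\infty\max\{0,\cdot\}\,dr$, so both inequalities drop out of a pointwise comparison of integrands. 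The two proofs rest on the same underlying fact (closeness of the empirical cost distributions of $X$ and $C$ in a Kolmogorov-type sense), but yours avoids the duplication trick and the contradiction step, and the integral formulation handles the weights $n/|C|$ and the non-integrality of $z|C|/n$ and $z\pm\delta n$ more transparently — an issue the paper's proof silently assumes away by writing sums up to $m-\frac{m}{n}z$ and $(1-\delta)N-mz$; your remark about fractional trimming or absorbing a one-item rounding error is the right fix. What the paper's version buys in exchange is a purely combinatorial argument with no integrals, reading directly as sums of sorted costs. The ``in particular'' part (setting $\delta=\beta z/n$ and citing the VC sampling bound for the sample size) matches the paper.
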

\begin{proof}
Suppose the size of $C$ is $m$ and let $N=nm$. To prove Theorem~\ref{lemma-eps-sample}, we imagine to generate two new sets as follows. For each point $c\in C$, we generate $n$ copies; consequently we obtain a new set $C'$ that actually is the union of $n$ copies of $C$. Similarly, we generate a new set $X'$ that is the union of $m$ copies of $X$. Obviously, $|C'|=|X'|=N$. Below, we fix an arbitrary $ \theta\in\mathcal{P} $ and show that (\ref{for-the-uniform}) is true. 

We order the points of $X'$ based on their objective values; namely, $X'=\{x'_i\mid 1\leq i\leq N\}$ and $f(\theta, x'_1)\leq f(\theta, x'_2)\leq\cdots\leq f(\theta, x'_N)$. Similarly, we have $C'=\{c'_i\mid 1\leq i\leq N\}$ and $f(\theta, c'_1)\leq f(\theta, c'_2)\leq\cdots\leq f(\theta, c'_N)$. Then we claim that for any $ 1\leq i\leq (1-\delta)N $, the following inequality holds: 
\begin{equation}\label{eq-1}
    f(\theta,c'_{i+\delta N})\geq f(\theta,x'_{i}). 
\end{equation}
Otherwise,  there exists some $i_0$ that 
%
%
%
$ f(\theta,c'_{i_0+\delta N})< f(\theta,x'_{i_0}) $. Consider the range $R_0=\{x\in \mathcal{X}\mid f(\theta,x)\leq f(\theta,c'_{i_0+\delta N})  \} $. Then we have
%
%
\begin{eqnarray}
	\frac{|C\cap R_0|}{|C|}=\frac{|C'\cap R_0|/n}{|C|}&\geq& \frac{(i_0+\delta N)/n}{m}=\frac{i_0}{N}+\delta; \\
	\frac{|X\cap R_0|}{|X|}=\frac{|X'\cap R_0 | /m}{|X|}&<&\frac{i_0/m}{n}=\frac{i_0}{N}. 
\end{eqnarray}
That is, $\big|\frac{|C\cap R_0|}{|C|}-\frac{|X\cap R_0|}{|X|}\big|>\delta$ which is in  contradiction with the fact that $ C $ is a $ \delta $-sample of $ X $. Thus (\ref{eq-1}) is true. As a consequence, we have
\begin{eqnarray}
	{f}_{z}(\theta,C)&=&\frac{n}{m}\sum_{i=1}^{m-\frac{m}{n}z}f(\theta,c_i) = \frac{1}{m}\sum_{i=1}^{N-mz}f(\theta,c'_i)  \geq \frac{1}{m}\sum_{i=1+\delta N}^{N-mz}f(\theta,c'_i)   \\
	&\underbrace{\geq}_{\text{by (\ref{eq-1})}} &\frac{1}{m}\sum_{i=1}^{(1-\delta)N-mz}f(\theta,x'_i)=\sum_{i=1}^{(1-\delta)n-z}f(\theta,x_i)=f_{z+\delta n}(\theta,X).
\end{eqnarray}
So the left-hand side of (\ref{for-the-uniform}) is true, and the right-hand side can be proved by using the similar manner. 
%
%
%
%
%
\end{proof}

\begin{remark}
Our  proof is partly inspired by the ideas of \cite{MountNPSW14,DBLP:journals/ml/MeyersonOP04} for analyzing uniform sampling. 
Though the uniform sampling is simple and easy to implement, it has two major drawbacks. First, it always involves an error ``$\delta$'' on the number of outliers (otherwise, if letting $\delta=0$, the sample should be the whole $X$). Also, the result is interesting only when $z$ is a constant factor of $n$. For example, if $z=\sqrt{n}$, the obtained sample size can be as large as $n$. Our hybrid robust framework proposed in Section~\ref{sec:robust-coreset} can resolve these two issues for CnB learning problems. 
\end{remark}

\subsection{The Hybrid Framework for $ (\beta,\varepsilon) $-Robust Coreset}\label{sec:robust-coreset}
Our idea for building the robust coreset comes from the following intuition. In an ideal scenario, if we know who are the inliers and who are the outliers, we can simply construct the coresets for them separately. In reality, though we cannot obtain such a clear classification, the CnB property (Definition~\ref{def:CB-learning}) can guide us to obtain a ``coarse'' classification. Furthermore, together with some novel insights in geometry, we prove that such a hybrid framework can yield a $ (\beta,\varepsilon) $-robust coreset. 

Suppose $\varepsilon\in(0,1)$ and  the objective $f$ is continuous-and-bounded as Definition~\ref{def:CB-learning}. Specifically, the parameter vector $\theta$ is always restricted within the ball $\mathbb{B}(\tilde{\theta}, \ell)$.  
First, we classify $ X $ into two parts according to the value of $ f(\tilde{\theta},x) $. 
Let  $\varepsilon_0 := \min \left\{ \frac{\varepsilon}{16},\frac{\varepsilon\cdot\inf_{\theta\in\mathbb{B}(\tilde{\theta}, \ell)} f_z(\theta,X)}{16(n-z)\alpha \ell} \right\} $  and $\tilde{z}:=  \left( 1+1/\varepsilon_0 \right)z$; also assume $x_{\tilde{z}}\in X$ is the point who has the $ \tilde{z}$-th largest cost $ f(\tilde{\theta},x) $ among $X$.  
We let $ \tau=f(\tilde{\theta},x_{\tilde{z}}) $, and thus we obtain the set 
\begin{equation}\label{markov}
	 \{x\in X\mid f(\tilde{\theta},x)\geq \tau \}.
\end{equation}
that has size $\tilde{z}$. We call these $\tilde{z}$ points as the ``suspected outliers'' (denoted as $X_{\mathtt{so}}$) and the remaining $ n-\tilde{z}$ points as the ``suspected inliers'' (denoted as $X_{\mathtt{si}}$). If we fix $\theta=\tilde{\theta}$, the set of the ``suspected outliers'' contains at least $\frac{1}{\varepsilon_0}z$ real inliers (since $\tilde{z}=\left( 1+1/\varepsilon_0 \right)z$). This immediately implies the following inequality: 
\begin{align}\label{notmarkov}
	&\tau  z \leq \varepsilon_0 f_{z}(\tilde{\theta},X).
\end{align}

Suppose we have an ordinary coreset construction method $\mathcal{A}$ as the black box (we will discuss it in Section~\ref{standard-coreset}).  \textbf{Our robust coreset construction} is as follows:

\vspace{0.1in}
{\em \hspace{0.2in} We build an $ \varepsilon_1 $-coreset ($\varepsilon_1=\varepsilon/4 $) for $X_{\mathtt{si}}$ by $\mathcal{A}$ and  take a $ \delta $-sample for $X_{\mathtt{so}}$ with setting $ \delta=\frac{\beta\varepsilon_0}{1+\varepsilon_0}$. We denote these two sets as $ C_{\mathtt{si}} $ and $ C_{\mathtt{so}} $ respectively. If we set $\beta=0$ ({\em i.e.,} require no error on the number of outliers), we just directly take all the points of  $X_{\mathtt{so}}$ as $ C_{\mathtt{so}} $.  Finally, we return  $ C =C_{\mathtt{si}}\cup C_{\mathtt{so}} $ as the robust coreset. }


\begin{theorem}
	\label{thm-coreset-outlier}
	Given a CnB learning instance $(X,z)$, the above coreset construction method returns a $ (\beta, \varepsilon) $-robust coreset (as Defintion~\ref{robust-coreset}) of size 
	\begin{equation}
	 |C_{\mathtt{si}}|+\min \Big\{ O\left( \frac{1}{\beta^2\varepsilon^2}(\mathtt{vcdim}+\log\frac{1}{\eta}) \right),O\left(\frac{z}{\varepsilon_0}\right) \Big\}  
	 \end{equation}
	with probability at least $ 1-\eta $. In particular, when $ \beta=0 $, our coreset has no error on the number of outliers and its size is $ |C_{\mathtt{si}}|+ O\left( \frac{z}{\varepsilon_0} \right) $.
\end{theorem}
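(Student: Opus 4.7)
The overall strategy is to reduce the $(\beta,\varepsilon)$-robust-coreset inequality for $C = C_{\mathtt{si}} \cup C_{\mathtt{so}}$ to (i) the ordinary $\varepsilon_1$-coreset guarantee of $C_{\mathtt{si}}$ on the suspected-inlier side and (ii) the uniform-sample guarantee of Theorem~\ref{lemma-eps-sample} applied to $C_{\mathtt{so}}$ on the suspected-outlier side. The CnB property is precisely what lets the inlier/outlier split, which is chosen at $\tilde{\theta}$, transport uniformly to every $\theta\in\mathbb{B}(\tilde{\theta},\ell)$.

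First I would record the elementary Lipschitz bound $|f(\theta,x)-f(\tilde{\theta},x)|\leq \alpha\ell$ for every $x$ and every $\theta\in\mathbb{B}(\tilde{\theta},\ell)$, which in particular yields $|f_k(\theta,X)-f_k(\tilde{\theta},X)|\leq (n-z)\alpha\ell$ for any trimming level $k$ near $z$. The central step is then a ``trimming decomposition''
\[
\bigl|\, f_k(\theta,X)-f(\theta,X_{\mathtt{si}})-f_k(\theta,X_{\mathtt{so}})\,\bigr|\;\leq\;2\alpha\ell\cdot k,
\]
for each $k\in\{(1-\beta)z,\,z,\,(1+\beta)z\}$, and the same bound with $X_{\mathtt{si}},X_{\mathtt{so}}$ replaced by $C_{\mathtt{si}},C_{\mathtt{so}}$. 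I would prove this via a swap argument: the optimal $k$-trim of $X$ at $\theta$ and the best $k$-trim restricted to $X_{\mathtt{so}}$ differ only on matched pairs (a suspected-inlier swapped into the trim, a suspected-outlier swapped out), and Lipschitz makes each such pair contribute at most $2\alpha\ell$ to the objective. The reason $X_{\mathtt{so}}$ has room to absorb an entirely-interior $k$-trim is exactly that $|X_{\mathtt{so}}| = \tilde{z} = (1+1/\varepsilon_0)z \geq (1+\beta)z$.

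With both decompositions in hand, I would invoke the $\varepsilon_1$-coreset property of $C_{\mathtt{si}}$ to compare $f(\theta,C_{\mathtt{si}})$ with $f(\theta,X_{\mathtt{si}})$, and Theorem~\ref{lemma-eps-sample} applied to the $\delta$-sample $C_{\mathtt{so}}$ of $X_{\mathtt{so}}$ (using $\delta\tilde{z}=\beta z$) to sandwich
\[
f_{(1+\beta)z}(\theta,X_{\mathtt{so}})\;\leq\;f_z(\theta,C_{\mathtt{so}})\;\leq\;f_{(1-\beta)z}(\theta,X_{\mathtt{so}}).
\]
Chaining these inequalities through the two decompositions gives the required $(\beta,\varepsilon)$-robust-coreset inequalities, once every additive error has been absorbed into $O(\varepsilon)\cdot f_{(1\pm\beta)z}(\theta,X)$. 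The boundary case $\beta=0$ is handled separately by noting that $C_{\mathtt{so}}=X_{\mathtt{so}}$ makes the sample step vacuous, so no outlier-count error is incurred.

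The main obstacle is the absorption step: showing that $\tau z$, $\alpha\ell z$ and $\varepsilon_1 f(\theta,X_{\mathtt{si}})$ are each $O(\varepsilon\, f_z(\theta,X))$. For $\tau z$ I would use the Markov-type inequality (\ref{notmarkov}), $\tau z\leq \varepsilon_0 f_z(\tilde{\theta},X)$, together with the Lipschitz transport from $\tilde{\theta}$ to $\theta$; for $\alpha\ell z$ the defining inequality $16(n-z)\alpha\ell\cdot\varepsilon_0\leq \varepsilon\inf_{\theta}f_z(\theta,X)$ is immediate; and for the coreset error I would bound $f(\theta,X_{\mathtt{si}})\leq f_z(\theta,X)+O(\alpha\ell z)$, itself a corollary of the trimming decomposition, reducing to the previous two. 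For the size, $|C_{\mathtt{so}}|$ is the smaller of the standard $\delta$-sample size of order $\tfrac{1}{\beta^2\varepsilon^2}(\mathtt{vcdim}+\log\tfrac{1}{\eta})$ (obtained via $\delta=\Theta(\beta\varepsilon_0)$) and the trivial full-inclusion size $\tilde{z}=O(z/\varepsilon_0)$; in the case $\beta=0$ we are forced to use the latter, yielding an exact outlier count as claimed.
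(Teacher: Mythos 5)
Your proposal is correct in substance, but it organizes the argument along a genuinely different route than the paper. The paper fixes $\theta$, intersects the suspected split (taken at $\tilde{\theta}$) with the real inlier/outlier split (taken at $\theta$) to obtain the four regions of Claim~\ref{claim-divide}, bounds $f(\theta,C_{\mathrm{\Rnum{2}}})$ by a case analysis on whether $C_{\mathrm{\Rnum{4}}}=\varnothing$ (Lemma~\ref{lemma-4}), and recombines via Lemma~\ref{lemma-5}; the resulting additive error is of order $z\tau+z\xi(\ell)$, and the $z\tau$ term must then be absorbed through the Markov-type inequality~(\ref{notmarkov}) transported from $\tilde{\theta}$ to $\theta$ by Lemma~\ref{lemma-l}. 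You instead prove one two-sided ``trimming decomposition,'' $f_k(\theta,X)\leq f(\theta,X_{\mathtt{si}})+f_k(\theta,X_{\mathtt{so}})\leq f_k(\theta,X)+2\alpha\ell k$ (and its analogue on $C$), by an exchange argument: one direction is pure feasibility of trimming inside $X_{\mathtt{so}}$ (valid because $\tilde{z}\geq(1+\beta)z$), and in the other direction every swapped pair straddles the threshold, so it costs at most $(\tau+\alpha\ell)-(\tau-\alpha\ell)=2\alpha\ell$ --- the $\tau$'s cancel, your error is $O(\alpha\ell z)$, and the $\tau z$ absorption you list as an obstacle in fact never arises in your chain, so (\ref{notmarkov}) and Lemma~\ref{lemma-l} become optional. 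Chaining with the $\varepsilon_1$-coreset guarantee on $C_{\mathtt{si}}$ and Theorem~\ref{lemma-eps-sample} on $C_{\mathtt{so}}$ with $\delta=\beta z/\tilde{z}$ then reproduces the paper's bounds, and your absorption ($z\leq\varepsilon_0(n-z)$ from $\tilde{z}\leq n$, together with $\varepsilon_0(n-z)\alpha\ell\leq\frac{\varepsilon}{16}\inf_{\theta}f_z(\theta,X)$), the size accounting, and the $\beta=0$ case coincide with the paper's. What your route buys is fewer case distinctions and an error bound independent of $\tau$; what it quietly uses --- exactly as the paper's Claim~\ref{claim-divide} does --- is that the black-box coreset $C_{\mathtt{si}}$ is a weighted subset of $X_{\mathtt{si}}$ (so points of $C_{\mathtt{si}}$, resp.\ $C_{\mathtt{so}}$, have $f(\tilde{\theta},\cdot)$ below, resp.\ above, $\tau$), and on the $C$ side the swaps must be of equal weight (possibly fractional), a weighted-trimming detail the paper also glosses over. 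Your argument likewise inherits the paper's looseness in the lower bound, where the additive error is controlled by $\inf_{\theta}f_z(\theta,X)$ but is charged against $f_{(1+\beta)z}(\theta,X)$; since the paper dismisses this direction with ``similarly,'' it is not a defect specific to your proof, but it is worth stating explicitly if you write the lower bound out.
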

We present the sketch of the proof below and leave the full proof to Section~\ref{sec-appthe2}. 

\begin{proof}\textbf{(sketch)} 
It is easy to obtain the coreset size. So we only focus on proving the quality guarantee below. 

Let $\theta$ be any parameter vector in the ball $ \mathbb{B}(\tilde{\theta}, \ell) $. Similar with the aforementioned classification $X_{\mathtt{si}}\cup X_{\mathtt{so}}$, $\theta$ also yields a classification on $X$. Suppose $\tau_\theta$ is the $z$-th largest value of $\{f(\theta, x)\mid x\in X\}$. Then we use  $X_{\mathtt{ri}}$ to denote the set of $n-z$ ``real'' inliers with respect to $\theta$,  {\em i.e.,} $\{x\in X\mid f(\theta, x)<\tau_\theta\}$; we also use   $X_{\mathtt{ro}}$ to denote the set of $z$ ``real'' outliers with respect to $\theta$,  {\em i.e.,} $\{x\in X\mid f(\theta, x)\geq\tau_\theta\}$. Overall, the input set $X$ is partitioned into $4$ parts: 
\begin{equation}
\text{$ X_{\mathrm{\Rnum{1}}}=X_{\mathtt{si}}\cap X_{\mathtt{ri}}$, $ X_{\mathrm{\Rnum{2}}}=X_{\mathtt{so}}\cap X_{\mathtt{ri}}$, $ X_{\mathrm{\Rnum{3}}}=X_{\mathtt{so}}\cap X_{\mathtt{ro}}$, and $ X_{\mathrm{\Rnum{4}}}=X_{\mathtt{si}}\cap X_{\mathtt{ro}}$. }
\end{equation}
Similarly, $\theta$ also yields a classification on $C$ to be $C_{\mathtt{ri}}$ (the set of ``real'' inliers of $C$) and $C_{\mathtt{ro}}$ (the set of ``real'' outliers of $C$). Therefore we have 
\begin{equation}
\text{$ C_{\mathrm{\Rnum{1}}}=C_{\mathtt{si}}\cap C_{\mathtt{ri}}$, $ C_{\mathrm{\Rnum{2}}}=C_{\mathtt{so}}\cap C_{\mathtt{ri}}$, $ C_{\mathrm{\Rnum{3}}}=C_{\mathtt{so}}\cap C_{\mathtt{ro}}$, and $ C_{\mathrm{\Rnum{4}}}=C_{\mathtt{si}}\cap C_{\mathtt{ro}}$. }
\end{equation}

Our goal is to show that $ f_{z}(\theta,C)$ is a qualified approximation of $ f_{z}(\theta,X)$ (as Defintion~\ref{robust-coreset}).  
Note that $ f_{z}(\theta,C)=f(\theta,C_\mathrm{\Rnum{1}}\cup C_\mathrm{\Rnum{2}})=f(\theta,C_\mathrm{\Rnum{1}})+f(\theta,C_\mathrm{\Rnum{2}}) $. Hence we can bound $ f(\theta,C_\mathrm{\Rnum{1}}) $ and $ f(\theta,C_\mathrm{\Rnum{2}}) $ separately. We consider their upper bounds first, and the lower bounds can be derived by using the similar manner.

The upper bound of $ f(\theta,C_\mathrm{\Rnum{1}}) $ directly comes from the definition of $\varepsilon$-coreset, {\em i.e.,} $ f(\theta,C_\mathrm{\Rnum{1}})\leq f(\theta,C_\mathrm{\Rnum{1}}\cup C_\mathrm{\Rnum{4}})\leq (1+\varepsilon_1) f(\theta,X_\mathrm{\Rnum{1}}\cup X_\mathrm{\Rnum{4}}) $ since $ C_\mathrm{\Rnum{1}}\cup C_\mathrm{\Rnum{4}} $ is an $ \varepsilon_1 $-coreset of $ X_\mathrm{\Rnum{1}}\cup X_\mathrm{\Rnum{4}} $.

It is more complicated to derive the upper bound of $ f(\theta,C_\mathrm{\Rnum{2}}) $. We consider two cases. \textbf{(1)} If $ C_\mathrm{\Rnum{4}}=\emptyset $,  then we know that all the suspected inliers of $C$ are all real inliers (and meanwhile, all the real outliers of $C$ are suspected outliers); consequently, we have  
\begin{equation}
f(\theta,C_\mathrm{\Rnum{2}})\leq f_{z}(\theta,C_\mathrm{\Rnum{2}}\cup C_\mathrm{\Rnum{3}})\leq f_{(1-\beta)z}(\theta,X_\mathrm{\Rnum{2}}\cup X_\mathrm{\Rnum{3}}) 
\end{equation}
from Theorem~\ref{lemma-eps-sample}. \textbf{(2)}  If $ C_\mathrm{\Rnum{4}}\neq\emptyset $, by using the triangle inequality and the $\alpha$-Lipschitz assumption, we have $ f(\theta,C_\mathrm{\Rnum{2}})\leq f(\theta,X_\mathrm{\Rnum{2}})+2z(\tau+\alpha \ell))$. We merge these two cases and overall obtain the following upper bound:
\begin{equation}\label{ineq-3}
   f_{z}(\theta,C)\leq (1+\varepsilon_1)f_{(1-\beta)z}(\theta,X)+4z\tau+4z\alpha\ell.
\end{equation}
Moreover, from (\ref{notmarkov}) and the $\alpha$-Lipschitz assumption, we have  $ \tau z\leq \varepsilon_0 f_{z}(\theta,X)+\varepsilon_0(n-z)\alpha\ell $. Then the above (\ref{ineq-3}) implies
 \begin{equation}
    f_{z}(\theta,C) \leq (1+\varepsilon) f_{(1-\beta)z}(\theta,X).
\end{equation}
Similarly, we can obtain the lower bound
\begin{equation}
     f_{z}(\theta,C) \geq (1-\varepsilon) f_{(1+\beta)z}(\theta,X).
\end{equation}
Therefore $ C $ is a $(\beta,\varepsilon)$-robust coreset of $ X $.
\end{proof}

\subsection{The Fully-Dynamic Implementation}
\label{sec-dynamic}
In this section, we show that our robust coreset of Section~\ref{sec:robust-coreset} can be efficiently implemented in a fully-dynamic environment, even if the number of outliers $z$ is dynamically changed.

The standard $ \varepsilon $-coreset usually has two important properties. 
If $ C_1 $ and $ C_2 $ are respectively the $ \varepsilon $-coresets of two disjoint sets $ X_1 $ and $ X_2 $, their union $ C_1\cup C_2 $ should be an $ \varepsilon$-coreset of $ X_1\cup X_2 $. Also, 
if $ C_1 $ is an $ \varepsilon_1$-coreset of $ C_2 $ and $ C_2 $ is an $ \varepsilon_2$-coreset of $ C_3 $,  $ C_1 $ should be an $ (\varepsilon_1+\varepsilon_2+\varepsilon_1\varepsilon_2) $-coreset of $ C_3 $. Based on these two properties, one can build a coreset for incremental data stream by using the ``merge-and-reduce'' technique~\cite{BentleyS80,Har-PeledM04}. Very recently, Henzinger and Kale~\cite{HenzingerK20} extended it to the more general fully-dynamic setting, where data items can be deleted and updated as well.

Roughly speaking, the merge-and-reduce technique uses a sequence of ``buckets'' to maintain the coreset for the input streaming data, and the buckets are merged by a bottom-up manner. However, it is challenging to directly adapt this strategy to the case with outliers, because we cannot determine the number of outliers in each bucket. 
A cute aspect of our hybrid robust coreset framework is that we can easily resolve this obstacle by using an $O(n)$ size auxiliary table $\mathscr{L}$ together with the merge-and-reduce technique (note that even for the case without outliers, maintaining a fully-dynamic coreset already needs $\Omega(n)$ space~\cite{HenzingerK20}). We briefly introduce our idea below and leave the full details to Section~\ref{sec-appdynamic}. 

\begin{wrapfigure}{r}{7cm}
	\vspace{-0.4cm}
	\centering
	\includegraphics[width=1\linewidth]{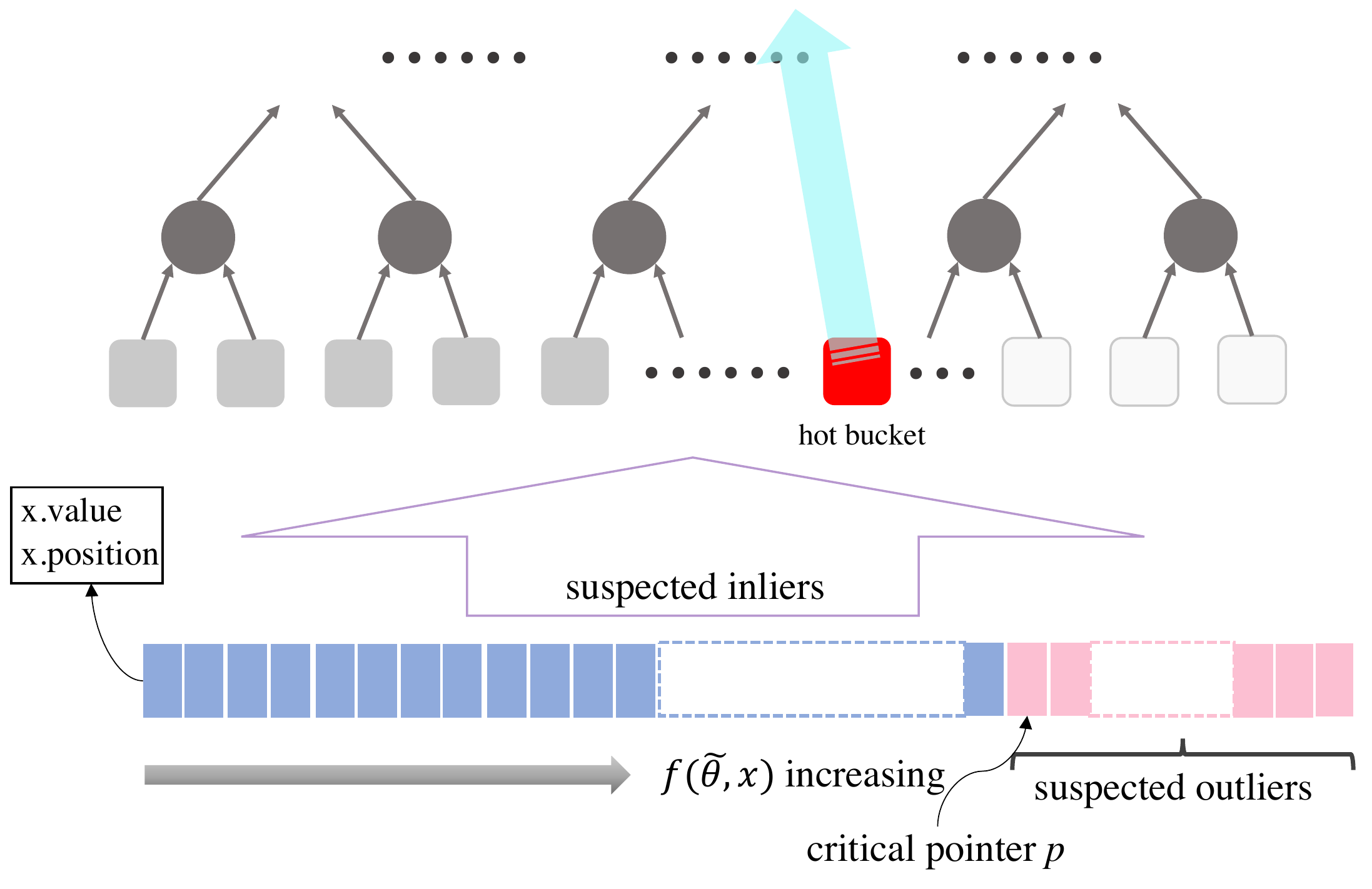}
	\caption{The illustration for our fully-dynamic robust coreset construction.}
	\vspace{-0.3cm}
	\label{fig-coreset-tree-outlier}
\end{wrapfigure}

Recall that we partition the input data $X$ into two parts: the $n-\tilde{z}$ ``suspected inliers'' and the $\tilde{z}$ ``suspected outliers'', where $\tilde{z}=(1+1/\varepsilon_0)z$. We follow the same notations used in Section~\ref{sec:robust-coreset}. For the first part, we just apply the vanilla merge-and-reduce technique to obtain a fully-dynamic coreset $C_{\mathtt{si}}$; for the other part, we can just take a $\delta$-sample or take the whole set (if we require $\beta$ to be $0$), and denote it as $C_{\mathtt{so}}$. Moreover, we maintain a table $\mathscr{L}$ to record the key values $ x.\mathtt{value}= f(\tilde{\theta},x)$ and its position $ x.\mathtt{position} $ in the merge-and-reduce tree, for each $x\in X$; they are sorted by the $x.\mathtt{value}$s in the table. To deal with the dynamic updates ({\em e.g.,} deletion and insertion), we also maintain a critical pointer $ p $ pointing to the data item $ x_{\tilde{z}} $ (recall $x_{\tilde{z}}$ has the $ \tilde{z}$-th largest cost $ f(\tilde{\theta},x) $ among $X$ defined in Section~\ref{sec:robust-coreset}). 

When a new data item $x$ is coming or an existing data item $x$ is going to be deleted, we just need to compare it with $ f(\tilde{\theta},x_c) $ so as to decide to update $C_{\mathtt{si}}$ or $C_{\mathtt{so}}$ accordingly; after the update, we also need to update $ x_{\tilde{z}} $ and the pointer $p$ in $\mathscr{L}$. If the number of outliers $z$ is changed, we just need to update $ x_{\tilde{z}} $ and  $p$ first, and then update $C_{\mathtt{si}}$ and $C_{\mathtt{so}}$ (for example, if $z$ is increased, we just need to delete some items from $C_{\mathtt{so}}$ and insert some items to $C_{\mathtt{si}}$). To realize these updating operations, we also set one bucket as the ``hot bucket'', which serves as a shuttle to execute all the data shifts. See Figure~\ref{fig-coreset-tree-outlier} for the illustration. Let $ M(\varepsilon) $ be the size of the vanilla $\varepsilon$-coreset. In order to achieve an $ \varepsilon $-coreset overall, we need to construct an $ \frac{\varepsilon}{\log n} $-coreset with size $ M(\varepsilon/\log n) $ in every reduce part~\cite{AgarwalHV04}. We use $ M $ to denote $ M(\varepsilon/\log n) $ for short and assume that we can compute a coreset of $ X $ in time $ \mathsf{t}(|X|) $~\cite{DBLP:books/daglib/0035668}, then we have the following result.
\begin{theorem}
\label{the-dynamic}
In our dynamic implementation, the time complexity for insertion and deletion is $O(\mathsf{t}(M)\log n)$. To update $z$ to $z\pm \Delta z$ with $\Delta z\geq 0$, the time complexity is $O(\frac{\Delta z}{\varepsilon}\mathsf{t}(M)\log n)$, where $\varepsilon$ is the error bound for the robust coreset in Definition~\ref{robust-coreset}. 
\end{theorem}

\section{Coreset for Continuous-and-Bounded Learning Problems}\label{standard-coreset}
As mentioned in Section~\ref{sec:robust-coreset}, we need a black-box ordinary coreset (without considering outliers) construction method $\mathcal{A}$ in the hybrid robust coreset framework. In this section, we provide two different $\varepsilon $-coreset construction methods for the CnB learning problems.


\subsection{Importance Sampling Based Coreset Construction}\label{sec:sensitity}
We follow the importance sampling based approach~\cite{DBLP:conf/soda/LangbergS10}. 
 Suppose $X=\{x_1, \cdots, x_n\}$. For each data point $ x_i $, it has a sensitivity $  \sigma_i=\sup_{\theta} \frac{f(\theta,x)}{f(\theta,X)}$ that measures its importance to the whole input data $X$. 
Computing the sensitivity is often challenging but an upper bound of the sensitivity actually is  already sufficient for the coreset construction. Assume $ s_i $ is an upper bound of $ \sigma_i $ and let $ S=\sum_{i=1}^n s_i $. The coreset construction is as follows. We sample a subset $ C $ from $ X $, where each element of $ C $ is sampled {\em i.i.d.} with probability $ p_i=s_i/S $; we assign a weight $ w_i = \frac{S}{s_i|C|} $ to each sampled data item $x_i$ of $C$. Finally, we return $C$ as the coreset.

\begin{theorem}[\cite{BravermanFL16}]
\label{the-vccoreset}
Let $ \mathtt{vcdim} $ be the VC dimension (or shattering dimension) of the range space induced from $ f(\theta,x) $. 
	If the size of $ C $ is $ \Theta\left( \frac{S}{\varepsilon^2}\left( \mathtt{vcdim}\cdot\log S+\log\frac{1}{\eta}  \right)  \right) $, then $ C $ is an $ \varepsilon $-coreset with probability at least $ 1-\eta $. 
\end{theorem}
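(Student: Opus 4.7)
\medskip

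\noindent\textbf{Proof proposal.} The plan is to follow the standard Feldman--Langberg / Braverman--Feldman--Lang reduction: rewrite the coreset inequality as a uniform-convergence statement for a family of bounded functions, then invoke a VC-type $\delta$-sample bound. Concretely, for each $x_i\in X$ with $s_i>0$, define the normalized function
\[
g_i(\theta) \;:=\; \frac{f(\theta,x_i)}{s_i\, f(\theta,X)},
\]
and $g_i\equiv 0$ when $s_i=0$. Since $s_i\geq \sigma_i=\sup_\theta f(\theta,x_i)/f(\theta,X)$, we have $g_i(\theta)\in[0,1]$ for every $\theta\in\mathcal{P}$. Using $f(\theta,x_i)=s_i\, f(\theta,X)\, g_i(\theta)$ and $w_i = S/(s_i|C|)$ for the sampled points, the quantity controlling the coreset error decomposes as
\[
\frac{f(\theta,C)-f(\theta,X)}{f(\theta,X)} \;=\; \frac{S}{|C|}\sum_{x_i\in C}g_i(\theta) \;-\; \sum_{i=1}^n s_i\, g_i(\theta).
\]
Since we sample $i$ with probability $p_i=s_i/S$, the second sum is $S\cdot \mathbb{E}_{i\sim p}[g_i(\theta)]$ and the first is $S$ times the empirical mean of $g_\cdot(\theta)$ over the i.i.d.\ sample. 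Thus the $\varepsilon$-coreset condition reduces to showing
\[
\sup_{\theta\in\mathcal{P}}\Big|\widehat{\mathbb{E}}_C[g_\cdot(\theta)] - \mathbb{E}_p[g_\cdot(\theta)]\Big| \;\leq\; \frac{\varepsilon}{S},
\]
i.e., a uniform deviation of empirical means over the function class $\mathcal{G}=\{g_\cdot(\theta):\theta\in\mathcal{P}\}$ whose functions map $\{1,\dots,n\}$ to $[0,1]$.

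\medskip

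Next I would invoke the standard VC-type uniform sampling bound for bounded functions: a sample of size $\Theta\big(\tfrac{1}{\delta^2}(d+\log(1/\eta))\big)$ suffices to guarantee a $\delta$-approximation of all expectations in a function class of pseudo-dimension $d$. Setting $\delta = \varepsilon/S$ gives a sample of size $\Theta\big(\tfrac{S^2}{\varepsilon^2}(d+\log(1/\eta))\big)$. To match the stated bound one has to replace the $S^2$ by $S$, which is achieved by a sharper multiplicative/relative Chernoff-plus-union argument exploiting that $g_i\in[0,1]$ and that $\mathbb{E}_p[g_\cdot(\theta)] = f(\theta,X)/(S\cdot f(\theta,X))\cdot(\text{something})$ is controlled --- more precisely, one uses the fact that we want a relative approximation of $f(\theta,X)$, not an additive one, so that a Bernstein/relative-Hoeffding bound replaces one factor of $1/\delta^2$ by $1/\delta$ times the mean. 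This is precisely the argument captured by the Feldman--Langberg framework, and applying it here yields the advertised sample size $\Theta\!\big(\tfrac{S}{\varepsilon^2}(\mathtt{vcdim}\cdot \log S + \log(1/\eta))\big)$.

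\medskip

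The last ingredient is to bound the pseudo-dimension of $\mathcal{G}$ in terms of $\mathtt{vcdim}$. The subgraph/sublevel ranges of $g_i$ are of the form $\{i:f(\theta,x_i)\leq r\cdot s_i f(\theta,X)\}$, which is obtained from the original $f$-induced range space by a scaling and a single quantifier over $r$; an elementary Sauer--Shelah argument then shows the pseudo-dimension of $\mathcal{G}$ is $O(\mathtt{vcdim})$. The extra $\log S$ factor in the final bound arises either from a standard dyadic discretization of the threshold $r$ over its effective range $[0,1]$, coupled with a union bound, or equivalently from the $\log(\text{range})$ term in Haussler's packing-number estimate for real-valued function classes. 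Combining this pseudo-dimension bound with the relative Chernoff union bound above gives the theorem.

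\medskip

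I expect the main obstacle to be the \emph{relative} (rather than additive) concentration step: naively, bounding $\sup_\theta |\widehat{\mathbb{E}}_C[g_\cdot(\theta)] - \mathbb{E}_p[g_\cdot(\theta)]|$ uniformly by $\varepsilon/S$ costs a factor of $S^2$ in the sample, whereas the theorem only pays $S$. Extracting the missing factor requires recognizing that what one really needs is a \emph{multiplicative} $(1\pm\varepsilon)$ guarantee on $f(\theta,X)$, and leveraging the fact that the variance of each $g_i$-draw under $p$ is controlled by its mean (since $g_i\in[0,1]$), so that a Bernstein-style inequality --- combined with the VC covering argument above --- saves the extra $S$. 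The rest of the proof is bookkeeping: verifying that the $\varepsilon$-coreset inequality holds for all $\theta\in\mathcal{P}$ simultaneously with probability at least $1-\eta$ after the union bound.
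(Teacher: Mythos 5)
This theorem is not proved in the paper at all: it is imported verbatim from the cited reference [BravermanFL16], so there is no in-paper argument to compare against. Your outline is essentially the standard Feldman--Langberg/Braverman--Feldman--Lang sensitivity argument, and at the level of a sketch it is sound: the decomposition via $g_i(\theta)=f(\theta,x_i)/(s_i f(\theta,X))\in[0,1]$, the reduction to a uniform deviation bound at scale $\varepsilon/S$, the relative (Bernstein-type) concentration to replace $S^2$ by $S$, and the $\mathtt{vcdim}\cdot\log S$ term from the dimension of the induced function class are exactly the ingredients of the cited proof. Two points where your sketch is vaguer than it needs to be: first, the quantity $\mathbb{E}_p[g_\cdot(\theta)]=\sum_i p_i g_i(\theta)$ is not merely ``controlled'' --- it equals $1/S$ exactly for every $\theta$, since $\sum_i s_i g_i(\theta)=1$; this identity is what makes the relative-error Chernoff/Bernstein step deliver the factor $S$ rather than $S^2$, and your hedged phrasing (``$\ldots\cdot(\text{something})$ is controlled'') obscures the cleanest part of the argument. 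Second, bounding the dimension of the scaled class $\{i: f(\theta,x_i)\le r\, s_i f(\theta,X)\}$ by $O(\mathtt{vcdim})$ of the $f$-induced range space is not a one-line Sauer--Shelah consequence; it relies on the standard lemma (from Feldman--Langberg and reused in [BravermanFL16]) that per-point positive rescaling of the cost functions does not increase the shattering dimension of the induced ranges, so you should invoke that lemma explicitly rather than treat it as elementary.
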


Therefore the only remaining issue is how  to compute the upper bounds $ s_i $s.  Recall that we assume our cost function is $\alpha$-Lipschitz (or $\alpha$-smooth, $ \alpha $-Lipschitz continuous Hessian) in Definition~\ref{def:CB-learning}. That is, we can bound the difference between $f(\theta, x_i) $ and $f(\tilde{\theta}, x_i)$, and such a bound can help us to compute $s_i$. 
In Section~\ref{sec-appqfp}, we show that computing $s_i$  is equivalent to solving a \textbf{quadratic fractional programming}. This programming can be reduced to a semi-definite programming (SDP)~\cite{BeckT09}, which can be solved in polynomial time up to any desired accuracy~\cite{aasdp}. We denote the solving time of SDP by $ \mathsf{T}(d) $, where $ d $ is the dimension of the data point. So the total running time of the coreset construction is $ O(n\cdot \mathsf{T}(d)) $.

A drawback of Theorem~\ref{the-vccoreset} is that the coreset size depends on $ \mathtt{vcdim}$ induced by $ f(\theta,x) $. For some objectives, the value $ \mathtt{vcdim}$ can be very large or difficult to obtain. Here, we prove that for a continuous-and-bounded cost function, the coreset size can be independent of $ \mathtt{vcdim}$; instead, it depends on the doubling dimension $\mathtt{ddim}$~\cite{DBLP:journals/talg/ChanGMZ16} of the parameter space $\mathcal{P}$. Doubling dimension is a widely used measure to describe the growth rate of the data, which can also be viewed as a generalization of the Euclidean dimension. For example, the doubling dimension of a $d$-dimensional Euclidean space is $\Theta(d)$. 
The proof of Theorem~\ref{the-coreset-1} is placed in Section~\ref{sec:proof-thm-4}. 

\begin{theorem}
    \label{the-coreset-1}
    Given a CnB learning instance $X$ with the objective function $ f(\theta,X) $  as described in Definition~\ref{def:CB-learning},  let $\mathtt{ddim}$ be the doubling dimension of the parameter space. Then, if we run the importance sampling based coreset construction method with the sample size $|C|=\Theta\left(\frac{S^2}{\varepsilon^2}\left(\mathtt{ddim}\cdot\log\frac{1}{\varepsilon}+\log\frac{1}{\eta}\right) \right)$, $ C $ will be an $ \varepsilon $-coreset with probability $ 1-\eta $. The hidden constant of $|C|$ depends on the Lipschitz constant $\alpha$ and $\inf_{\theta\in\mathbb{B}(\tilde{\theta}, \ell)} \frac{1}{n}f(\theta,X)$\footnote{In practice, we often add a positive penalty item to the objective function for regularization, so we can assume that  $\inf_{\theta\in\mathbb{B}(\tilde{\theta}, \ell)} \frac{1}{n}f(\theta,X)$ is not too small.}. 
\end{theorem}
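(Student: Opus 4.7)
The plan is to bypass VC-dimension arguments entirely and instead use an $\varepsilon$-net over the parameter ball $\mathbb{B}(\tilde\theta,\ell)$, whose size is controlled by the doubling dimension $\mathtt{ddim}$. The CnB (Lipschitz) structure is what makes this net-based argument work, since it lets us transfer a uniform error bound from net points to arbitrary $\theta$.

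First, I would show a pointwise concentration bound for a fixed $\theta$. By the definition of sensitivity and the chosen weights $w_i=S/(s_i|C|)$, each single sample contributes at most $S/|C|$ to the normalized estimator $f(\theta,C)/f(\theta,X)$, because $f(\theta,x_i)/(s_i f(\theta,X))\le 1$. So the $|C|$ i.i.d.\ samples are bounded random variables with mean one, and a standard Hoeffding-type inequality gives
\[
\Pr\!\left[\bigl|f(\theta,C)-f(\theta,X)\bigr|>\tfrac{\varepsilon}{2}\, f(\theta,X)\right]\le 2\exp\!\left(-\Omega(\varepsilon^2 |C|/S^2)\right).
\]
This already accounts for the $S^2/\varepsilon^2$ factor that appears in the claimed sample size.

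Next I would build a net $\mathcal{N}\subset\mathbb{B}(\tilde\theta,\ell)$ of granularity $\varepsilon_{\mathrm{net}}$, to be chosen later. By definition of doubling dimension, one may take $|\mathcal{N}|\le (c\ell/\varepsilon_{\mathrm{net}})^{\mathtt{ddim}}$ for a universal constant $c$. Applying the concentration bound at each $\theta\in\mathcal{N}$ and a union bound, the failure probability is at most $\eta$ as soon as
\[
|C|=\Omega\!\left(\tfrac{S^2}{\varepsilon^2}\bigl(\mathtt{ddim}\cdot\log(\ell/\varepsilon_{\mathrm{net}})+\log(1/\eta)\bigr)\right).
\]

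The transfer step is where the CnB hypothesis is used. For any $\theta\in\mathbb{B}(\tilde\theta,\ell)$, let $\theta_{\mathcal N}$ be a closest net point, so $\|\theta-\theta_{\mathcal N}\|\le\varepsilon_{\mathrm{net}}$. By $\alpha$-Lipschitzness of $f(\cdot,x)$,
\[
\bigl|f(\theta,X)-f(\theta_{\mathcal N},X)\bigr|\le \alpha\varepsilon_{\mathrm{net}}\, n,\qquad \bigl|f(\theta,C)-f(\theta_{\mathcal N},C)\bigr|\le \alpha\varepsilon_{\mathrm{net}}\, \llbracket C\rrbracket.
\]
Since the expected total weight of $C$ equals $n$, and weights of individual items are at most $S/|C|$, a separate Hoeffding bound shows $\llbracket C\rrbracket = (1\pm O(1))n$ with the same confidence. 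Choosing $\varepsilon_{\mathrm{net}}=\Theta\!\left(\frac{\varepsilon}{\alpha}\cdot\inf_{\theta\in\mathbb{B}(\tilde\theta,\ell)}\tfrac{1}{n}f(\theta,X)\right)$ forces both Lipschitz terms to be at most $\tfrac{\varepsilon}{4}f(\theta,X)$, and then a triangle inequality combines the net-point concentration with the Lipschitz slack to conclude $|f(\theta,C)-f(\theta,X)|\le\varepsilon f(\theta,X)$ uniformly. Plugging this choice of $\varepsilon_{\mathrm{net}}$ into the bound above absorbs $\log(\ell/\varepsilon_{\mathrm{net}})$ into $\log(1/\varepsilon)$ plus constants depending on $\alpha$, $\ell$ and the infimum, matching the stated sample size.

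The main obstacle I anticipate is the Lipschitz-transfer step, specifically the fact that $\llbracket C\rrbracket$ is itself a random quantity: the bound on $|f(\theta,C)-f(\theta_{\mathcal N},C)|$ scales with the (random) total weight of the coreset, not with $n$. Handling this cleanly requires an auxiliary concentration estimate on $\llbracket C\rrbracket$ and threading the resulting high-probability event through the union bound without inflating the constants. A secondary subtlety is hiding the dependence on $\inf_\theta\frac{1}{n}f(\theta,X)$ inside the constant, since this quantity appears in the denominator of $\varepsilon_{\mathrm{net}}$ and is what converts the absolute Lipschitz error into a multiplicative one; this is precisely the dependence the footnote alludes to.
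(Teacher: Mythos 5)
Your proposal follows essentially the same route as the paper's proof: a fixed-$\theta$ Hoeffding bound giving the $S^2/\varepsilon^2$ factor, an $\varepsilon$-net of $\mathbb{B}(\tilde{\theta},\ell)$ of size $(1/\varepsilon_{\mathrm{net}})^{\mathtt{ddim}}$ with a union bound, a Lipschitz triangle-inequality transfer from net points to arbitrary $\theta$, and a rescaling of the net granularity/accuracy parameter that hides $\alpha$, $\ell$ and $\inf_\theta \frac{1}{n}f(\theta,X)$ in the constant. Your explicit concentration argument for the random total weight $\llbracket C\rrbracket$ in the transfer step is a point the paper's proof passes over silently (it bounds the term $|f(\theta',C)-f(\theta,C)|$ by $\varepsilon n\xi(\ell)$ as if the total weight were exactly $n$), so including it only makes the argument more careful, not different.
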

    The major advantage of Theorem~\ref{the-coreset-1} over Theorem~\ref{the-vccoreset} is that we do not need to know the VC dimension induced by the cost function. On the other hand, the doubling dimension is often much easier to know (or estimate), {\em e.g.,} the doubling dimension of a given instance in $\mathbb{R}^d$ is just $\Theta(d)$, even the cost function can be very complicated. Another motivation of  Theorem~\ref{the-coreset-1} is from sparse optimization. Let the parameter space be $\mathbb{R}^D$, and we restrict $\theta$ to be $k$-sparse ({\em i.e.}, at most $k$ non-zero entries with $k\ll D$). It is easy to see the domain of $\theta$ is a union of ${D\choose k}$ $k$-dimensional subspaces, and thus its doubling dimension is $O(k\log D)$ which is much smaller than $D$ (each ball of radius $r$ in the domain can be covered by ${D\choose k}\cdot 2^{O(k)}=2^{O(k\log D)}$ balls of radius $r/2$).

    The reader is also referred to \cite{HuangJLW18} for a more detailed discussion on the relation between VC (shattering) dimension and doubling dimension.

\subsection{Spatial Partition Based Coreset Construction}\label{GSP}
The reader may realize that the coreset size presented in Theorem~\ref{the-coreset-1} (and also Theorem~\ref{the-vccoreset}) is \textbf{data-dependent}. That is, the coreset size depends on the value $S$, which can be different for different input instances. To achieve a \textbf{data-independent} coreset size, we introduce the following method based on spatial partition, which is partly inspired by the previous $k$-median/means clustering coreset construction idea of~\cite{Chen09,DBLP:conf/icml/DingW20,DBLP:conf/icml/HuangHLFD21}.   
We generalize their method to the continuous-and-bounded learning problems and call it as {\em Generalized Spatial Partition (GSP)} method. 


\textbf{GSP coreset construction.} We set $ \varrho=\min_{x\in X} f(\tilde{\theta},x) $ and $ T = \frac{1}{|X|}f(\tilde{\theta},X) $. Then,
we partition all the data points to different layers according to their cost with respect to $ \tilde{\theta} $. Specifically, we assign a point $ x $ to the   $0$-th layer if $ f(\tilde{\theta},x)-\varrho< T $; otherwise, we assign it to the $ \lfloor\log ( \frac{f(\tilde{\theta},x)-\varrho}{T}) \rfloor$-th layer. Let $L$ be the number of layers, and it is easy to see $ L $ is at most $\log n+1$. For any $0\leq j\leq L$, we denote the set of points falling in the $j$-th layer as $X_j$. From each $X_j$, we take a small sample $C_j$ uniformly at random, where each point of $C_j$ is assigned the weight $ |X_j|/|C_j| $. Finally, the union set $\bigcup^L_{j=0} C_j$ form our final coreset.

\begin{theorem}\label{thm:standard-coreset}
 Given a CnB learning instance $X$ with the objective function $ f(\theta,X) $  as described in Definition~\ref{def:CB-learning},  let $\mathtt{ddim}$ be the doubling dimension of the parameter space.  The above coreset construction method GSP can achieve an $ \varepsilon $-coreset of size $ \Theta\left(\frac{\log n}{\varepsilon^2}\left(\mathtt{ddim}\cdot\log\frac{1}{\varepsilon}+\log\frac{1}{\eta}\right)\right) $ in linear time. The hidden constant of $|C|$ depends on the Lipschitz constant $\alpha$ and $\inf_{\theta\in\mathbb{B}(\tilde{\theta}, \ell)} \frac{1}{n}f(\theta,X)$.  
 \end{theorem}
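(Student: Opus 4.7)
The plan is to prove the coreset guarantee by combining (a) a Hoeffding concentration bound for a fixed $\theta$ inside each layer with (b) a doubling-dimension-based $\varepsilon'$-net argument over the parameter ball $\mathbb{B}(\tilde{\theta},\ell)$. I would first record three structural facts about the partition. Since $f(\tilde{\theta},X)=nT$ and $\varrho\geq 0$, the largest nontrivial layer index is $O(\log n)$, so $L=O(\log n)$. For any $j\geq 1$ and $x\in X_j$ we have $f(\tilde{\theta},x)-\varrho\in[2^j T, 2^{j+1} T)$, so summing gives $|X_j|\leq n/2^j$. Finally, for any $\theta\in\mathbb{B}(\tilde{\theta},\ell)$ the $\alpha$-Lipschitz assumption yields $|f(\theta,x)-f(\tilde{\theta},x)|\leq \alpha\ell$, so the range of $f(\theta,\cdot)$ on $X_j$ has width at most $\Delta_j\leq 2^j T+2\alpha\ell$. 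The critical consequence is that $|X_j|\Delta_j\leq nT+2n\alpha\ell/2^j$ is essentially independent of $j$; this is precisely why uniform sampling within a layer can work with a per-layer budget that does not depend on $j$.

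Next, fix $\theta$ and a layer $j$. A direct application of Hoeffding's inequality to the weighted estimator $f(\theta,C_j)=\frac{|X_j|}{|C_j|}\sum_{c\in C_j}f(\theta,c)$ gives $|f(\theta,C_j)-f(\theta,X_j)|\leq |X_j|\Delta_j\sqrt{\log(2/\delta)/(2|C_j|)}$ with probability $\geq 1-\delta$. Using the lower bound $f(\theta,X)\geq n\cdot\inf_{\theta'\in\mathbb{B}(\tilde{\theta},\ell)}\tfrac{1}{n}f(\theta',X)$ and demanding per-layer error $\leq(\varepsilon/L)f(\theta,X)$, each layer suffices with $|C_j|=O(\log(1/\delta)/\varepsilon^2)$, where the hidden constant absorbs $\alpha$, $T$, and the cost infimum as stated in the theorem. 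To extend from a single $\theta$ to every $\theta\in\mathbb{B}(\tilde{\theta},\ell)$, I would build an $\varepsilon'$-net $N$ in the ball with $\varepsilon'=\Theta(\varepsilon\cdot\inf_{\theta'}\tfrac{1}{n}f(\theta',X)/\alpha)$; by the doubling-dimension property $|N|\leq(\ell/\varepsilon')^{O(\mathtt{ddim})}=(1/\varepsilon)^{O(\mathtt{ddim})}$. Union-bounding the Hoeffding guarantee over $N$ and the $L$ layers with $\delta=\eta/(L|N|)$ drives the per-layer budget to $O((\mathtt{ddim}\log(1/\varepsilon)+\log(1/\eta))/\varepsilon^2)$, and summing over $L=O(\log n)$ layers produces the claimed total size. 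For an arbitrary $\theta$, taking its nearest net point $\theta^*$ and invoking Lipschitz gives $|f(\theta,X)-f(\theta^*,X)|\leq n\alpha\varepsilon'$ and $|f(\theta,C)-f(\theta^*,C)|\leq\llbracket C\rrbracket\alpha\varepsilon'=n\alpha\varepsilon'$ (the per-layer weights $|X_j|/|C_j|$ sum to $n$), and the choice of $\varepsilon'$ makes both deviations $O(\varepsilon f(\theta,X))$. A single pass over $X$ suffices to compute $\varrho$, $T$ and the layer indices, yielding the linear running time.

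The main obstacle is the precise budget balancing. The cost function may vary over $X$ by many orders of magnitude, so naive uniform sampling across the whole dataset would have catastrophically high variance; the layering is exactly the device that makes $|X_j|\Delta_j$ essentially constant in $j$, which is what allows a single layer-wise sample size to work simultaneously for every layer. Ensuring that the Lipschitz correction $\alpha\ell$ does not spoil this balance, and that the lower bound $\inf_{\theta'\in\mathbb{B}(\tilde{\theta},\ell)}\tfrac{1}{n}f(\theta',X)$ is strong enough to make the $\varepsilon f(\theta,X)$ error genuinely multiplicative rather than additive, is exactly where the hidden constants depending on $\alpha$ and the cost infimum enter.
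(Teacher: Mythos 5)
Your overall architecture (layering, per-layer Hoeffding/Haussler bound, $\varepsilon'$-net of $\mathbb{B}(\tilde{\theta},\ell)$ of size $(1/\varepsilon)^{O(\mathtt{ddim})}$, union bound, and Lipschitz transfer to off-net parameters) is the same as the paper's, and the net/off-net part and the linear-time claim are fine. But the budget-balancing step — which you yourself flag as the crux — does not work as written, and it is exactly where your proof loses the claimed size. You bound the per-layer quantity by $|X_j|\Delta_j\leq nT+2n\alpha\ell$, which is of the same order as $f(\theta,X)$ itself for \emph{every} layer. With $|C_j|=O(\log(1/\delta)/\varepsilon^2)$, Hoeffding then gives per-layer error $\Theta\!\left(\varepsilon\,(nT+2n\alpha\ell)\right)=\Theta(\varepsilon f(\theta,X))$, not $(\varepsilon/L)f(\theta,X)$; the gap between these is a factor of $L=\Theta(\log n)$, which cannot be absorbed into a constant depending only on $\alpha$ and $\inf_\theta\frac{1}{n}f(\theta,X)$. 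To actually meet your target $(\varepsilon/L)f(\theta,X)$ per layer you would need $|C_j|=\Theta(L^2\log(1/\delta)/\varepsilon^2)$, giving total size $\Theta\!\left(\frac{\log^3 n}{\varepsilon^2}\left(\mathtt{ddim}\log\frac{1}{\varepsilon}+\log\frac{1}{\eta}\right)\right)$ — a $\log^2 n$ factor worse than the theorem; keeping your stated $|C_j|$ instead inflates the error to $\Theta(\varepsilon\log n\cdot f(\theta,X))$.

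The repair is to make the per-layer error target \emph{local} rather than global: require each $C_j$ to approximate $X_j$ with additive error $\varepsilon|X_j|\bigl(2^{j-1}T+2\alpha\ell\bigr)$, i.e., relative error $\varepsilon$ with respect to the layer's own range width, which Haussler's lemma delivers with $|C_j|=O\!\left(\frac{1}{\varepsilon^2}\log\frac{1}{\delta}\right)$ because the width-to-target ratio is $O(1)$. The point you are missing is the layer-wise inequality $|X_j|\,2^{j-1}T\leq f(\tilde{\theta},X_j)$ (your $|X_j|\leq n/2^j$ is the lossy global version of this): with it, summing the per-layer errors telescopes to $O\!\left(\varepsilon\bigl(f(\tilde{\theta},X)+n\alpha\ell\bigr)\right)=O\!\left(\varepsilon\bigl(f(\theta,X)+n\alpha\ell\bigr)\right)$ with no extra $L$ factor, and the additive $n\alpha\ell$ term is converted to a relative one by rescaling $\varepsilon$ by a constant depending on $\alpha\ell$ and $\inf_\theta\frac{1}{n}f(\theta,X)$, exactly as in the theorem's statement. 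With that substitution the rest of your argument goes through and recovers the claimed bound $\Theta\!\left(\frac{\log n}{\varepsilon^2}\left(\mathtt{ddim}\log\frac{1}{\varepsilon}+\log\frac{1}{\eta}\right)\right)$.
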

 To prove Theorem~\ref{thm:standard-coreset}, the key is  show that each $C_j$ can well represent the layer $X_j$ with respect to any $\theta$ in the bounded region $\mathbb{B}(\tilde{\theta}, \ell)$. First, we use the continuity property to bound the difference between $f(\theta, x)$ and $f(\tilde{\theta}, x)$ for each $x\in X_j$ with a fixed $\theta$; then, together with the doubling dimension, we can generalize this bound to any $\theta$ in the bounded region. 
 The full proof is shown in Section~\ref{sec-appthestancoreset}.


\section{Experiments}
\label{sec-app}
In this section, we illustrate the applications of our proposed robust coreset method in machine learning. 

\textbf{Logistic regression (with outliers).} 
Given $ x,\theta \in\mathbb{R}^d $ and $ y\in\{\pm 1 \} $, the loss function of Logistic regression is 
\begin{equation}
	f(\theta,x) = \ln (1+\exp(-y\cdot\braket{\theta,x} )).
\end{equation}


\textbf{$k$-median/means clustering (with outliers).} 
The goal is to find $k$ cluster centers $ \mathtt{Cen}=\{c_1,c_2,\cdots,c_k \} $ $\subset\mathbb{R}^d$; the cost function of $ k $-median ({\em resp.} $k$-means) clustering for each $x\in \mathbb{R}^d$ is $ f(\mathtt{Cen},x)=\min_{c_i\in \mathtt{Cen}}d(c_i,x)$ $ (\text{\em resp.}\ d(c_i,x)^2) $, where $d(c_i,x)$ denotes the Euclidean distance between $c_i$ and $x$.




All the algorithms were implemented in Python  on a PC with 2.3GHz Intel Core i7 CPU and 32GB of RAM. All the results were averaged across $ 5 $ trials.

\textbf{The algorithms.} We use the following three representative coreset construction methods as the black box in our hybrid framework for outliers. (1) \textsc{Uniform}: the simple uniform sampling method; (2) \textsc{GSP}: the generalized spatial partition method proposed in section \ref{GSP}; (3) \textsc{QR}: a QR-decomposition based importance sampling method proposed by~\cite{MunteanuSSW18} for logistic regression. For each coreset method name, we add a suffix ``$+$'' to denote the   corresponding robust coreset enhanced by our hybrid framework proposed in section~\ref{robust}.

For many optimization with outliers problems, a commonly used strategy is alternating minimization ({\em e.g.,} \cite{Chawla2013kmeansAU}). In each iteration, it detects the $ z $ outliers with the largest losses and run an existing algorithm (for ordinary logistic regression or $k$-means clustering) on the remaining $ n-z $ points; then updates the $ z $ outliers based on the obtained new solution. The algorithm repeats this strategy until the solution is stable. For logistic regression with outliers, we run the codes from the scikit-learn package\footnote{\url{https://scikit-learn.org/stable/}} together with the alternating minimization. For $ k $-means with outliers, we use the local search method~\cite{GuptaLSM2017} to seed initial centers and then run the $ k $-means{-}{-} algorithm~\cite{Chawla2013kmeansAU}. We apply these algorithms on our obtained coresets. 
To obtain the initial solution $ \tilde{\theta} $, we just simply run the algorithm  on a small sample (less than 1\%) from the input data.
\par

\textbf{Datasets.} 
We consider the following two real datasets in our experiments. The dataset \textbf{Covetype}~\cite{BLACKARD1999131} consists of $ 581012 $ instances  with $54$ cartographic features for predicting forest cover type. There are $ 7 $ cover types and we set the dominant one (49\%) to be the positive samples and the others to be negative samples. We randomly take $ 10000 $ points as the test set and the remaining data points form the training set. The dataset \textbf{3Dspatial}~\cite{Kaul0J13} comprises $434874$ instances with $ 4 $ features for the road information. To generate outliers for the unsupervised learning task $k$-means clustering, we randomly generate $ 10000 $ points in the space as the outliers, and add the gaussian noisy $ \mathcal{N}(0,200) $ to each dimension for these outliers. For the supervised learning task logistic regression, we add Gaussian noise to a set of randomly selected $10000$ points (as the outliers) from the data and also randomly shuffle their labels.
%


\begin{table}[ht]
\scriptsize
\begin{minipage}{0.45\linewidth}
\centering
\caption{\small Logistic regression  on Covetype. $|C|$ denotes the coreset size.}\label{lr-time}
	\begin{tabular}{|c|c|c|c|}
		\hline
		Method     & $|C|$  & Loss ratio & Speed-up    \\ \hline
		\textsc{GSP+}        & $4\times 10^3$  &   1.046        & $\times 26.9$  \\ \hline
		\textsc{GSP+}        & $8\times 10^3$  &   1.031     & $\times 19.19 $   \\ \hline
		\textsc{Uniform+ }   & $4\times 10^3$  &   1.134     & $\times 45.8$   \\ \hline
		\textsc{Uniform+}    & $8\times 10^3$  &   1.050     & $\times 29.1 $    \\ \hline
		\textsc{QR+} & $4\times 10^3$  &   1.025     & $\times 23.4$    \\ \hline
		\textsc{QR+} & $8\times 10^3$  &   1.012     & $\times 17.9 $   \\ \hline
	\end{tabular}
\end{minipage}
\hfill
\begin{minipage}{0.45\linewidth}
\centering
\caption{\small $ k $-means clustering on 3Dspatial with $ k=10 $. $|C|$ denotes the coreset size. }\label{clustering-time}
	\begin{tabular}{|c|c|c|c|}
		\hline
		Method    & $|C|$  & Loss ratio & Speed-up    \\ \hline
		\textsc{GSP+  }      & $5\times 10^3$  &    1.016      & $\times 41.1$ \\ \hline
	\textsc{	GSP+ }       & $ 10^4$ &    1.008        & $\times 15.4$  \\ \hline
		\textsc{Uniform+ }   & $5\times 10^3$  &     1.029      & $\times 78.9$  \\ \hline
		\textsc{Uniform+ }   & $ 10^4$  &     1.011        & $\times 46.9$  \\ \hline
	\end{tabular}
\end{minipage}
\end{table}

\begin{figure}[ht]
	\centering
	\vspace{-0.5cm}
	\subfigure[{\small Loss}]{\includegraphics[width=0.49\linewidth]{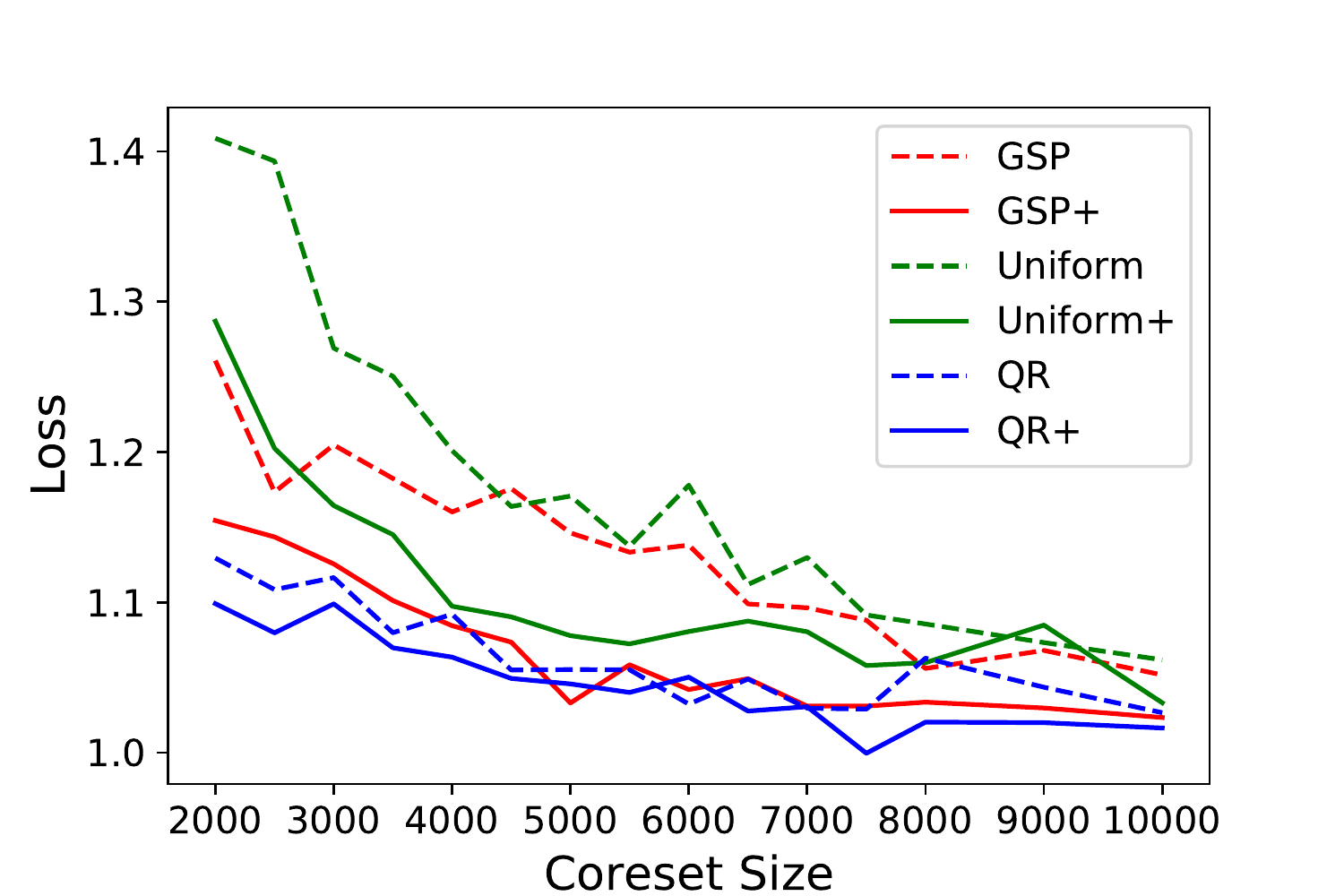}\label{fig:lr-a}}
	\subfigure[{\small Accuracy} ]{\includegraphics[width=0.49\linewidth]{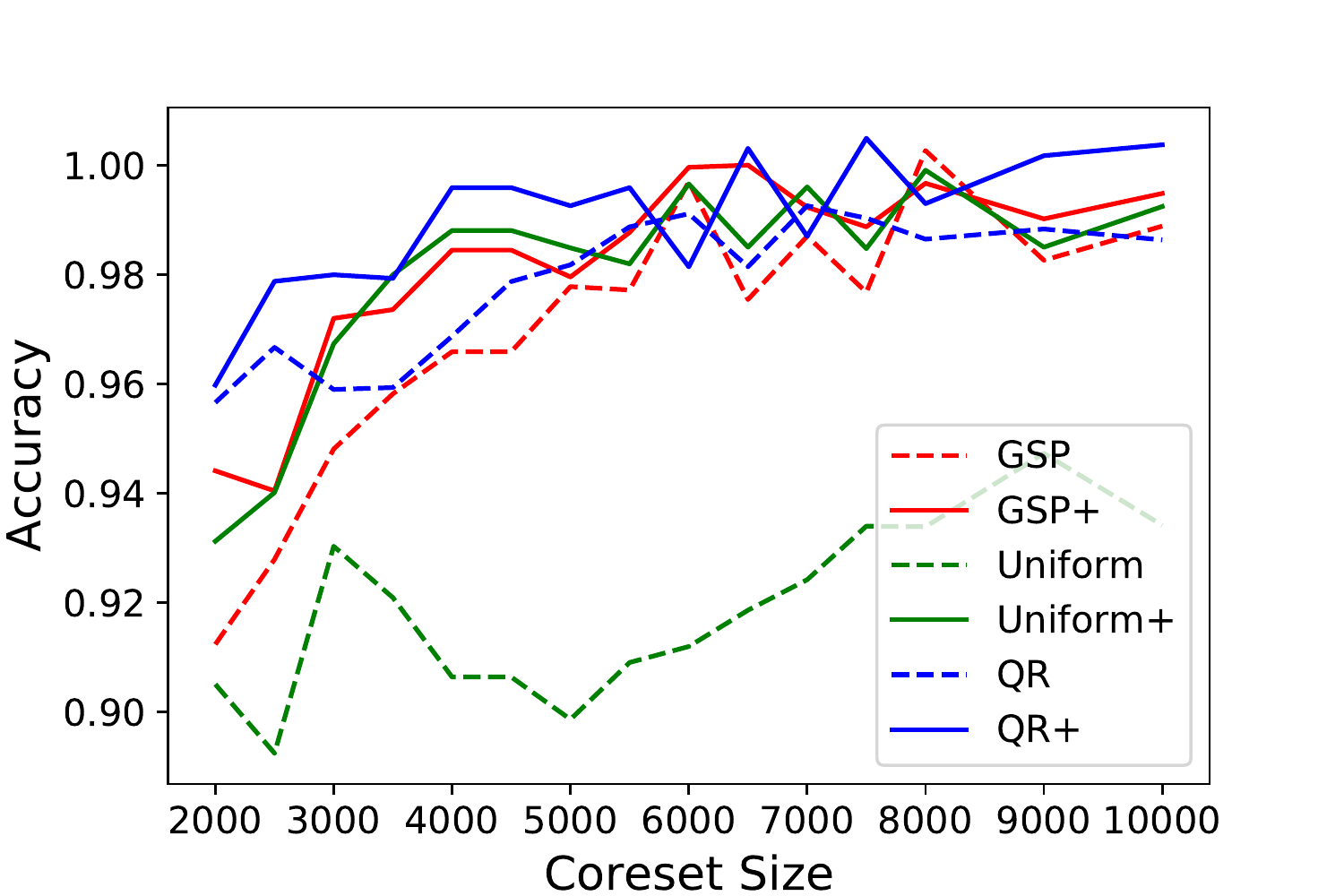}\label{fig:lr-b}}
	\subfigure[{\small Speed-up with the Merge-and-Reduce tree in the dynamic setting.}]{\includegraphics[width=0.49\linewidth]{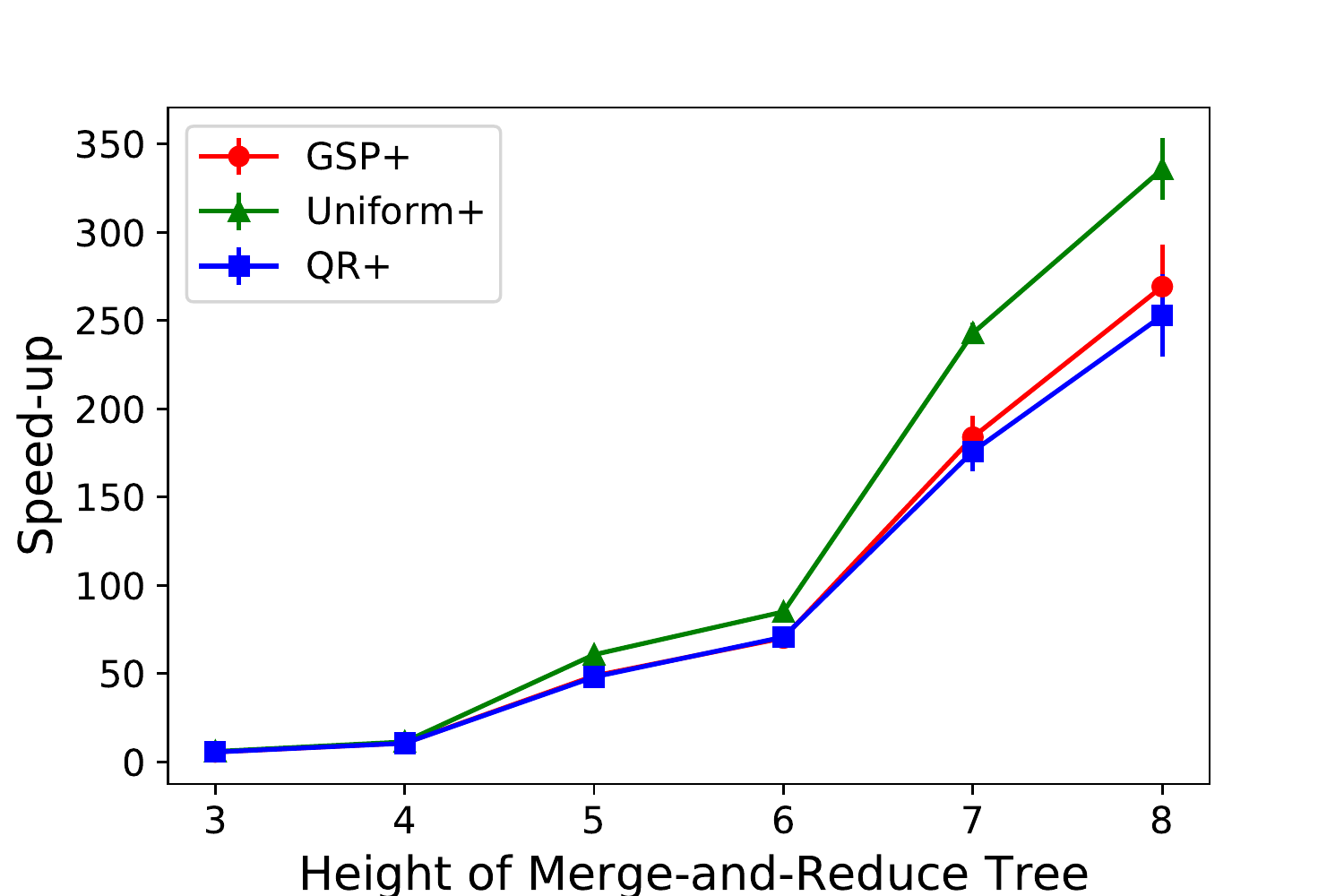}\label{fig:lr-c}}
	\caption{The performances of different coreset methods for logistic regression on Covetype. The results are normalized over the results obtained from the original data (without using coreset).}
	\label{fig:lr-loss}
\end{figure}

\textbf{Results.} 
Table \ref{lr-time} and Table \ref{clustering-time} illustrate the loss ratio (the obtained loss over the loss without using coreset) and speed-up ratio of different robust coreset methods. We can see that the robust coreset methods can achieve significant speed-up, and meanwhile the optimization qualities can be well preserved (their loss ratios are very close to $1$). 
Figure~\ref{fig:lr-a} and \ref{fig:lr-b} illustrate the performance of the (robust) coreset methods with varying the coreset size. In general, our robust coreset can achieve better performance (in terms of the loss and accuracy) compared with its counterpart  without considering outliers. 
Figure~\ref{fig:lr-c} illustrates the speed-up ratio of running time in the dynamic setting. Our robust coreset construction uses the merge-and-reduce tree method. When the update happens in one bucket, we perform a ``bottom-up'' re-construction for the coreset. We let the bucket size be $ n/2^{h-1} $, where $ h $ is the height of the tree; thus the higher the tree, the smaller the bucket size (and the speed-up is more significant). The results reveal that using the coreset yields considerable speed-up compared to re-running the algorithm on the entire updated dataset.

\section{Conclusion}\label{conclusion}
In this paper, we propose a novel robust coreset framework for the continuous-and-bounded learning problems (with outliers). Also, our framework can be efficiently implemented in the dynamic setting. In future, we can consider generalizing our proposed (dynamic) robust coreset method to other types of optimization problems ({\em e.g.,} privacy-preserving and fairness); it is also interesting to consider implementing our method for distributed computing or federated learning. 

\section*{Acknowledgment}
We would like to thank the anonymous reviewers for their helpful suggestions and comments.

\bibliographystyle{alpha}
\bibliography{reference}

\newcommand{\etalchar}[1]{$^{#1}$}
\begin{thebibliography}{CLMW11}

\bibitem[AHV04]{AgarwalHV04}
Pankaj~K. Agarwal, Sariel Har{-}Peled, and Kasturi~R. Varadarajan.
\newblock Approximating extent measures of points.
\newblock {\em J. {ACM}}, 51(4):606--635, 2004.

\bibitem[BD99]{BLACKARD1999131}
Jock~A. Blackard and Denis~J. Dean.
\newblock Comparative accuracies of artificial neural networks and discriminant
  analysis in predicting forest cover types from cartographic variables.
\newblock {\em Computers and Electronics in Agriculture}, 24(3):131--151, 1999.

\bibitem[BFL16]{BravermanFL16}
Vladimir Braverman, Dan Feldman, and Harry Lang.
\newblock New frameworks for offline and streaming coreset constructions.
\newblock {\em CoRR}, abs/1612.00889, 2016.

\bibitem[BLK17]{bachem2017practical}
Olivier Bachem, Mario Lucic, and Andreas Krause.
\newblock Practical coreset constructions for machine learning, 2017.

\bibitem[BMDG05]{BanerjeeMDG05}
Arindam Banerjee, Srujana Merugu, Inderjit~S. Dhillon, and Joydeep Ghosh.
\newblock Clustering with bregman divergences.
\newblock {\em J. Mach. Learn. Res.}, 6:1705--1749, 2005.

\bibitem[BR18]{BiggioR18}
Battista Biggio and Fabio Roli.
\newblock Wild patterns: Ten years after the rise of adversarial machine
  learning.
\newblock {\em Pattern Recognit.}, 84:317--331, 2018.

\bibitem[BS80]{BentleyS80}
Jon~Louis Bentley and James~B. Saxe.
\newblock Decomposable searching problems {I:} static-to-dynamic
  transformation.
\newblock {\em J. Algorithms}, 1(4):301--358, 1980.

\bibitem[BT09]{BeckT09}
Amir Beck and Marc Teboulle.
\newblock A convex optimization approach for minimizing the ratio of indefinite
  quadratic functions over an ellipsoid.
\newblock {\em Mathematical Programming}, 118:13--35, 2009.

\bibitem[CAZ18]{ChenA018}
Jiecao Chen, Erfan~Sadeqi Azer, and Qin Zhang.
\newblock A practical algorithm for distributed clustering and outlier
  detection.
\newblock In {\em Advances in Neural Information Processing Systems 31: Annual
  Conference on Neural Information Processing Systems 2018, NeurIPS 2018, 3-8
  December 2018, Montr{\'{e}}al, Canada}, pages 2253--2262, 2018.

\bibitem[CG13]{Chawla2013kmeansAU}
Sanjay Chawla and Aristides Gionis.
\newblock k-means{-}{-}: A unified approach to clustering and outlier
  detection.
\newblock In {\em SDM}, 2013.

\bibitem[CGMZ16]{DBLP:journals/talg/ChanGMZ16}
T.{-}H.~Hubert Chan, Anupam Gupta, Bruce~M. Maggs, and Shuheng Zhou.
\newblock On hierarchical routing in doubling metrics.
\newblock {\em {ACM} Trans. Algorithms}, 12(4):55:1--55:22, 2016.

\bibitem[Cha09]{DBLP:journals/dcg/Chan09}
Timothy~M. Chan.
\newblock Dynamic coresets.
\newblock {\em Discret. Comput. Geom.}, 42(3):469--488, 2009.

\bibitem[Che08]{Chen08}
Ke~Chen.
\newblock A constant factor approximation algorithm for \emph{k}-median
  clustering with outliers.
\newblock In Shang{-}Hua Teng, editor, {\em Proceedings of the Nineteenth
  Annual {ACM-SIAM} Symposium on Discrete Algorithms, {SODA} 2008, San
  Francisco, California, USA, January 20-22, 2008}, pages 826--835. {SIAM},
  2008.

\bibitem[Che09]{Chen09}
Ke~Chen.
\newblock On coresets for k-median and k-means clustering in metric and
  euclidean spaces and their applications.
\newblock {\em {SIAM} J. Comput.}, 39(3):923--947, 2009.

\bibitem[CKMN01]{CharikarKMN01}
Moses Charikar, Samir Khuller, David~M. Mount, and Giri Narasimhan.
\newblock Algorithms for facility location problems with outliers.
\newblock In S.~Rao Kosaraju, editor, {\em Proceedings of the Twelfth Annual
  Symposium on Discrete Algorithms, January 7-9, 2001, Washington, DC, {USA}},
  pages 642--651. {ACM/SIAM}, 2001.

\bibitem[CLMW11]{DBLP:journals/jacm/CandesLMW11}
Emmanuel~J. Cand{\`{e}}s, Xiaodong Li, Yi~Ma, and John Wright.
\newblock Robust principal component analysis?
\newblock {\em J. {ACM}}, 58(3):11:1--11:37, 2011.

\bibitem[CSS21]{DBLP:conf/stoc/Cohen-AddadSS21}
Vincent Cohen{-}Addad, David Saulpic, and Chris Schwiegelshohn.
\newblock A new coreset framework for clustering.
\newblock In Samir Khuller and Virginia~Vassilevska Williams, editors, {\em
  {STOC} '21: 53rd Annual {ACM} {SIGACT} Symposium on Theory of Computing,
  Virtual Event, Italy, June 21-25, 2021}, pages 169--182. {ACM}, 2021.

\bibitem[DW20]{DBLP:conf/icml/DingW20}
Hu~Ding and Zixiu Wang.
\newblock Layered sampling for robust optimization problems.
\newblock In {\em Proceedings of the 37th International Conference on Machine
  Learning, {ICML} 2020, 13-18 July 2020, Virtual Event}, volume 119 of {\em
  Proceedings of Machine Learning Research}, pages 2556--2566. {PMLR}, 2020.

\bibitem[Fel20]{core-survey}
Dan Feldman.
\newblock Introduction to core-sets: an updated survey.
\newblock {\em CoRR}, abs/2011.09384, 2020.

\bibitem[FL11]{FeldmanL11}
Dan Feldman and Michael Langberg.
\newblock A unified framework for approximating and clustering data.
\newblock In Lance Fortnow and Salil~P. Vadhan, editors, {\em Proceedings of
  the 43rd {ACM} Symposium on Theory of Computing, {STOC} 2011, San Jose, CA,
  USA, 6-8 June 2011}, pages 569--578. {ACM}, 2011.

\bibitem[GGVZ19]{GinartGVZ19}
Antonio Ginart, Melody~Y. Guan, Gregory Valiant, and James Zou.
\newblock Making {AI} forget you: Data deletion in machine learning.
\newblock In Hanna~M. Wallach, Hugo Larochelle, Alina Beygelzimer, Florence
  d'Alch{\'{e}}{-}Buc, Emily~B. Fox, and Roman Garnett, editors, {\em Advances
  in Neural Information Processing Systems 32: Annual Conference on Neural
  Information Processing Systems 2019, NeurIPS 2019, December 8-14, 2019,
  Vancouver, BC, Canada}, pages 3513--3526, 2019.

\bibitem[GKL{\etalchar{+}}17]{GuptaLSM2017}
Shalmoli Gupta, Ravi Kumar, Kefu Lu, Benjamin Moseley, and Sergei
  Vassilvitskii.
\newblock Local search methods for k-means with outliers.
\newblock {\em Proc. VLDB Endow.}, 10(7):757--768, March 2017.

\bibitem[GM12]{aasdp}
Bernd Gärtner and Jiri Matousek.
\newblock {\em Approximation Algorithms and Semidefinite Programming}.
\newblock Springer-Verlag Berlin Heidelberg, 2012.

\bibitem[Hau92]{Haussler92}
David Haussler.
\newblock Decision theoretic generalizations of the {PAC} model for neural net
  and other learning applications.
\newblock {\em Inf. Comput.}, 100(1):78--150, 1992.

\bibitem[HCB16]{HugginsCB16}
Jonathan~H. Huggins, Trevor Campbell, and Tamara Broderick.
\newblock Coresets for scalable bayesian logistic regression.
\newblock In Daniel~D. Lee, Masashi Sugiyama, Ulrike von Luxburg, Isabelle
  Guyon, and Roman Garnett, editors, {\em Advances in Neural Information
  Processing Systems 29: Annual Conference on Neural Information Processing
  Systems 2016, December 5-10, 2016, Barcelona, Spain}, pages 4080--4088, 2016.

\bibitem[HHL{\etalchar{+}}21]{DBLP:conf/icml/HuangHLFD21}
Jiawei Huang, Ruomin Huang, Wenjie Liu, Nikolaos~M. Freris, and Hu~Ding.
\newblock A novel sequential coreset method for gradient descent algorithms.
\newblock In Marina Meila and Tong Zhang, editors, {\em Proceedings of the 38th
  International Conference on Machine Learning, {ICML} 2021, 18-24 July 2021,
  Virtual Event}, volume 139 of {\em Proceedings of Machine Learning Research},
  pages 4412--4422. {PMLR}, 2021.

\bibitem[HJLW18]{HuangJLW18}
Lingxiao Huang, Shaofeng~H.{-}C. Jiang, Jian Li, and Xuan Wu.
\newblock Epsilon-coresets for clustering (with outliers) in doubling metrics.
\newblock In Mikkel Thorup, editor, {\em 59th {IEEE} Annual Symposium on
  Foundations of Computer Science, {FOCS} 2018, Paris, France, October 7-9,
  2018}, pages 814--825. {IEEE} Computer Society, 2018.

\bibitem[HK20]{HenzingerK20}
Monika Henzinger and Sagar Kale.
\newblock Fully-dynamic coresets.
\newblock In Fabrizio Grandoni, Grzegorz Herman, and Peter Sanders, editors,
  {\em 28th Annual European Symposium on Algorithms, {ESA} 2020, September 7-9,
  2020, Pisa, Italy (Virtual Conference)}, volume 173 of {\em LIPIcs}, pages
  57:1--57:21. Schloss Dagstuhl - Leibniz-Zentrum f{\"{u}}r Informatik, 2020.

\bibitem[HM04]{Har-PeledM04}
Sariel Har{-}Peled and Soham Mazumdar.
\newblock On coresets for k-means and k-median clustering.
\newblock In L{\'{a}}szl{\'{o}} Babai, editor, {\em Proceedings of the 36th
  Annual {ACM} Symposium on Theory of Computing, Chicago, IL, USA, June 13-16,
  2004}, pages 291--300. {ACM}, 2004.

\bibitem[IMMM14]{DBLP:conf/pods/IndykMMM14}
Piotr Indyk, Sepideh Mahabadi, Mohammad Mahdian, and Vahab~S. Mirrokni.
\newblock Composable core-sets for diversity and coverage maximization.
\newblock In Richard Hull and Martin Grohe, editors, {\em Proceedings of the
  33rd {ACM} {SIGMOD-SIGACT-SIGART} Symposium on Principles of Database
  Systems, PODS'14, Snowbird, UT, USA, June 22-27, 2014}, pages 100--108.
  {ACM}, 2014.

\bibitem[KYJ13]{Kaul0J13}
Manohar Kaul, Bin Yang, and Christian~S. Jensen.
\newblock Building accurate 3d spatial networks to enable next generation
  intelligent transportation systems.
\newblock In {\em 2013 {IEEE} 14th International Conference on Mobile Data
  Management, Milan, Italy, June 3-6, 2013 - Volume 1}, pages 137--146, 2013.

\bibitem[LBK16]{DBLP:conf/aistats/LucicBK16}
Mario Lucic, Olivier Bachem, and Andreas Krause.
\newblock Strong coresets for hard and soft bregman clustering with
  applications to exponential family mixtures.
\newblock In Arthur Gretton and Christian~C. Robert, editors, {\em Proceedings
  of the 19th International Conference on Artificial Intelligence and
  Statistics, {AISTATS} 2016, Cadiz, Spain, May 9-11, 2016}, volume~51 of {\em
  {JMLR} Workshop and Conference Proceedings}, pages 1--9. JMLR.org, 2016.

\bibitem[LFKF17]{LucicFKF17}
Mario Lucic, Matthew Faulkner, Andreas Krause, and Dan Feldman.
\newblock Training gaussian mixture models at scale via coresets.
\newblock {\em J. Mach. Learn. Res.}, 18:160:1--160:25, 2017.

\bibitem[LGM{\etalchar{+}}15]{DBLP:journals/sigkdd/LiGMLSZFH15}
Yaliang Li, Jing Gao, Chuishi Meng, Qi~Li, Lu~Su, Bo~Zhao, Wei Fan, and Jiawei
  Han.
\newblock A survey on truth discovery.
\newblock {\em {SIGKDD} Explor.}, 17(2):1--16, 2015.

\bibitem[LLS01]{li2001improved}
Yi~Li, Philip~M Long, and Aravind Srinivasan.
\newblock Improved bounds on the sample complexity of learning.
\newblock {\em Journal of Computer and System Sciences}, 62(3):516--527, 2001.

\bibitem[LS10]{DBLP:conf/soda/LangbergS10}
Michael Langberg and Leonard~J. Schulman.
\newblock Universal epsilon-approximators for integrals.
\newblock In Moses Charikar, editor, {\em Proceedings of the Twenty-First
  Annual {ACM-SIAM} Symposium on Discrete Algorithms, {SODA} 2010, Austin,
  Texas, USA, January 17-19, 2010}, pages 598--607. {SIAM}, 2010.

\bibitem[LXY20]{LiXY20}
Shi Li, Jinhui Xu, and Minwei Ye.
\newblock Approximating global optimum for probabilistic truth discovery.
\newblock {\em Algorithmica}, 82(10):3091--3116, 2020.

\bibitem[MCL20]{MirzasoleimanCL20}
Baharan Mirzasoleiman, Kaidi Cao, and Jure Leskovec.
\newblock Coresets for robust training of deep neural networks against noisy
  labels.
\newblock In Hugo Larochelle, Marc'Aurelio Ranzato, Raia Hadsell,
  Maria{-}Florina Balcan, and Hsuan{-}Tien Lin, editors, {\em Advances in
  Neural Information Processing Systems 33: Annual Conference on Neural
  Information Processing Systems 2020, NeurIPS 2020, December 6-12, 2020,
  virtual}, 2020.

\bibitem[MNP{\etalchar{+}}14]{MountNPSW14}
David~M. Mount, Nathan~S. Netanyahu, Christine~D. Piatko, Ruth Silverman, and
  Angela~Y. Wu.
\newblock On the least trimmed squares estimator.
\newblock {\em Algorithmica}, 69(1):148--183, 2014.

\bibitem[MOP04]{DBLP:journals/ml/MeyersonOP04}
Adam Meyerson, Liadan O'Callaghan, and Serge~A. Plotkin.
\newblock A \emph{k}-median algorithm with running time independent of data
  size.
\newblock {\em Mach. Learn.}, 56(1-3):61--87, 2004.

\bibitem[MSSW18]{MunteanuSSW18}
Alexander Munteanu, Chris Schwiegelshohn, Christian Sohler, and David~P.
  Woodruff.
\newblock On coresets for logistic regression.
\newblock In Samy Bengio, Hanna~M. Wallach, Hugo Larochelle, Kristen Grauman,
  Nicol{\`{o}} Cesa{-}Bianchi, and Roman Garnett, editors, {\em Advances in
  Neural Information Processing Systems 31: Annual Conference on Neural
  Information Processing Systems 2018, NeurIPS 2018, December 3-8, 2018,
  Montr{\'{e}}al, Canada}, pages 6562--6571, 2018.

\bibitem[RL87]{books/wi/RousseeuwL87}
Peter~J. Rousseeuw and Annick Leroy.
\newblock {\em Robust Regression and Outlier Detection}.
\newblock Wiley, 1987.

\bibitem[Sch14]{DBLP:books/daglib/0035668}
Melanie Schmidt.
\newblock {\em Coresets and streaming algorithms for the k-means problem and
  related clustering objectives}.
\newblock PhD thesis, Universit{\"{a}}t Dortmund, 2014.

\bibitem[SRF20]{k-Means+++}
Adiel Statman, Liat Rozenberg, and Dan Feldman.
\newblock k-means+++: Outliers-resistant clustering.
\newblock {\em Algorithms}, 13(12), 2020.

\bibitem[SSBD14]{understandingML}
Shai Shalev-Shwartz and Shai Ben-David.
\newblock {\em Understanding Machine Learning - From Theory to Algorithms.}
\newblock Cambridge University Press, 2014.

\bibitem[TF18]{tolochinsky2018generic}
Elad Tolochinsky and Dan Feldman.
\newblock Generic coreset for scalable learning of monotonic kernels: Logistic
  regression, sigmoid and more, 2018.

\end{thebibliography}


\newpage
\appendix

\section{Examples for Continuous-and-Bounded Learning Problem}
\label{supp:instance}

\paragraph{Logistic Regression}
For $ x,\theta \in\mathbb{R}^d $ and $ y\in\{\pm 1 \} $, the loss function of Logistic regression is 
\begin{equation}
	f(\theta,x) = \ln (1+\exp(-y\cdot\braket{\theta,x} )).
\end{equation}
We denote the upper bound of $ \|x\| $ by $ \Delta $, then this loss function is $ \Delta $-Lipschitz and $ \frac{\Delta ^2}{4} $-smooth. 

\paragraph{Bregman Divergence~\cite{BanerjeeMDG05}}
Let function $ \phi:\mathbb{R}^d\rightarrow \mathbb{R} $ be strictly convex and differentiable, then the Bregman divergence between $ x,y\in\mathbb{R}^d $ respect to $ \phi $ is 
\begin{equation}
	d_\phi(y,x) = \phi(y)-\phi(x)-\braket{\nabla\phi(x),y-x}.
\end{equation}

If we assume $ \|\nabla\phi(x)\| \leq L $ for any $x $ with some $L>0$, then we have 
\[ d_\phi(y,x)\leq  L\|y-x\|+\|\nabla \phi(x)\|\|y-x\| \leq 2L\|y-x\|.  \]
So in this case the Bregman divergence function is $ 2L $-Lipschitz.

\paragraph{Truth Discovery~\cite{LiXY20,DBLP:journals/sigkdd/LiGMLSZFH15}} Truth discovery is used to aggregate multi-source information to achieve a more reliable result; the topic has been extensively studied in the area of data mining (especially for crowdsourcing). Suppose
$ \theta,x\in\mathbb{R}^d $ where $\theta$ is the ``truth vector'' and $x$ is the vector provided by a source, and then the loss function of $x$ is $ f(\theta,x)=f_{truth}(\|\theta-x\|) $, where
\begin{equation}
	f_{truth}(t)=
	\begin{cases}
		t^2\quad &0 \leq t < 1;  \\
		1+\log t^2 & t \geq 1.
	\end{cases}
\end{equation}
We can prove that $ f(\theta,x) $ is $ 2 $-Lipschitz.

\paragraph{$ k $-median/means clustering in $\mathbb{R}^d$} Suppose $x\in\mathbb{R}^d$. 
Let $ \mathtt{Cen}=\{c_1,c_2,\cdots,c_k \} $ where $ c_i\in \mathbb{R}^d $  for $1\leq i\leq k$; the cost function of $ k $-medians ({\em resp.,} $k$-means) on $x$ is $ f(\mathtt{Cen},x)=\min_{c_i\in \mathtt{Cen}}d(c_i,x)\ (resp.,\ d(c_i,x)^2) $, where $d(c_i,x)$ denotes the Euclidean distance between $c_i$ and $x$. We also need to define the distance between two center sets. Let   $ \mathtt{Cen}'=\{c'_1,c'_2,\cdots,c'_k \} $ be another given center set. 
We let $ d(\mathtt{Cen},\mathtt{Cen}') := \max_{1\leq i \leq k} d(c_i,c'_i) $. It is easy to see that this defines a metric for center sets. Therefore, the bounded region  considered here is the Cartesian product
 of $k$ balls. i.e., $ \mathbb{B}(\widetilde{\mathtt{Cen}},\ell) =\bigoplus_{i=1}^k \mathbb{B}(\tilde{c}_i,\ell) $ with $ \widetilde{\mathtt{Cen}}=\{\tilde{c}_1,\tilde{c}_2,\cdots,\tilde{c}_k \} $.

We take the $k$-median clustering problem as an example. Suppose the nearest neighbors of $x$ among $\mathtt{Cen}$ and $\mathtt{Cen}'$ are $c_i$ and $c'_j$, respectively.  Then we have
\begin{align*}
    f(\mathtt{Cen},x)-f(\mathtt{Cen}',x)=d(c_i,x)- d(c'_j,x)\leq d(c_j,x)-d(c'_j,x)\leq d(c_j,c'_j) \leq d(\mathtt{Cen},\mathtt{Cen}');\\
    f(\mathtt{Cen}',x)-f(\mathtt{Cen},x)=d(c'_j,x)- d(c_i,x)\leq d(c'_i,x)-d(c_i,x)\leq d(c'_i,c_i)\leq d(\mathtt{Cen},\mathtt{Cen}'),
\end{align*}
which directly imply $ |f(\mathtt{Cen},x)-f(\mathtt{Cen}',x)|\leq d(\mathtt{Cen},\mathtt{Cen}') $.
Hence the $k$-median problem is $ 1 $-Lipschitz.

\section{Quality Guarantee Yielded from  Robust Coreset}
\label{supp:quality}

For the case without outliers, it is easy to see that the optimal solution of the $\varepsilon$-coreset is a $(1+3\varepsilon)$-approximate solution of the full data. Specifically, let $ \theta_C^* $ be the optimal solution of an $\varepsilon$-coreset and $ \theta_X^* $ be the optimal solution of the original dataset $X$; then for any  $\varepsilon \in (0, 1/3)$, we have
\begin{equation}
    f(\theta_C^*,X)\leq (1+3\varepsilon)f(\theta_X^*,X).
\end{equation}
But when considering the case with  outliers, this result only holds for $ (0,\varepsilon) $-robust coreset ({\em i.e.,} $\beta=0$). 
If $ \beta > 0 $, we can obtain a slightly weaker result (exclude slightly more than $z$ outliers). 

\begin{lemma}
Given two parameters $ 0<\varepsilon<1/3 $ and $ 0\leq \beta < 1/2 $, suppose $C$ is a $(\beta,\varepsilon)$-robust coreset of $ X $ with $(1+2\beta)z $ outliers. Let
$ \theta_C^* $ be the optimal solution of the instance $ (C,(1+2\beta)z) $, $ \theta_X^* $ be the optimal solution of $ (X,z) $ respectively.
Then we have
\begin{equation}
    f_{(1+4\beta)z}(\theta_C^*,X)\leq (1+3\varepsilon)f_z(\theta_X^*,X).
\end{equation}
\end{lemma}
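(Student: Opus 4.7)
The plan is to chain together three inequalities: the $(\beta,\varepsilon)$-robust-coreset guarantee applied at both $\theta_C^*$ and $\theta_X^*$, together with the defining optimality of $\theta_C^*$ on the coreset instance. First I would unpack Definition~\ref{robust-coreset} at outlier budget $z':=(1+2\beta)z$: for every $\theta\in\mathbb{B}(\tilde{\theta},\ell)$,
\begin{equation*}
(1-\varepsilon)\,f_{(1+\beta)z'}(\theta,X)\leq f_{z'}(\theta,C)\leq (1+\varepsilon)\,f_{(1-\beta)z'}(\theta,X).
\end{equation*}

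The next step is to rewrite the awkward products $(1\pm\beta)(1+2\beta)$ as clean multiples of $z$. The restriction $\beta<1/2$ is tailored exactly for this: $(1+\beta)(1+2\beta)=1+3\beta+2\beta^2\leq 1+4\beta$ (equivalently $2\beta^2\leq\beta$), and $(1-\beta)(1+2\beta)=1+\beta-2\beta^2\geq 1$ (same condition). Since $f_m(\theta,X)$ is non-increasing in the outlier budget $m$ (trimming more points can only lower the cost), these inequalities imply $f_{(1+\beta)z'}(\theta,X)\geq f_{(1+4\beta)z}(\theta,X)$ and $f_{(1-\beta)z'}(\theta,X)\leq f_{z}(\theta,X)$. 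Substituting into the robust-coreset display yields the cleaner sandwich
\begin{equation*}
(1-\varepsilon)\,f_{(1+4\beta)z}(\theta,X)\leq f_{(1+2\beta)z}(\theta,C)\leq (1+\varepsilon)\,f_{z}(\theta,X),
\end{equation*}
valid for every $\theta\in\mathbb{B}(\tilde{\theta},\ell)$.

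To conclude, I would instantiate the left inequality at $\theta=\theta_C^*$ and the right inequality at $\theta=\theta_X^*$, inserting the optimality of $\theta_C^*$ on the instance $(C,(1+2\beta)z)$ in the middle:
\begin{equation*}
(1-\varepsilon)\,f_{(1+4\beta)z}(\theta_C^*,X)\leq f_{(1+2\beta)z}(\theta_C^*,C)\leq f_{(1+2\beta)z}(\theta_X^*,C)\leq (1+\varepsilon)\,f_{z}(\theta_X^*,X).
\end{equation*}
Dividing by $1-\varepsilon$ and using the elementary bound $\frac{1+\varepsilon}{1-\varepsilon}\leq 1+3\varepsilon$ whenever $\varepsilon<1/3$ (the difference rearranges to $\varepsilon(1-3\varepsilon)/(1-\varepsilon)\geq 0$) delivers the claim $f_{(1+4\beta)z}(\theta_C^*,X)\leq (1+3\varepsilon)f_z(\theta_X^*,X)$.

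The only genuine subtlety is tracking the direction of the monotonicity of $f_m$ in $m$ when replacing $(1\pm\beta)(1+2\beta)$ by $1+4\beta$ and $1$: the outer $(1-\varepsilon)$ sits in front of $f_{(1+\beta)z'}$ which is \emph{smaller} than $f_{(1+2\beta)z}$, so pushing it further down to $f_{(1+4\beta)z}$ preserves the inequality, and analogously on the right. Once this sign bookkeeping is pinned down, the rest is a two-line chain plus the standard $\tfrac{1+\varepsilon}{1-\varepsilon}\leq 1+3\varepsilon$ trick, so I do not expect any hidden difficulty beyond stating the hypotheses precisely.
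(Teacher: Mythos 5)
Your proof is correct and follows essentially the same route as the paper's: apply the robust-coreset guarantee at outlier budget $(1+2\beta)z$, use $(1+\beta)(1+2\beta)\leq 1+4\beta$ and $(1-\beta)(1+2\beta)\geq 1$ (with monotonicity of $f_m$ in $m$) together with the optimality of $\theta_C^*$ on $(C,(1+2\beta)z)$, and finish with $\frac{1+\varepsilon}{1-\varepsilon}\leq 1+3\varepsilon$. No substantive differences from the paper's argument.
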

\begin{proof}
Since $ \beta\in [0,1/2) $ and $ \varepsilon \in (0,1/3) $, we have $ {1+4\beta}\geq (1+2\beta)(1+\beta) $, $ (1+2\beta)(1-\beta) \geq 1 $ and $ \frac{1+\varepsilon}{1-\varepsilon}<1+3\varepsilon $. Thus we can obtain the following bound:
    \begin{align*}
    f_{(1+4\beta)z}(\theta_C^*,X)&\leq f_{(1+2\beta)(1+\beta)z}(\theta_C^*,X)\leq \frac{1}{1-\varepsilon}f_{(1+2\beta)z}(\theta_C^*,C)\\
    &\leq \frac{1}{1-\varepsilon}f_{(1+2\beta)z}(\theta_X^*,C)\leq \frac{1+\varepsilon}{1-\varepsilon}f_{(1+2\beta)(1-\beta)z}(\theta_X^*,X)\\
    &\leq (1+3\varepsilon)f_z(\theta_X^*,X).
    \end{align*}
\end{proof}

\section{Fully-Dynamic Coreset with Outliers}
\label{sec-appdynamic}

In this section, we show that our robust coreset of Section~\ref{sec:robust-coreset} can be efficiently implemented in a fully-dynamic environment, even if the number of outliers $z$ is dynamically changed.

The standard $ \varepsilon $-coreset usually has two important properties. 
If $ C_1 $ and $ C_2 $ are respectively the $ \varepsilon $-coresets of two disjoint sets $ X_1 $ and $ X_2 $, their union $ C_1\cup C_2 $ should be an $ \varepsilon$-coreset of $ X_1\cup X_2 $. Also, 
if $ C_1 $ is an $ \varepsilon_1$-coreset of $ C_2 $ and $ C_2 $ is an $ \varepsilon_2$-coreset of $ C_3 $,  $ C_1 $ should be an $ (\varepsilon_1+\varepsilon_2+\varepsilon_1\varepsilon_2) $-coreset of $ C_3 $. Based on these two properties, one can build a coreset for incremental data stream by using the ``merge-and-reduce'' technique~\cite{BentleyS80,Har-PeledM04} as shown in Figure~\ref{fig:merge-reduce-tree}. Very recently, Henzinger and Kale~\cite{HenzingerK20} extended it to the more general fully-dynamic setting, where data items can be deleted and updated as well.

Roughly speaking, the merge-and-reduce technique uses a sequence of ``buckets'' to maintain the coreset for the input streaming data, and the buckets are merged by a bottom-up manner. However, it is challenging to directly adapt this strategy to the case with outliers, because we cannot determine the number of outliers in each bucket. 
A cute aspect of our hybrid robust coreset framework is that we can easily resolve this obstacle by using an $O(n)$ size auxiliary table $\mathscr{L}$ together with the merge-and-reduce technique (note that even for the case without outliers, maintaining a fully-dynamic coreset already needs $\Omega(n)$ space~\cite{HenzingerK20}). 

\begin{figure}[ht]
	\subfigure[Merge-and-Reduce (without outliers)]{\includegraphics[width=0.45\linewidth]{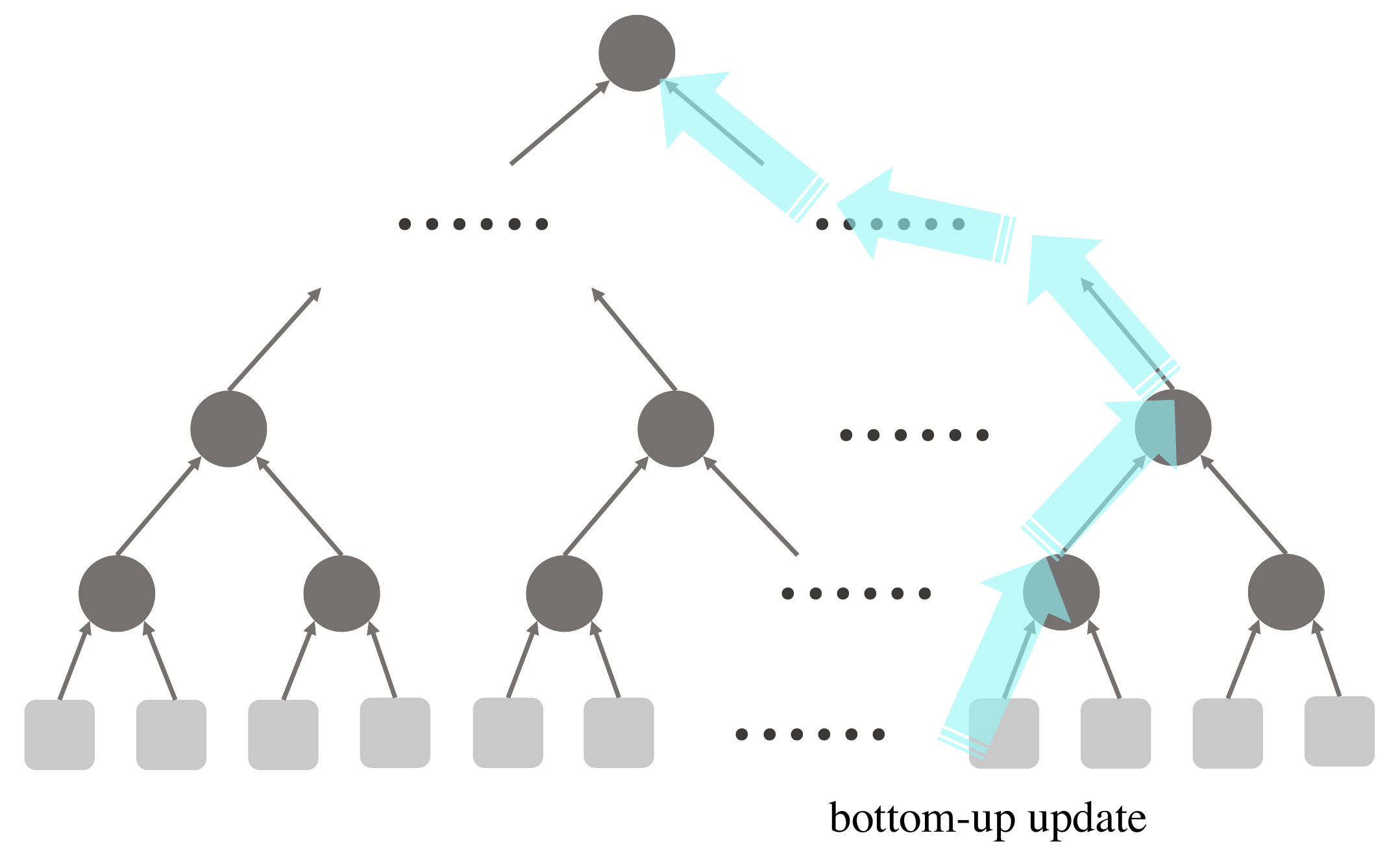}\label{fig:merge-reduce-tree}}
	\subfigure[Merge-and-Reduce with outliers]{\includegraphics[width=0.45\linewidth]{fig/coreset-tree-outlier}\label{fig:merge-reduce-tree-outlier}}
	\caption{}
\end{figure}

Recall that we partition the input data $X$ into two parts: the $n-\tilde{z}$ ``suspected inliers'' and the $\tilde{z}$ ``suspected outliers'', where $\tilde{z}=(1+1/\varepsilon_0)z$. We follow the same notations used in Section~\ref{sec:robust-coreset}. For the first part, we just apply the vanilla merge-and-reduce technique to obtain a fully-dynamic coreset $C_{\mathtt{si}}$; for the other part, we can just take a $\delta$-sample or take the whole set (if we require $\beta$ to be $0$), and denote it as $C_{\mathtt{so}}$. Moreover, we maintain a table $\mathscr{L}$ to record the key values $ x.\mathtt{value}= f(\tilde{\theta},x)$ and its position $ x.\mathtt{position} $ in the merge-and-reduce tree, for each $x\in X$; they are sorted by the $x.\mathtt{value}$s in the table. To deal with the dynamic updates ({\em e.g.,} deletion and insertion), we also maintain a critical pointer $ p $ pointing to the data item $ x_{\tilde{z}} $ and $p\rightarrow \mathtt{value}:=x_{\tilde{z}}.\mathtt{value} $  (recall $x_{\tilde{z}}$ has the $ \tilde{z}$-th largest cost $ f(\tilde{\theta},x) $ among $X$ defined in Section~\ref{sec:robust-coreset}). 

When a new data item $x$ is coming or an existing data item $x$ is going to be deleted, we just need to compare it with $ f(\tilde{\theta},x_c) $ so as to decide to update $C_{\mathtt{si}}$ or $C_{\mathtt{so}}$ accordingly; after the update, we also need to update $ x_{\tilde{z}} $ and the pointer $p$ in $\mathscr{L}$. If the number of outliers $z$ is changed, we just need to update $ x_{\tilde{z}} $ and  $p$ first, and then update $C_{\mathtt{si}}$ and $C_{\mathtt{so}}$ (for example, if $z$ is increased, we just need to delete some items from $C_{\mathtt{so}}$ and insert some items to $C_{\mathtt{si}}$). To realize these updating operations, we also set one bucket as the ``hot bucket'', which serves as a shuttle to execute all the data shifts. See Figure~\ref{fig:merge-reduce-tree-outlier} for the illustration. Let $ M(\varepsilon) $ be the size of the vanilla $\varepsilon$-coreset. In order to achieve an $ \varepsilon $-coreset overall, we need to construct an $ \frac{\varepsilon}{\log n} $-coreset with size $ M(\varepsilon/\log n) $ in every reduce part~\cite{AgarwalHV04}. We use $ M $ to denote $ M(\varepsilon/\log n) $ for short and assume that we can compute a coreset of $ X $ in time $ \mathsf{t}(|X|) $~\cite{DBLP:books/daglib/0035668}. The height of the tree is $ O(\log n) $. 


\begin{itemize}
	\item \textsc{Insert($ y $)} (a new point $ y $ is inserted.) We insert it into the list $ \mathscr{L} $ according to $y.\mathtt{value}=f(\tilde{\theta},y) $. If $ y.\mathtt{value}>p\rightarrow \mathtt{value} $, shift $ p $ one place to the right. i.e., $ p\gets p+1 $. If $ y.\mathtt{value} \leq p\rightarrow \mathtt{value} $, insert it into the hot bucket and set $ y.\mathtt{position} $, then update the coreset tree from hot bucket to the root. The updating time is $ O(\mathsf{t}(M) \log n) $.
	\item \textsc{Delete($ y $)} (a point $ y $ is deleted.) If $ y $ is a suspected inlier, delete it from its bucket $ y.\mathtt{position} $ and find another point $ y' $ from the hot bucket to re-fill the bucket $ y.\mathtt{position} $. Then update the tree from these two buckets. Finally we delete $ y $ from list $ \mathscr{L} $. If $ y $ is a suspected outlier, delete it from $ \mathscr{L} $. Finally $  p\gets p-1 $ and delete the current critical point from the tree, which is a suspected outlier now. The updating time is $ O(\mathsf{t}(M)\log n) $.
	\item \textsc{Update($ y $)} (a point $ y$ is updated.) 
	We just need to run \textsc{Delete($ y $)} and \textsc{Insert($ y $)} for this updating. The updating time is $ O(\mathsf{t}(M)\log n) $.
	\item \textsc{Change($ \Delta z $)} (the number of outliers is updated as $ z\gets z+\Delta z $.) 
	We set $ p\gets p+\tilde{z}(z+\Delta z)-\tilde{z}(z) $. If $ \Delta z>0 $, we delete these $ \tilde{z}(z+\Delta z)-\tilde{z}(z) $ points from the tree. If $ \Delta z<0 $, we insert these points from suspected outliers into the tree. Note that we do not need to update $ \mathscr{L} $ in this case. The updating time is $ O(\frac{\Delta z}{\varepsilon}\mathsf{t}(M)\log n) $.
\end{itemize}


\begin{theorem}
\label{the-dynamic}
In our dynamic implementation, the time complexity for insertion and deletion is $O(\mathsf{t}(M)\log n)$. To update $z$ to $z\pm \Delta z$ with $\Delta z\geq 0$, the time complexity is $O(\frac{\Delta z}{\varepsilon}\mathsf{t}(M)\log n)$, where $\varepsilon$ is the error bound for the robust coreset in Definition~\ref{robust-coreset}. 
\end{theorem}

\section{Quadratic Fractional Programming}
\label{sec-appqfp}

In this section, we present an algorithm to compute the upper bound of the sensitivity for the CnB learning problems via the quadratic fractional programming. 
Recall that $ \sigma_i=\sigma(x_i)=\sup_\theta\frac{f(\theta,x_i)}{f(\theta,X)} $ is the sensitivity of data point $ x_i $. We denote $ f(\theta,x_i) $ by $ f_i(\theta) $ for convenience. Because the loss function is $\alpha$-Lipschitz (or $\alpha$-smooth, $ \alpha $-Lipschitz continuous Hessian), we can bound the difference between $f_i(\theta) $ and $f_i(\tilde{\theta})$. For example, if we assume the cost function to be $ \alpha $-smooth, 
 we have $ f_i(\theta) \leq f_i(\tilde{\theta})+\braket{\nabla f_i(\tilde{\theta}),\Delta \theta}+\frac{\alpha}{2}\|\Delta\theta\|^2 $ and $ f_i(\theta) \geq f_i(\tilde{\theta})+\braket{\nabla f_i(\tilde{\theta}),\Delta \theta}-\frac{\alpha}{2}\|\Delta\theta\|^2 $, where $\Delta \theta=\theta-\tilde{\theta}$. Consequently, we obtain an upper bound of $ \sigma_i $: 
 \begin{eqnarray}
 \sup_{\Delta\theta\in\mathbb{R}^d,\|\Delta\theta\|\leq\ell} \frac{f_i(\tilde{\theta})+\braket{\nabla f_i(\tilde{\theta}),\Delta \theta}+\frac{\alpha}{2}\|\Delta\theta\|^2}{\sum_{i=1}^n f_i(\tilde{\theta})+\braket{\sum_{i=1}^n\nabla f_i(\tilde{\theta}),\Delta \theta}-\frac{\alpha n}{2}\|\Delta\theta\|^2}.\label{for-uppersens}
 \end{eqnarray}
 Note that $f_i(\tilde{\theta})$, $\nabla f_i(\tilde{\theta})$, $\sum_{i=1}^n f_i(\tilde{\theta})$, and $\sum_{i=1}^n\nabla f_i(\tilde{\theta})$ in (\ref{for-uppersens}) are all constant. 
Thus it is a standard $d$-dimensional quadratic fractional programming over a bounded ball, which can be reduced to an instance of the semi-definite programming~\cite{BeckT09}.

\section{Omitted Proofs}

\subsection{More Details for Definition~\ref{def:CB-learning}}
\label{sec-appcbl}
Recall that we have $ |g(\theta_1)-g(\theta_2)|\leq \alpha \|\Delta\theta \| $ for $ \alpha $-Lipschitz continuous function $ g $. Similarly, the $\alpha $-Lipschitz continuous gradient and $ \alpha $-Lipschitz continuous Hessian also imply the bounds of the difference between $ g(\theta_1) $ and $ g(\theta_2) $: 
\begin{align*}
	&|g(\theta_1)-g(\theta_2)-\braket{\nabla g(\theta_2),\Delta \theta}|\leq \frac{\alpha}{2}\|\Delta \theta\|^2,\\
	&|g(\theta_1)-g(\theta_2)-\braket{\nabla g(\theta_2),\Delta \theta}-\frac{1}{2}\braket{\nabla^2 g(\theta_2)\Delta\theta,\Delta \theta}|\leq \frac{\alpha}{6}\|\Delta \theta\|^3.
\end{align*}
In general, as for continuous-and-bounded learning problems, we can bound $|f(\theta,x)-f(\tilde{\theta},x)| $ by a low degree polynomial function. As shown above, if $ f(\cdot,x) $ is $ \alpha $-Lipschitz, the polynomial function is $ \xi(\ell)=\alpha\ell $; if $ f(\cdot,x) $ is $ \alpha $-smooth, the polynomial function is $ \xi(h;\ell)=h\ell+\alpha\ell^2/2 $, where $ h=\max_{x\in X} \|\nabla f(\theta,x)_{|\tilde{\theta}} \| $. The following proofs are presented for the general case. Namely, we always use a unified polynomial $ \xi(\ell) $ to represent this difference instead of specifying the type of continuity of $f(\cdot,x)$.

\subsection{Proof of Theorem \ref{thm-coreset-outlier}}
\label{sec-appthe2}

\begin{figure}[h]
	\subfigure[$ X_\mathrm{\Rnum{3}} $ and $ X_\mathrm{\Rnum{4}} $ are both non-empty]{\includegraphics[width=0.3\linewidth]{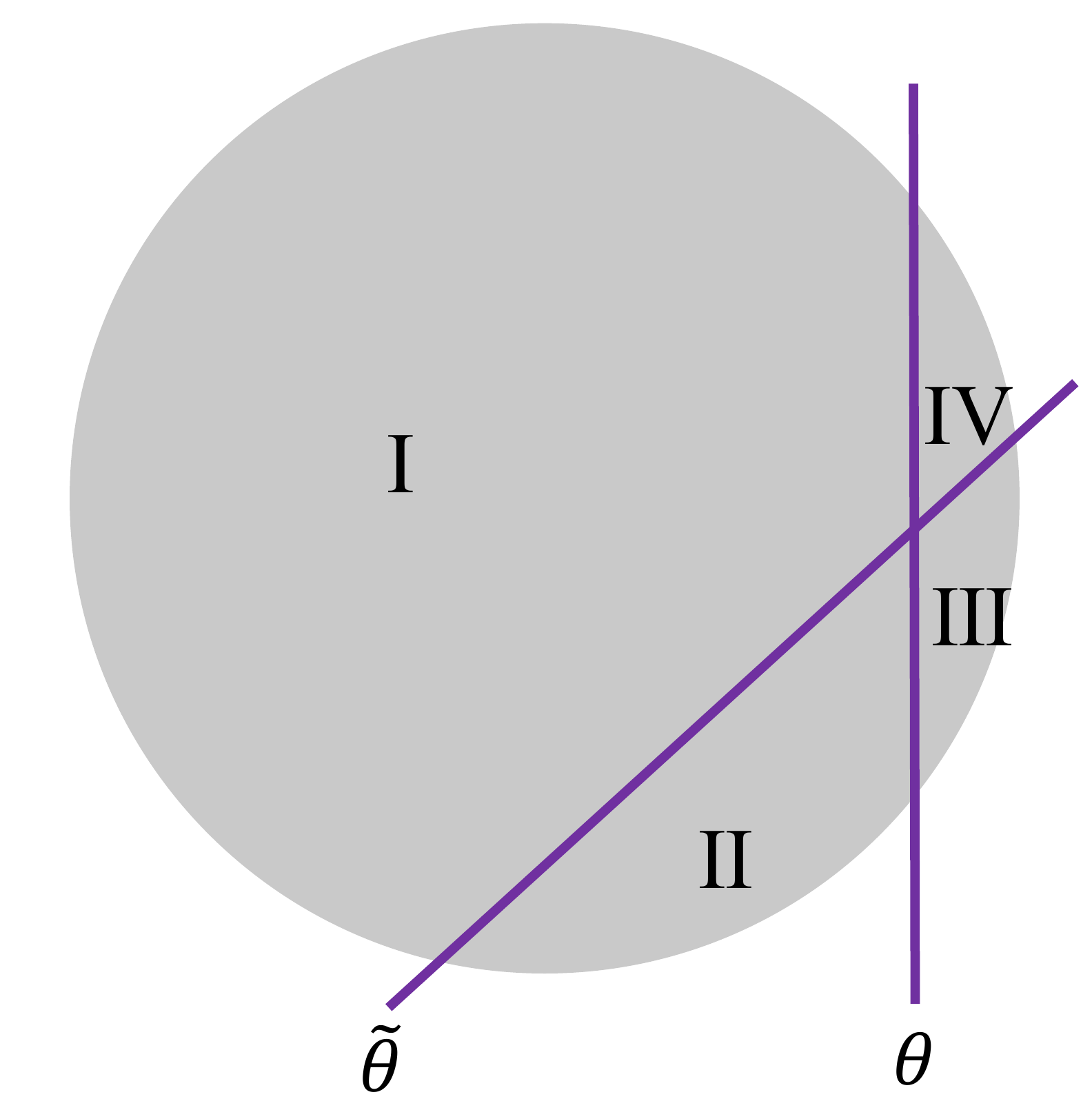}}
	\subfigure[$ X_\mathrm{\Rnum{4}}=\varnothing $]{\includegraphics[width=0.3\linewidth]{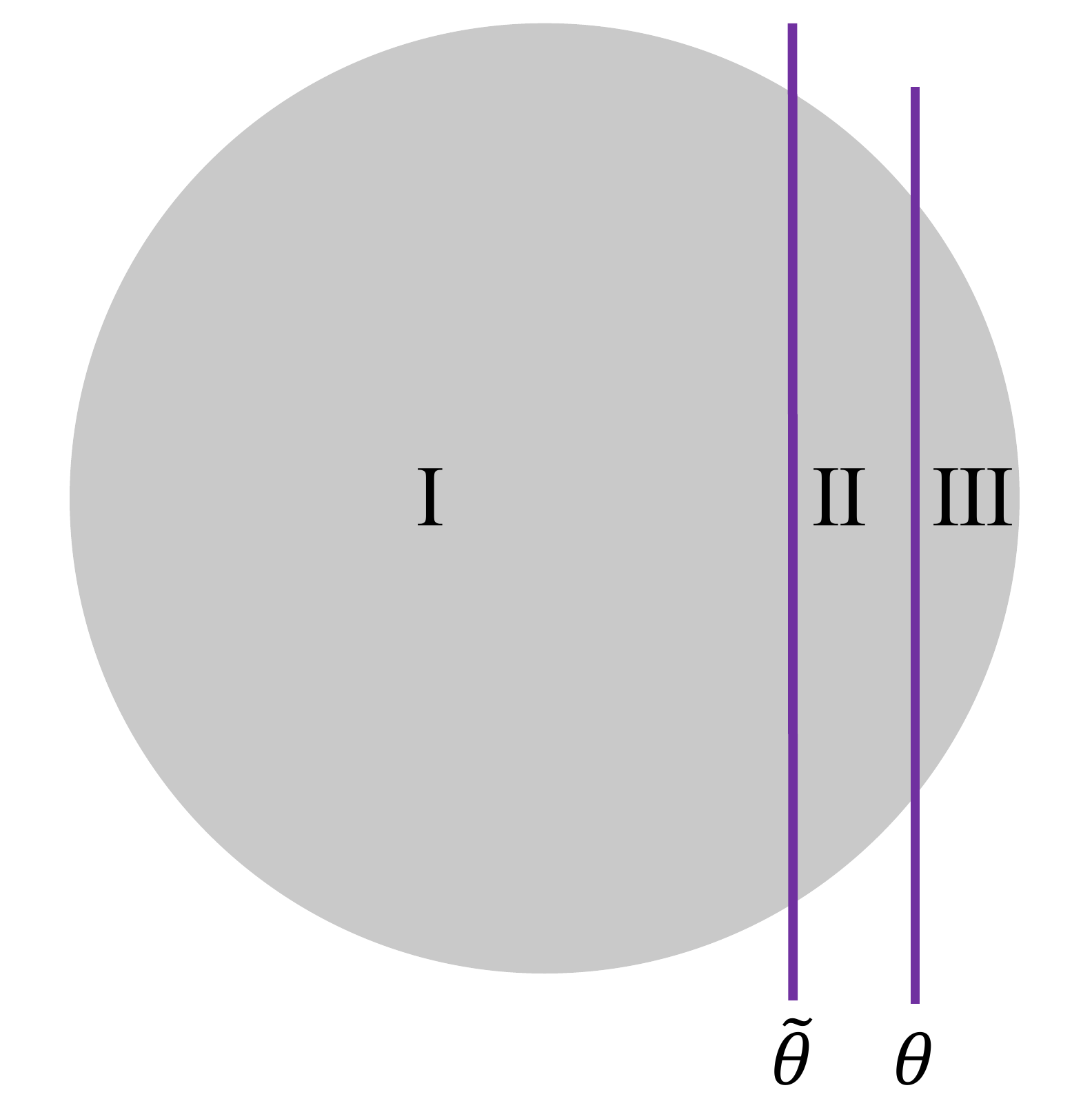}}
	\subfigure[$ X_\mathrm{\Rnum{3}}=\varnothing $]{\includegraphics[width=0.3\linewidth]{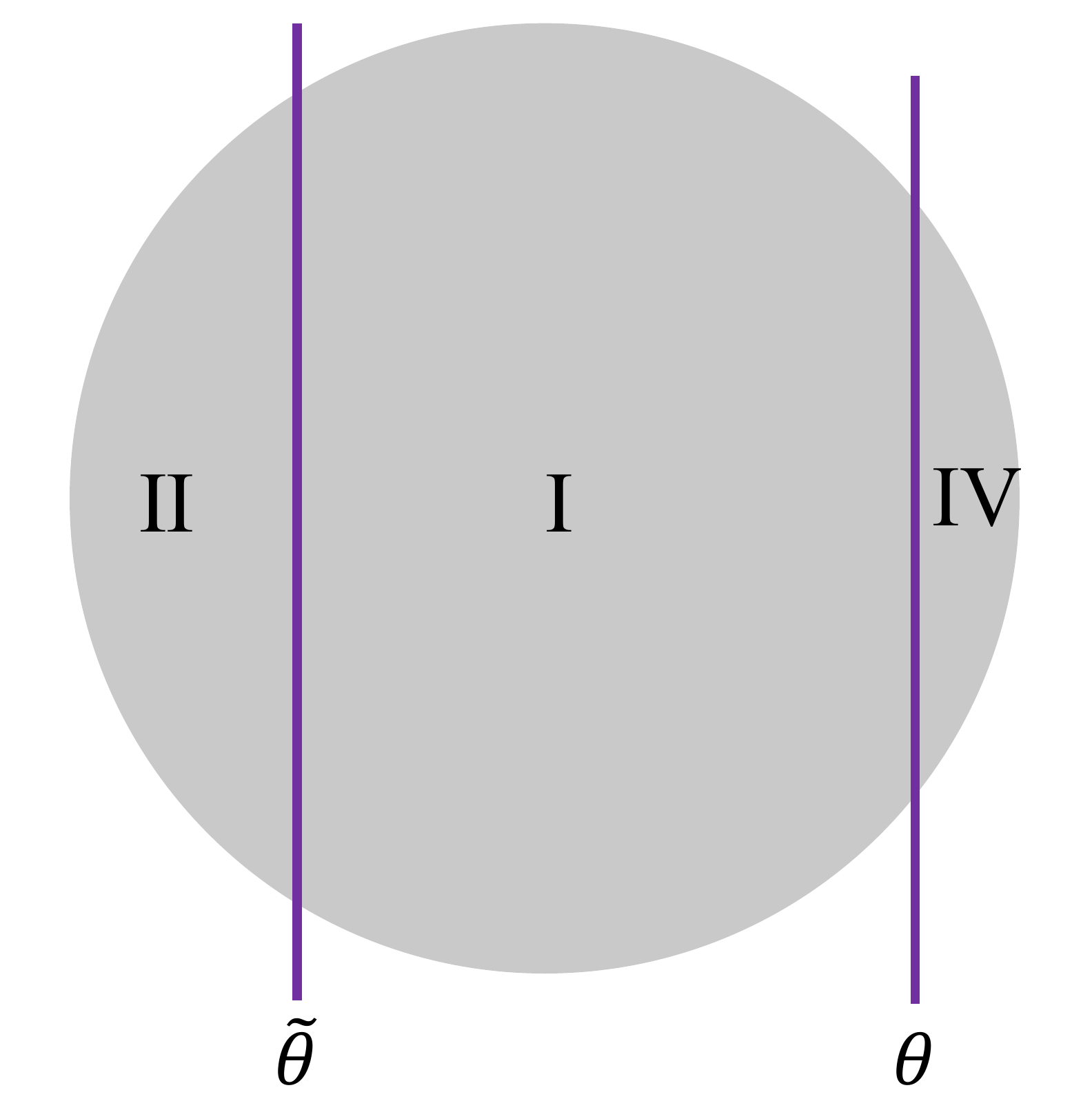}}\vskip -0.1in
	\caption{The partition of dataset determined by $ \theta $ and $ \tilde{\theta} $.}
	\label{fig:arrange}
\end{figure}

Recall that $ X_{\mathtt{so}} $ comprises $ \tilde{z} $ suspected outliers w.r.t. $ \tilde{\theta} $; $ X_{\mathtt{si}} $ comprises $ n-\tilde{z} $ suspected inliers w.r.t. $ \tilde{\theta} $; $ X_{\mathtt{ro}} $ comprises $ z $ ``real'' outliers w.r.t. $ \theta $; $ X_{\mathtt{ri}} $ comprises $ n-z $ ``real'' inliers w.r.t. $ \theta $. As mentioned in Section \ref{sec:robust-coreset}, parameters $ \tilde{\theta} $ and $ \theta $ partition $ X $ into at most 4 parts as shown in Figure \ref{fig:arrange}:
\begin{equation}
\text{$ X_{\mathrm{\Rnum{1}}}=X_{\mathtt{si}}\cap X_{\mathtt{ri}}$, $ X_{\mathrm{\Rnum{2}}}=X_{\mathtt{so}}\cap X_{\mathtt{ri}}$, $ X_{\mathrm{\Rnum{3}}}=X_{\mathtt{so}}\cap X_{\mathtt{ro}}$, and $ X_{\mathrm{\Rnum{4}}}=X_{\mathtt{si}}\cap X_{\mathtt{ro}}$.}
\end{equation}
Similarly, $ \tilde{\theta} $ and $ \theta $ yield a classification on $ C $:
\begin{equation}
\text{$ C_{\mathrm{\Rnum{1}}}=C_{\mathtt{si}}\cap C_{\mathtt{ri}}$, $ C_{\mathrm{\Rnum{2}}}=C_{\mathtt{so}}\cap C_{\mathtt{ri}}$, $ C_{\mathrm{\Rnum{3}}}=C_{\mathtt{so}}\cap C_{\mathtt{ro}}$, and $ C_{\mathrm{\Rnum{4}}}=C_{\mathtt{si}}\cap C_{\mathtt{ro}}$. }
\end{equation}

Note that $ C_\mathrm{\Rnum{1}}+C_\mathrm{\Rnum{4}} $ is an $ \varepsilon_1 $-coreset of $ X_\mathrm{\Rnum{1}}+X_\mathrm{\Rnum{4}} $, $ C_\mathrm{\Rnum{2}}+C_\mathrm{\Rnum{3}} $ is a $ \delta $-sample of $ X_\mathrm{\Rnum{2}}+X_\mathrm{\Rnum{3}} $, where $ \delta=\frac{\beta\varepsilon_0}{1+\varepsilon_0} $ with $ \varepsilon_0=\min \left\{ \frac{\varepsilon}{16},\frac{\varepsilon\cdot\inf_{\theta\in\mathbb{B}(\tilde{\theta}, \ell)} f_z(\theta,X)}{16(n-z)\xi{\ell}} \right\} $. We have $ f_z(\theta,X)=f(\theta,X_\mathrm{\Rnum{1}}+X_\mathrm{\Rnum{2}}) $ and $ f_z(\theta,C)=f(\theta,C_\mathrm{\Rnum{1}}+C_\mathrm{\Rnum{2}}) $. So our aim is to prove
\[f(\theta,C_\mathrm{\Rnum{1}}+C_\mathrm{\Rnum{2}})=f(\theta,C_\mathrm{\Rnum{1}})+f(\theta,C_\mathrm{\Rnum{2}})\approx f(\theta,X_\mathrm{\Rnum{1}}+X_\mathrm{\Rnum{2}}).\]
\begin{claim}
	\label{claim-divide}
	We have the following results for the partition yielded by $ \theta $ and $ \tilde{\theta} $.
	\begin{enumerate}
		\item 
		\begin{align*}
			\left|X_\mathrm{\Rnum{1}}+X_\mathrm{\Rnum{2}}\right|=\left\llbracket C_\mathrm{\Rnum{1}}+C_\mathrm{\Rnum{2}}\right\rrbracket = n-z,\quad
			\left|X_\mathrm{\Rnum{3}}+X_\mathrm{\Rnum{4}}\right|=\left\llbracket C_\mathrm{\Rnum{3}}+C_\mathrm{\Rnum{4}}\right\rrbracket = z,\\
			\left|X_\mathrm{\Rnum{1}}+X_\mathrm{\Rnum{4}}\right|=\left\llbracket C_\mathrm{\Rnum{1}}+C_\mathrm{\Rnum{4}} \right\rrbracket = n-\tilde{z},\quad
			\left|X_\mathrm{\Rnum{2}}+X_\mathrm{\Rnum{3}}\right|=\left\llbracket C_\mathrm{\Rnum{2}}+C_\mathrm{\Rnum{4}}\right\rrbracket = \tilde{z}.
		\end{align*}
		\item 
		\begin{align*}
			C_\mathrm{\Rnum{1}}+C_\mathrm{\Rnum{4}}\subseteq X_\mathrm{\Rnum{1}}+X_\mathrm{\Rnum{4}},\quad
			C_\mathrm{\Rnum{2}}+C_\mathrm{\Rnum{3}}\subseteq X_\mathrm{\Rnum{2}}+X_\mathrm{\Rnum{3}}
		\end{align*}
		\item we use $ x_{\mathrm{\Rnum{1}}} $ to denote any element in set $ \mathrm{\Rnum{1}} $ ($ \mathrm{\Rnum{1}} $ can be displaced by $ X_{\mathrm{\Rnum{1}}} $ or $ C_{\mathrm{\Rnum{1}}} $, and the meanings of other notations are similar.), then we have \begin{align*}
			f(\theta,x_{\mathrm{\Rnum{1}+\Rnum{2}}})\leq f(\theta,x_{\mathrm{\Rnum{3}+\Rnum{4}}}) ,\quad
			f(\tilde{\theta},x_{\mathrm{\Rnum{1}+\Rnum{4}}})\leq f(\tilde{\theta},x_{\mathrm{\Rnum{2}+\Rnum{3}}}).
		\end{align*}
		\item 
		\begin{align*}
			f(\tilde{\theta},x_{\mathrm{\Rnum{1}}})\leq \tau,\  f(\theta,x_{\mathrm{\Rnum{1}}})\leq \tau+\xi(\ell),  \\
			f(\tilde{\theta},x_{\mathrm{\Rnum{3}}})\geq \tau,\  f(\theta,x_{\mathrm{\Rnum{3}}})\geq \tau-\xi(\ell), \\
			f(\tilde{\theta},x_{\mathrm{\Rnum{2}}})\geq \tau,\ f(\theta,x_{\mathrm{\Rnum{2}}})\geq \tau-\xi(\ell).
		\end{align*}
		\item If set $ \mathrm{\Rnum{4}} $ is not empty, then
		\begin{align*}
			\tau - 2\xi(\ell) \leq f(\tilde{\theta},x_{\mathrm{\Rnum{4}}})\leq \tau,\quad
			\tau -\xi(\ell)\leq f(\theta,x_{\mathrm{\Rnum{4}}})\leq \tau+\xi(\ell),\\
			\tau \leq f(\tilde{\theta},x_{\mathrm{\Rnum{2}}})\leq \tau+2\xi(\ell),\quad
			\tau -\xi(\ell) \leq f(\theta,x_{\mathrm{\Rnum{2}}})\leq \tau+\xi(\ell).
		\end{align*}
	\end{enumerate}
	
\end{claim}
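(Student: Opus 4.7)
The plan is to unpack the definitions of the four sets $X_{\mathrm{I}},\dots,X_{\mathrm{IV}}$ and $C_{\mathrm{I}},\dots,C_{\mathrm{IV}}$, then combine the continuity bound $|f(\theta,x)-f(\tilde{\theta},x)|\leq \xi(\ell)$ (from Section~\ref{sec-appcbl}) with the monotonicity that characterizes ``(suspected) inliers vs.\ outliers''. Items 1--3 are bookkeeping, item 4 is a one-step continuity argument, and item 5 is a two-step continuity chain whose hypothesis ``$X_{\mathrm{IV}}\neq\emptyset$'' is what forces $X_{\mathrm{II}}\neq\emptyset$ as well; this interlocking is the only subtle point.

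For item 1, one just rewrites $X_{\mathrm{I}}\cup X_{\mathrm{II}} = (X_{\mathtt{si}}\cup X_{\mathtt{so}})\cap X_{\mathtt{ri}} = X_{\mathtt{ri}}$, which has cardinality $n-z$ by definition; the three other identities are identical algebra, and on the coreset side the total weights $n-z$, $z$, $n-\tilde z$, $\tilde z$ come from the construction ($\mathcal{A}$ returns an $\varepsilon_1$-coreset of $X_{\mathtt{si}}$ of total weight $n-\tilde z$, while the $\delta$-sample of $X_{\mathtt{so}}$ is reweighted to total weight $\tilde z$). Item 2 collapses to $C_{\mathtt{si}}\subseteq X_{\mathtt{si}}$ and $C_{\mathtt{so}}\subseteq X_{\mathtt{so}}$, both of which hold since the black-box coreset and the uniform sample are weighted subsets of their inputs. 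Item 3 is just the defining property of the two partitions: $X_{\mathtt{ri}}=X_{\mathrm{I}}\cup X_{\mathrm{II}}$ is the set of $n-z$ smallest-$f(\theta,\cdot)$ points, so every element there has $f(\theta,\cdot)$ no larger than every element of $X_{\mathrm{III}}\cup X_{\mathrm{IV}}=X_{\mathtt{ro}}$; swapping $\theta$ for $\tilde{\theta}$ and $(\mathtt{ri},\mathtt{ro})$ for $(\mathtt{si},\mathtt{so})$ gives the second inequality.

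Item 4 chains item 3 with continuity. The inequalities $f(\tilde{\theta},x_{\mathrm{I}})\leq\tau$, $f(\tilde{\theta},x_{\mathrm{II}})\geq\tau$, $f(\tilde{\theta},x_{\mathrm{III}})\geq\tau$ are definitional ($x_{\mathrm{I}}\in X_{\mathtt{si}}$, $x_{\mathrm{II}},x_{\mathrm{III}}\in X_{\mathtt{so}}$), and adding or subtracting $\xi(\ell)$ via continuity produces the corresponding bounds on $f(\theta,\cdot)$.

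The heart of the proof is item 5, where the bound $f(\tilde{\theta},x_{\mathrm{IV}})\geq \tau-2\xi(\ell)$ cannot come from continuity alone because $x_{\mathrm{IV}}\in X_{\mathtt{si}}$ only supplies the \emph{upper} bound $\tau$. One must transfer through a real-inlier/real-outlier comparison under $\theta$. Concretely, since $|X_{\mathtt{si}}|=n-\tilde z$ and $|X_{\mathtt{ri}}|=n-z$ with $\tilde z>z$, a standard counting argument shows that $X_{\mathrm{IV}}\neq\emptyset$ forces $X_{\mathrm{II}}=X_{\mathtt{so}}\cap X_{\mathtt{ri}}\neq\emptyset$; pick any representatives $x_{\mathrm{II}}$ and $x_{\mathrm{IV}}$ and chain
\begin{align*}
f(\tilde{\theta},x_{\mathrm{IV}}) &\geq f(\theta,x_{\mathrm{IV}})-\xi(\ell) \geq f(\theta,x_{\mathrm{II}})-\xi(\ell) \geq f(\tilde{\theta},x_{\mathrm{II}})-2\xi(\ell) \geq \tau-2\xi(\ell),
\end{align*}
using continuity at the two outer steps, part~3 (real outlier dominates real inlier under $\theta$) in the middle, and $x_{\mathrm{II}}\in X_{\mathtt{so}}$ at the end. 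The remaining four inequalities of item 5 follow by analogous one- or two-hop chains: $f(\theta,x_{\mathrm{IV}})\leq \tau+\xi(\ell)$ directly from $f(\tilde{\theta},x_{\mathrm{IV}})\leq\tau$; $f(\theta,x_{\mathrm{IV}})\geq\tau-\xi(\ell)$ via $x_{\mathrm{II}}$; the upper bound $f(\tilde{\theta},x_{\mathrm{II}})\leq\tau+2\xi(\ell)$ by the symmetric chain $f(\tilde{\theta},x_{\mathrm{II}})\leq f(\theta,x_{\mathrm{II}})+\xi(\ell)\leq f(\theta,x_{\mathrm{IV}})+\xi(\ell)\leq f(\tilde{\theta},x_{\mathrm{IV}})+2\xi(\ell)\leq\tau+2\xi(\ell)$; and the bounds on $f(\theta,x_{\mathrm{II}})$ by a single continuity hop from $f(\tilde{\theta},x_{\mathrm{II}})$. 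In every case the Lipschitz slack $\xi(\ell)$ is the only geometric input beyond the combinatorial partition identities.
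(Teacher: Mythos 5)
Your proposal is correct and follows essentially the same route as the paper: items 1--4 are treated as definitional bookkeeping plus one continuity hop, and the item-5 bounds are obtained by exactly the paper's chains that combine the item-3 inlier/outlier comparisons with the $\xi(\ell)$ continuity slack (your explicit remark that set $\mathrm{II}$ is nonempty is a detail the paper leaves implicit, and in fact it holds unconditionally since $\tilde z>z$). The only loose phrase is that the upper bound on $f(\theta,x_{\mathrm{II}})$ is not a ``single continuity hop'' but comes from $f(\theta,x_{\mathrm{II}})\leq f(\theta,x_{\mathrm{IV}})\leq \tau+\xi(\ell)$, which you had already written inside your earlier chain, so this is not a genuine gap.
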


\begin{proof}
	The proofs of the item $ 1$ to $ 4 $ are straightforward. As for the item 5, if set $ \mathrm{\Rnum{4}} $ is not empty, through combining the results of the item 3 and 4, we can attain the lower and upper bounds of $ f(\theta,x_{\mathrm{\Rnum{4}}}) $, 
	\begin{align*}
		f(\theta,x_{\mathrm{\Rnum{4}}})\geq f(\theta,x_{\mathrm{\Rnum{2}}})\geq \tau-\xi(\ell),\\
		f(\theta,x_{\mathrm{\Rnum{4}}})\leq f(\tilde{\theta},x_{\mathrm{\Rnum{4}}})+\xi(\ell)\leq \tau+\xi(\ell),
	\end{align*}
	and those of $ f(\tilde{\theta},x_{\mathrm{\Rnum{4}}}) $,
	\begin{align*}
	    f(\tilde{\theta},x_{\mathrm{\Rnum{4}}})\leq f(\tilde{\theta},x_{\mathrm{\Rnum{2}}})\leq \tau, \\
	    f(\tilde{\theta},x_{\mathrm{\Rnum{4}}})\geq 	f(\theta,x_{\mathrm{\Rnum{4}}})-\xi(\ell)\geq \tau-2\xi(\ell). 
	\end{align*}
	Similarly, we have
	\begin{align*}
	f(\tilde{\theta},x_{\mathrm{\Rnum{2}}})\leq f(\theta,x_{\mathrm{\Rnum{2}}})+\xi(\ell) \leq \tau+2\xi(\ell),\\
		f(\theta,x_{\mathrm{\Rnum{2}}})\leq f(\theta,x_{\mathrm{\Rnum{4}}})\leq \tau+\xi(\ell),\\
		f(\theta,x_{\mathrm{\Rnum{2}}})\geq f(\tilde{\theta},x_{\mathrm{\Rnum{2}}})-\xi(\ell)\geq \tau-\xi(\ell).
	\end{align*}
\end{proof}

Claim~\ref{claim-divide} will be used in the proofs of the following key lemmas before proving theorem \ref{thm-coreset-outlier}.

\begin{figure}
	\centering
	\includegraphics[width=0.4\linewidth]{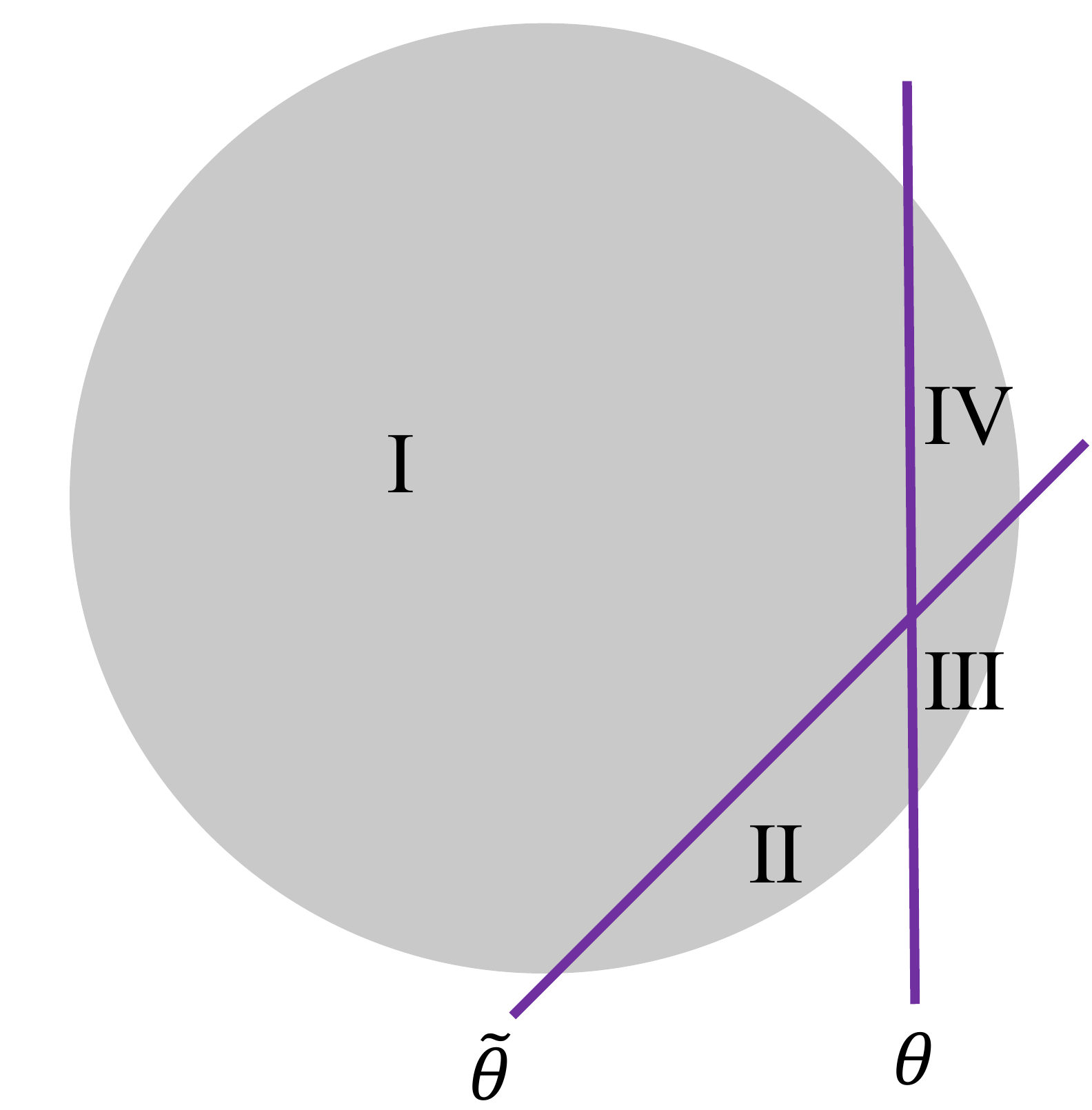}
	\caption{Illustration of Lemma \ref{lemma-l}. $\theta$ and $ \tilde{\theta} $ divide $X$ into different inliers and outliers. And we have that $ f_z(\theta,X)= f(\theta,X_\mathrm{\Rnum{1}}+X_\mathrm{\Rnum{2}}) $ and $ f_z(\tilde{\theta},X)= f(\tilde{\theta},X_\mathrm{\Rnum{1}}+X_\mathrm{\Rnum{4}}) $. }
	\label{fig:region-2}
\end{figure}

\begin{lemma}\label{lemma-l}
		$\left| f_z(\theta,X)-f_z(\tilde{\theta},X) \right|\leq (n-z)\xi(\ell).$
\end{lemma}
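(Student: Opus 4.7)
The plan is to combine the pointwise continuity bound $|f(\theta,x)-f(\tilde{\theta},x)|\leq \xi(\ell)$ guaranteed by the CnB assumption (see Section~\ref{sec-appcbl}) with a standard swap argument on the minimizer in the definition~(\ref{for-obj2}) of the trimmed objective.

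Concretely, let $O^{\theta}\subset X$ and $O^{\tilde\theta}\subset X$ be the sets of $z$ points attaining $f_z(\theta,X)=f(\theta,X\setminus O^{\theta})$ and $f_z(\tilde\theta,X)=f(\tilde\theta,X\setminus O^{\tilde\theta})$, respectively. Since $O^{\tilde\theta}$ also has size $z$, the minimality in~(\ref{for-obj2}) gives
\begin{equation*}
f_z(\theta,X)\leq f(\theta,X\setminus O^{\tilde\theta}).
\end{equation*}
Applying the pointwise bound $|f(\theta,x)-f(\tilde\theta,x)|\leq \xi(\ell)$ to each of the $n-z$ points in $X\setminus O^{\tilde\theta}$ and summing, we obtain
\begin{equation*}
f(\theta,X\setminus O^{\tilde\theta})\leq f(\tilde\theta,X\setminus O^{\tilde\theta})+(n-z)\xi(\ell)=f_z(\tilde\theta,X)+(n-z)\xi(\ell).
\end{equation*}
Hence $f_z(\theta,X)-f_z(\tilde\theta,X)\leq (n-z)\xi(\ell)$. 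The symmetric direction follows by swapping the roles of $\theta$ and $\tilde\theta$ (using $X\setminus O^{\theta}$ in place of $X\setminus O^{\tilde\theta}$), which yields $f_z(\tilde\theta,X)-f_z(\theta,X)\leq (n-z)\xi(\ell)$. Combining both inequalities gives the claim.

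There is no real obstacle here; the only subtlety is resisting the temptation to argue via the four-part partition $X_{\mathrm{\Rnum{1}}},\dots,X_{\mathrm{\Rnum{4}}}$ from Claim~\ref{claim-divide}, since $f_z(\tilde\theta,X)$ is trimmed with respect to the top-$z$ costs under $\tilde\theta$ (not the top-$\tilde z$), so it need not equal $f(\tilde\theta,X_{\mathrm{\Rnum{1}}}+X_{\mathrm{\Rnum{4}}})$. The swap argument above sidesteps this by never committing to a specific classification of the points.
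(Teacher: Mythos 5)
Your proof is correct, and it takes a cleaner route than the paper's. Your argument is the standard exchange/swap bound: evaluate $f(\theta,\cdot)$ on the $\tilde\theta$-optimal trimmed set $X\setminus O^{\tilde\theta}$, apply the pointwise CnB bound $|f(\theta,x)-f(\tilde\theta,x)|\leq\xi(\ell)$ to its $n-z$ points, and symmetrize. The paper instead argues through the four-part partition: it writes $f_z(\theta,X)=f(\theta,X_{\mathrm{\Rnum{1}}}+X_{\mathrm{\Rnum{2}}})$ and $f_z(\tilde\theta,X)=f(\tilde\theta,X_{\mathrm{\Rnum{1}}}+X_{\mathrm{\Rnum{4}}})$, applies the continuity bound to the $n-z$ points of $X_{\mathrm{\Rnum{1}}}+X_{\mathrm{\Rnum{2}}}$, and then uses $|X_{\mathrm{\Rnum{2}}}|=|X_{\mathrm{\Rnum{4}}}|$ together with $f(\tilde\theta,X_{\mathrm{\Rnum{2}}})\geq f(\tilde\theta,X_{\mathrm{\Rnum{4}}})$ to get $f(\tilde\theta,X_{\mathrm{\Rnum{1}}}+X_{\mathrm{\Rnum{2}}})\geq f(\tilde\theta,X_{\mathrm{\Rnum{1}}}+X_{\mathrm{\Rnum{4}}})$ --- which is precisely the optimality of the $\tilde\theta$-trimmed set made explicit, so the two proofs rest on the same two ingredients. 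One remark on your aside: your caution about Claim~\ref{claim-divide} is justified for the global partition (where $X_{\mathtt{so}}$ has $\tilde z$ points), but it does not invalidate the paper's proof of this lemma, because there the partition is implicitly re-defined so that $\tilde\theta$ also discards only its top $z$ costs (the proof asserts $|X_{\mathrm{\Rnum{2}}}+X_{\mathrm{\Rnum{3}}}|=z$ rather than $\tilde z$), making $f_z(\tilde\theta,X)=f(\tilde\theta,X_{\mathrm{\Rnum{1}}}+X_{\mathrm{\Rnum{4}}})$ valid in that local notation. What your version buys is exactly the avoidance of this notational overload: it never commits to a classification of the points, and it generalizes verbatim to weighted inputs; what the paper's version buys is consistency with the figure and with the partition machinery reused in the surrounding lemmas.
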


\begin{proof}
	Note that $ f_z(\theta,X)= f(\theta,X_\mathrm{\Rnum{1}}+X_\mathrm{\Rnum{2}}) $ and $ f_z(\tilde{\theta},X)= f(\tilde{\theta},X_\mathrm{\Rnum{1}}+X_\mathrm{\Rnum{4}})$ (see Figure~\ref{fig:region-2}). 
	Because $ |X_\mathrm{\Rnum{2}}+X_\mathrm{\Rnum{3}}|=|X_\mathrm{\Rnum{3}}+X_\mathrm{\Rnum{4}}|=z $, we have $ |X_\mathrm{\Rnum{2}}|=|X_\mathrm{\Rnum{4}}|\leq z $. 
	Then we have 
	\begin{align*}
		f_z(\theta,X)&=f(\theta,X_\mathrm{\Rnum{1}}+X_\mathrm{\Rnum{2}})\\ &\geq f(\tilde{\theta},X_\mathrm{\Rnum{1}}+X_\mathrm{\Rnum{2}})-\xi(\ell)(n-z)\\ &=f(\tilde{\theta},X_\mathrm{\Rnum{1}})+f(\tilde{\theta},X_\mathrm{\Rnum{2}})-f(\tilde{\theta},X_\mathrm{\Rnum{4}})+f(\tilde{\theta},X_\mathrm{\Rnum{4}})-\xi(\ell)(n-z)\\
		&\geq f(\tilde{\theta},X_\mathrm{\Rnum{1}}+X_\mathrm{\Rnum{4}})-\xi(\ell)(n-z)+\underbrace{\left( f(\tilde{\theta},X_\mathrm{\Rnum{2}})-f(\tilde{\theta},X_\mathrm{\Rnum{4}}) \right)}_{\geq 0} \\
		&\geq f_z(\tilde{\theta},X)-\xi(\ell)(n-z).
	\end{align*}
	Similarly, we have $ f_z(\theta,X)\leq f_z(\tilde{\theta},X)+\xi(\ell)(n-z) $. Therefore we have 	$\left| f_z(\theta,X)-f_z(\tilde{\theta},X) \right|\leq (n-z)\xi(\ell).$
\end{proof}

\begin{lemma}\label{lemma-4}
	\begin{equation}\label{C_II-Bound}
		f(\theta,C_\mathrm{\Rnum{2}})\leq 
		\begin{cases}
			f_{(1-\beta)z}(\theta,X_\mathrm{\Rnum{2}}+X_\mathrm{\Rnum{3}}) \quad& \text{if } C_\mathrm{\Rnum{4}}=\varnothing \\
			f(\theta,X_\mathrm{\Rnum{2}})+2z(\tau+\xi(\ell)) \quad& \text{if } C_\mathrm{\Rnum{4}}\ne\varnothing
		\end{cases}
	\end{equation}
\end{lemma}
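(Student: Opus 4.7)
The plan is to handle the two cases separately, each exploiting a different structural consequence of the hypothesis on $C_\mathrm{\Rnum{4}}$. Throughout I would rely on the weight-conservation identities of item~1 of Claim~\ref{claim-divide}, the pointwise cost bounds in its items~4 and~5, and, for Case~1, Theorem~\ref{lemma-eps-sample} applied to $C_{\mathtt{so}}$ viewed as a $\delta$-sample of $X_{\mathtt{so}}$.

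For Case~1 ($C_\mathrm{\Rnum{4}}=\varnothing$), the first observation is that the identity $\llbracket C_\mathrm{\Rnum{3}}\rrbracket+\llbracket C_\mathrm{\Rnum{4}}\rrbracket = z$ collapses to $\llbracket C_\mathrm{\Rnum{3}}\rrbracket = z$, so every real outlier of $C$ already lies inside $C_{\mathtt{so}}$, and $C_\mathrm{\Rnum{3}}$ must coincide with the weight-$z$ subset of $C_{\mathtt{so}}$ of largest $f(\theta,\cdot)$ values. Hence $f(\theta,C_\mathrm{\Rnum{2}}) = f(\theta,C_{\mathtt{so}}) - f(\theta,C_\mathrm{\Rnum{3}}) = f_z(\theta,C_{\mathtt{so}})$. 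Since $C_{\mathtt{so}}$ is a $\delta$-sample of $X_{\mathtt{so}}$ with $|X_{\mathtt{so}}| = \tilde{z}$ and $\delta = \beta\varepsilon_0/(1+\varepsilon_0)$, I would verify $\delta\tilde{z} = \beta z$ and then invoke Theorem~\ref{lemma-eps-sample} on the instance $(X_{\mathtt{so}},z)$ to conclude $f_z(\theta,C_{\mathtt{so}}) \leq f_{z-\delta\tilde{z}}(\theta,X_{\mathtt{so}}) = f_{(1-\beta)z}(\theta,X_\mathrm{\Rnum{2}}+X_\mathrm{\Rnum{3}})$, which is the desired inequality.

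For Case~2 ($C_\mathrm{\Rnum{4}}\ne\varnothing$), item~5 of Claim~\ref{claim-divide} immediately gives $f(\theta,c)\leq \tau+\xi(\ell)$ for every $c\in C_\mathrm{\Rnum{2}}$, yielding the pointwise bound $f(\theta,C_\mathrm{\Rnum{2}}) \leq \llbracket C_\mathrm{\Rnum{2}}\rrbracket(\tau+\xi(\ell))$. Combining the weight identities $\llbracket C_\mathrm{\Rnum{2}}\rrbracket + \llbracket C_\mathrm{\Rnum{3}}\rrbracket = |X_\mathrm{\Rnum{2}}|+|X_\mathrm{\Rnum{3}}| = \tilde{z}$ with $\llbracket C_\mathrm{\Rnum{3}}\rrbracket + \llbracket C_\mathrm{\Rnum{4}}\rrbracket = |X_\mathrm{\Rnum{3}}|+|X_\mathrm{\Rnum{4}}| = z$ gives the key identity $\llbracket C_\mathrm{\Rnum{2}}\rrbracket - |X_\mathrm{\Rnum{2}}| = \llbracket C_\mathrm{\Rnum{4}}\rrbracket - |X_\mathrm{\Rnum{4}}| \leq \llbracket C_\mathrm{\Rnum{4}}\rrbracket \leq z$. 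Splitting $\llbracket C_\mathrm{\Rnum{2}}\rrbracket(\tau+\xi(\ell)) = |X_\mathrm{\Rnum{2}}|(\tau+\xi(\ell)) + (\llbracket C_\mathrm{\Rnum{2}}\rrbracket - |X_\mathrm{\Rnum{2}}|)(\tau+\xi(\ell))$ and folding in the lower bound $f(\theta,X_\mathrm{\Rnum{2}}) \geq |X_\mathrm{\Rnum{2}}|(\tau-\xi(\ell))$ from item~4 of Claim~\ref{claim-divide}, I would absorb the first piece into $f(\theta,X_\mathrm{\Rnum{2}})$ up to an additive slack of $2|X_\mathrm{\Rnum{2}}|\xi(\ell)$, leaving a $O(z)(\tau+\xi(\ell))$ overhead on top.

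The hard part is pinning down the exact constant $2$ in front of $z(\tau+\xi(\ell))$. Unlike $|X_\mathrm{\Rnum{4}}|$, the size $|X_\mathrm{\Rnum{2}}|$ is not bounded by $z$: the relation $|X_\mathrm{\Rnum{2}}|+|X_\mathrm{\Rnum{3}}|=\tilde{z}$ with $|X_\mathrm{\Rnum{3}}|\leq z$ only gives $|X_\mathrm{\Rnum{2}}|\geq \tilde{z}-z=z/\varepsilon_0$, so the slack term $2|X_\mathrm{\Rnum{2}}|\xi(\ell)$ can be much larger than $z\xi(\ell)$ and does not close up against a naive $O(z)(\tau+\xi(\ell))$ budget. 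To obtain the tight constant I expect to need a per-point bookkeeping that matches weighted elements of $C_\mathrm{\Rnum{2}}$ with elements of $X_\mathrm{\Rnum{2}}$ of comparable $f(\tilde{\theta},\cdot)$ value and expresses $f(\theta,C_\mathrm{\Rnum{2}})-f(\theta,X_\mathrm{\Rnum{2}})$ as a sum of individual deviations of size $\leq 2\xi(\ell)$ plus a weight-mismatch term of size $\leq z$, rather than bounding the two sides of the difference in isolation.
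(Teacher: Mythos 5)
Your Case~1 is correct and is essentially the paper's own argument: when $C_\mathrm{\Rnum{4}}=\varnothing$ all real outliers of $C$ lie in $C_{\mathtt{so}}$, so $f(\theta,C_\mathrm{\Rnum{2}})=f_z(\theta,C_{\mathtt{so}})$, and applying the $\delta$-sample guarantee of Theorem~\ref{lemma-eps-sample} to the instance $(X_{\mathtt{so}},z)$ with $\delta\tilde{z}=\beta z$ (and $\delta\le z/\tilde{z}$, which holds since $\beta<1$) gives exactly the stated bound; the paper phrases this with an inlier-fraction notation, but it is the same step.

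Case~2 is where the genuine gap lies, and your diagnosis is only half right. You correctly observe that the purely pointwise bound leaves a slack $2|X_\mathrm{\Rnum{2}}|\xi(\ell)$ with $|X_\mathrm{\Rnum{2}}|$ possibly as large as $\tilde{z}-z=z/\varepsilon_0$; note this is not merely a question of "the exact constant $2$'': an extra term of order $\frac{z}{\varepsilon_0}\xi(\ell)$ would also break the downstream use of the lemma in Theorem~\ref{thm-coreset-outlier}, where only terms of order $z\tau+z\xi(\ell)$ can be absorbed through the choice of $\varepsilon_0$. However, the repair you sketch does not work either: matching units of $C_\mathrm{\Rnum{2}}$ to points of $X_\mathrm{\Rnum{2}}$ with comparable $f(\tilde{\theta},\cdot)$ value and charging each matched pair a deviation $\le 2\xi(\ell)$ yields an aggregate error $2\xi(\ell)\cdot\min\{\llbracket C_\mathrm{\Rnum{2}}\rrbracket,|X_\mathrm{\Rnum{2}}|\}=\Theta(\tilde{z}\,\xi(\ell))$, i.e. the same inadmissible term. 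The missing idea, which is how the paper argues, is to keep exploiting the $\delta$-sample structure of the whole suspected-outlier sets even in this case: let $\gamma,\gamma'$ be the inlier fractions (w.r.t.\ $\theta$) of $X_\mathrm{\Rnum{2}}+X_\mathrm{\Rnum{3}}$ and $C_\mathrm{\Rnum{2}}+C_\mathrm{\Rnum{3}}$, so that $|\gamma-\gamma'|\le\frac{\varepsilon_0}{1+\varepsilon_0}$, and compare the trimmed sums of $C_{\mathtt{so}}$ and $X_{\mathtt{so}}$ at trimming levels shifted by at most $|\gamma-\gamma'|+\delta$ in inlier fraction. The order-statistics domination underlying Theorem~\ref{lemma-eps-sample} transfers the bulk of the trimmed sum from $C_{\mathtt{so}}$ to $X_{\mathtt{so}}$ with no per-point continuity error at all; the only cost is the boundary weight $(|\gamma-\gamma'|+\delta)\tilde{z}\le(1+\beta)z\le 2z$, and each unit of it is charged $\tau+\xi(\ell)$ by item~5 of Claim~\ref{claim-divide}, which is available precisely because $C_\mathrm{\Rnum{4}}\ne\varnothing$. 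In short, the matching must be by $f(\theta,\cdot)$-rank inside $C_{\mathtt{so}}$ versus $X_{\mathtt{so}}$, using the one-sided $\delta$-sample domination, rather than a two-sided $2\xi(\ell)$ bound derived from proximity in $f(\tilde{\theta},\cdot)$.
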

\begin{proof}
It is easy to show that
$ \tilde{z}-z \leq \left| X_\mathrm{\Rnum{2}} \right|\leq \tilde{z} $ and $ \tilde{z}-z \leq \left\llbracket C_\mathrm{\Rnum{2}} \right\rrbracket \leq \tilde{z} $.
Then it implies that
\begin{equation}
	\Big| \llbracket C_\mathrm{\Rnum{2}}\rrbracket -|X_\mathrm{\Rnum{2}}| \Big| \leq z.
\end{equation}
If $ \iota $ is the the proportion of inliers of a dataset $ A $ then we define $ f_{[\iota]}(\theta,A):= f_{(1-\iota)n}(\theta,A) $.
Let the proportion of inliers of $ X_\mathrm{\Rnum{2}}+X_\mathrm{\Rnum{3}} $ and $ C_\mathrm{\Rnum{2}}+C_\mathrm{\Rnum{3}} $ be $ \gamma $ and $ \gamma' $ respectively. Then we have
\begin{align}
	f(\theta,X_\mathrm{\Rnum{2}})&= f_{[\gamma]}(\theta,X_\mathrm{\Rnum{2}}+X_\mathrm{\Rnum{3}}),\\
	f(\theta,C_\mathrm{\Rnum{2}})&=f_{[\gamma']}(\theta,C_\mathrm{\Rnum{2}}+C_\mathrm{\Rnum{3}}).
\end{align}
Note that $ \gamma,\gamma'\geq \frac{1}{1+\varepsilon_0} $ and thus $ |\gamma-\gamma'|\leq \frac{\varepsilon_0}{1+\varepsilon_0} $.

If $ C_\mathrm{\Rnum{4}}=\varnothing $, i.e., $ \gamma'=\frac{1}{1+\varepsilon_0} $, we have 
\begin{align} 	
	f(\theta,C_\mathrm{\Rnum{2}})&=f_{[\gamma']}(\theta,C_\mathrm{\Rnum{2}}+C_\mathrm{\Rnum{3}})\\
	&\leq f_{[\gamma'+\delta]}(\theta,X_\mathrm{\Rnum{2}}+X_\mathrm{\Rnum{3}})\\
	&=f_{(1-\beta)z}(\theta,X_\mathrm{\Rnum{2}}+X_\mathrm{\Rnum{3}}).
	\label{II-1}
\end{align}
Otherwise $ C_\mathrm{\Rnum{4}}\ne\varnothing $, we have $\tau-\xi(\ell)\leq x_{C_\mathrm{\Rnum{2}}}\leq \tau+\xi(\ell) $. Moreover, if $ \gamma'\leq \gamma $, then
\begin{align}
	f_{[\gamma']}(\theta,C_\mathrm{\Rnum{2}}+C_\mathrm{\Rnum{3}})&\leq f_{[\gamma'-\delta]}(\theta,C_\mathrm{\Rnum{2}}+C_\mathrm{\Rnum{3}})+z(\tau+\xi(\ell))\\
	&\leq f_{[\gamma']}(\theta,X_\mathrm{\Rnum{2}}+X_\mathrm{\Rnum{3}})+z(\tau+\xi(\ell))\\
	&\leq f_{[\gamma]}(\theta,X_\mathrm{\Rnum{2}}+X_\mathrm{\Rnum{3}})+z(\tau+\xi(\ell)).\label{C_iv-ne-empty-1}
\end{align}
If $ \gamma' > \gamma $, since $ \gamma'-\gamma\leq \frac{\varepsilon_0}{1+\varepsilon_0} $, we have
\begin{align}
	f_{[\gamma']}(\theta,C_\mathrm{\Rnum{2}}+C_\mathrm{\Rnum{3}})&\leq f_{[\gamma-\delta]}(\theta,C_\mathrm{\Rnum{2}}+C_\mathrm{\Rnum{3}})+2z(\tau+\xi(\ell))\\
	&\leq f_{[\gamma]}(\theta,X_\mathrm{\Rnum{2}}+X_\mathrm{\Rnum{3}})+2z(\tau+\xi(\ell)).\label{C_iv-ne-empty-2}
\end{align}
Combining (\ref{C_iv-ne-empty-1}) and (\ref{C_iv-ne-empty-2}), if $ C_\mathrm{\Rnum{4}}\ne\varnothing $, we have
\begin{equation}
	f(\theta,C_\mathrm{\Rnum{2}})\leq f_{[\gamma]}(\theta,X_\mathrm{\Rnum{2}}+X_\mathrm{\Rnum{3}})+2z(\tau+\xi(\ell))
	\label{II-2}
\end{equation}
	
\end{proof}

%

\begin{lemma}
	\label{lemma-5}
	$f(\theta,X_\mathrm{\Rnum{1}}+X_\mathrm{\Rnum{4}})+ f_{(1-\beta)z}(\theta,X_\mathrm{\Rnum{2}}+X_\mathrm{\Rnum{3}})
		\leq f_{(1-\beta)z}(\theta,X)+2z\xi(\ell)$.
 \end{lemma}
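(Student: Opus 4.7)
The plan is to reformulate both sides using the four-part partition and then upper-bound the trimmed cost on the left by a concrete feasible subset. I first observe that every real outlier (in $O=X_{\mathrm{\Rnum{3}}}+X_{\mathrm{\Rnum{4}}}$) has strictly larger $\theta$-cost than every real inlier (in $X_{\mathrm{\Rnum{1}}}+X_{\mathrm{\Rnum{2}}}$), and $|O|=z\geq(1-\beta)z$, so the optimal trimming in $f_{(1-\beta)z}(\theta,X)$ removes $(1-\beta)z$ points entirely from $O$. Letting $R$ denote the remaining $\beta z$ smallest-cost points of $O$, this yields $f_{(1-\beta)z}(\theta,X)=f(\theta,X_{\mathrm{\Rnum{1}}})+f(\theta,X_{\mathrm{\Rnum{2}}})+f(\theta,R)$; cancelling $f(\theta,X_{\mathrm{\Rnum{1}}})$ on both sides reduces the lemma to
\[
f(\theta,X_{\mathrm{\Rnum{4}}})+f_{(1-\beta)z}(\theta,X_{\mathrm{\Rnum{2}}}+X_{\mathrm{\Rnum{3}}})\leq f(\theta,X_{\mathrm{\Rnum{2}}})+f(\theta,R)+2z\xi(\ell).
\]

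I then upper-bound $f_{(1-\beta)z}(\theta,X_{\mathrm{\Rnum{2}}}+X_{\mathrm{\Rnum{3}}})$ by $f(\theta,T)$ for an explicit feasible subset $T\subseteq X_{\mathrm{\Rnum{2}}}+X_{\mathrm{\Rnum{3}}}$ of size $\tilde z-(1-\beta)z$. Writing $c:=|X_{\mathrm{\Rnum{4}}}|$, Claim~\ref{claim-divide}(1) gives $|X_{\mathrm{\Rnum{2}}}|=c+(\tilde z-z)$ and $|X_{\mathrm{\Rnum{3}}}|=z-c$, and I split cases on the sign of $\beta z-c$. If $c\leq\beta z$, take $T=X_{\mathrm{\Rnum{2}}}\cup Z$ with $Z$ the $\beta z-c$ smallest-cost points of $X_{\mathrm{\Rnum{3}}}$; if $c>\beta z$, take $T$ to be the $|X_{\mathrm{\Rnum{2}}}|-(c-\beta z)$ smallest-cost points of $X_{\mathrm{\Rnum{2}}}$ and set $E:=X_{\mathrm{\Rnum{2}}}\setminus T$. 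A quick arithmetic check confirms $|T|=\tilde z-(1-\beta)z$ in both cases.

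Finally I close the gap using the pointwise bounds from Claim~\ref{claim-divide}. The case $c=0$ is trivial since $X_{\mathrm{\Rnum{4}}}$ is empty and $Z=R$. Otherwise $X_{\mathrm{\Rnum{4}}}\neq\emptyset$, so Claim~\ref{claim-divide}(5) gives $|f(\theta,x)-\tau|\leq\xi(\ell)$ for every $x\in X_{\mathrm{\Rnum{2}}}\cup X_{\mathrm{\Rnum{4}}}$, and Claim~\ref{claim-divide}(4) gives $f(\theta,x)\geq\tau-\xi(\ell)$ for every $x\in X_{\mathrm{\Rnum{3}}}$. For $0<c\leq\beta z$, the key structural observation is that $R\cap X_{\mathrm{\Rnum{3}}}$ consists of the $\beta z-k$ smallest-cost points of $X_{\mathrm{\Rnum{3}}}$ with $k:=|R\cap X_{\mathrm{\Rnum{4}}}|\leq c$, so $Z\subseteq R\cap X_{\mathrm{\Rnum{3}}}$, and the per-point bounds telescope to $f(\theta,R)\geq f(\theta,Z)+c(\tau-\xi(\ell))$ while $f(\theta,X_{\mathrm{\Rnum{4}}})\leq c(\tau+\xi(\ell))$, leaving slack at most $2c\xi(\ell)\leq 2z\xi(\ell)$. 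For $c>\beta z$ the inequality reduces to $f(\theta,X_{\mathrm{\Rnum{4}}})\leq f(\theta,E)+f(\theta,R)+2z\xi(\ell)$; since $|E|+|R|=c$, the same pointwise bounds give $f(\theta,E)+f(\theta,R)\geq c(\tau-\xi(\ell))$ and the slack is again within budget.

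The main obstacle is the structural nesting $Z\subseteq R\cap X_{\mathrm{\Rnum{3}}}$ in the first non-trivial case; once this is recognised, all remaining steps reduce to routine arithmetic using the per-point cost bounds of Claim~\ref{claim-divide}. A secondary subtlety is that Claim~\ref{claim-divide}(5) is only available when $X_{\mathrm{\Rnum{4}}}\neq\emptyset$, but this is exactly the regime in which the bound is needed, since the $c=0$ branch produces the inequality with zero slack.
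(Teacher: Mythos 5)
Your proof is correct, and it takes a noticeably more explicit route than the paper's. The paper also begins from the fact that the optimal trimming $O$ of $X$ lies inside $X_{\mathrm{III}}\cup X_{\mathrm{IV}}$, but it then case-splits on whether $O$ meets $X_{\mathrm{IV}}$ and finishes with two aggregate inequalities: every point trimmed from $X_{\mathrm{II}}+X_{\mathrm{III}}$ costs at least $\tau-\xi(\ell)$, and (when $O\cap X_{\mathrm{IV}}\neq\varnothing$) $f(\theta,X)\le f_{(1-\beta)z}(\theta,X)+(1-\beta)z\bigl(\tau+\xi(\ell)\bigr)$, i.e.\ the $(1-\beta)z$ largest costs of $X$ are bounded on average by $\tau+\xi(\ell)$. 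You instead rewrite $f_{(1-\beta)z}(\theta,X)$ exactly through the kept low-cost outliers $R$, exhibit an explicit feasible kept set $T$ for the trimmed cost on $X_{\mathrm{II}}+X_{\mathrm{III}}$ (built from $X_{\mathrm{II}}$ and, depending on the sign of $\beta z-|X_{\mathrm{IV}}|$, a prefix $Z$ of $X_{\mathrm{III}}$ or a suffix $E$ of $X_{\mathrm{II}}$), and then match points one-to-one using items 4 and 5 of Claim~\ref{claim-divide}, which makes the slack transparently $2|X_{\mathrm{IV}}|\xi(\ell)\le 2z\xi(\ell)$. What this finer bookkeeping buys is rigor at exactly the delicate step: the paper's second inequality, read literally, bounds the sum of the top $(1-\beta)z$ costs of $X$, which does not follow from $O\cap X_{\mathrm{IV}}\neq\varnothing$ alone since points of $X_{\mathrm{III}}$ inside $O$ can be arbitrarily expensive; the intended cancellation of those expensive points against the trimming of $X_{\mathrm{II}}+X_{\mathrm{III}}$ is precisely what your nesting $Z\subseteq R\cap X_{\mathrm{III}}$ and the sets $T$, $E$ make explicit. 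The only point worth stating in a polished write-up is a fixed tie-breaking rule when selecting the ``smallest-cost'' sets $R$, $Z$, $T$, so that the nesting holds verbatim (the cost sums, and hence the inequalities, are unaffected by ties in any case).
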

\begin{proof}
	In $ f_{(1-\beta)z}(\theta,X) $, we delete $ (1-\beta)z $ points that denoted by $ O $ from $ X_\mathrm{\Rnum{3}}+X_\mathrm{\Rnum{4}} $.
	If $ O\cap X_\mathrm{\Rnum{4}}=\varnothing $, then we have \[ f(\theta,X_\mathrm{\Rnum{1}}+X_\mathrm{\Rnum{4}})+ f_{(1-\beta)z}(\theta,X_\mathrm{\Rnum{2}}+X_\mathrm{\Rnum{3}})= f_{(1-\beta)z}(\theta,X).\]
	Otherwise, $ O\cap X_\mathrm{\Rnum{4}}\ne \varnothing $, which implies $ X_\mathrm{\Rnum{4}}\ne \varnothing $, then we have $ f(\theta,x_{X_\mathrm{\Rnum{2}}+X_\mathrm{\Rnum{3}}})\geq \tau-\xi(\ell) $ and $ f(\theta,x_{X_\mathrm{\Rnum{4}}})\leq \tau+\xi(\ell) $. Then 
	\begin{align*}
	    f(\theta,X_\mathrm{\Rnum{1}}+X_\mathrm{\Rnum{4}})&+ f_{(1-\beta)z}(\theta,X_\mathrm{\Rnum{2}}+X_\mathrm{\Rnum{3}})\leq f(\theta,X)-(\tau-\xi(\ell))(1-\beta)z,\\
	    f(\theta,X)&\leq f_{(1-\beta)z}(\theta,X)+(\tau+\xi(\ell))(1-\beta)z.
	\end{align*}
	Combining the above two inequalities finally we conclude 
	\[ f(\theta,X_\mathrm{\Rnum{1}}+X_\mathrm{\Rnum{4}})+ f_{(1-\beta)z}(\theta,X_\mathrm{\Rnum{2}}+X_\mathrm{\Rnum{3}})\leq f_{(1-\beta)z}(\theta,X)+2z\xi(\ell). \]
\end{proof}

\begin{proof}[\textbf{Proof of Theorem \ref{thm-coreset-outlier}}]
It is easy to obtain the coreset size. So we only focus on proving the quality guarantee below. 

Our goal is to show that $ f_{z}(\theta,C)$ is a qualified approximation of $ f_{z}(\theta,X)$ (as Defintion~\ref{robust-coreset}).  
Note that $ f_{z}(\theta,C)=f(\theta,C_\mathrm{\Rnum{1}}+ C_\mathrm{\Rnum{2}})=f(\theta,C_\mathrm{\Rnum{1}})+f(\theta,C_\mathrm{\Rnum{2}}) $. Hence we can bound $ f(\theta,C_\mathrm{\Rnum{1}}) $ and $ f(\theta,C_\mathrm{\Rnum{2}}) $ separately. We consider their upper bounds first, and the lower bounds can be derived by using the similar manner.

The upper bound of $ f(\theta,C_\mathrm{\Rnum{1}}) $ directly comes from the definition of $\varepsilon$-coreset, i.e., $ f(\theta,C_\mathrm{\Rnum{1}})\leq f(\theta,C_\mathrm{\Rnum{1}}+C_\mathrm{\Rnum{4}})\leq (1+\varepsilon_1) f(\theta,X_\mathrm{\Rnum{1}}+X_\mathrm{\Rnum{4}}) $ since $ C_\mathrm{\Rnum{1}}+C_\mathrm{\Rnum{4}} $ is an $ \varepsilon_1 $-coreset of $ X_\mathrm{\Rnum{1}}+X_\mathrm{\Rnum{4}} $.

Together with Lemma~\ref{lemma-4} and Lemma~\ref{lemma-5}, we have an upper bound for $ f_z(\theta,C) $: 
\begin{equation}\label{f_z-Bound}
	f_z(\theta,C)\leq
	\begin{cases}
		(1+\varepsilon_1)f_{(1-\beta)z}(\theta,X)+4z\xi(\ell) \quad& \text{if } C_\mathrm{\Rnum{4}}=\varnothing \\
		(1+\varepsilon_1)f_z(\theta,X)+4z\tau+4z\xi(\ell) \quad& \text{if } C_\mathrm{\Rnum{4}}\ne\varnothing
	\end{cases}
\end{equation}
By taking the upper bound of these two bounds, we have 
\begin{equation}\label{f_z-Bound-2}
	f_z(\theta,C)\leq (1+\varepsilon_1)f_{(1-\beta)z}(\theta,X)+4z\tau+4z\xi(\ell).
\end{equation}
Also, we have $ z\tau\leq \varepsilon_0 f_z(\theta,X)+\varepsilon_0(n-z)\xi(\ell) $ and $ z\xi(\ell)\leq \varepsilon_0(n-z)\xi(\ell) $ due to lemma~\ref{lemma-l}. Through plugging these two bounds into (\ref{f_z-Bound-2}), we have
\begin{equation}
	f_z(\theta,C)\leq (1+4\varepsilon_0+\varepsilon_1)f_{(1-\beta)z}(\theta,X)+8\varepsilon_0\xi(\ell)(n-z).
\end{equation}
Since $ \varepsilon_1=\varepsilon/4 $ and $ \varepsilon_0=\min \left\{ \frac{\varepsilon}{16},\frac{\varepsilon\cdot\inf_{\theta\in\mathbb{B}(\tilde{\theta}, \ell)} f_z(\theta,X)}{16(n-z)\xi{\ell}} \right\} $, we have
\begin{equation}\label{upper-result}
    f_z(\theta,C)\leq (1+\varepsilon)f_{(1-\beta)z}(\theta,X).
\end{equation}
Similarly, we can obtain the lower bound
\begin{equation}\label{lower-result}
     f_{z}(\theta,C) \geq (1-\varepsilon) f_{(1+\beta)z}(\theta,X).
\end{equation}
Overall, from (\ref{upper-result}) and (\ref{lower-result}), we know that $ C $ is a $(\beta,\varepsilon)$-robust coreset of $ X $.
\end{proof}

\subsection{Proof of Theorem \ref{the-coreset-1}}\label{sec:proof-thm-4}
As for the importance sampling framework, we have the following lemma by using the Hoeffding's inequality~\cite{bachem2017practical}.
\begin{lemma}
\label{lem-proof-thm-4-1}
	Let $\theta$ be a fixed parameter vector. We sample $ m $ points from $ X $, denoted by $ C $, with the importance sampling framework. If $ m\geq \frac{S^2}{2\varepsilon^2}\log\frac{2}{\eta} $, then $ |{f}(\theta,C)-{f}(\theta,X)|\leq \varepsilon {f}(\theta,X) $ holds with probability at least $ 1-\eta $.
\end{lemma}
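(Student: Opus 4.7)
The plan is to recognize this as a standard Hoeffding-concentration argument applied to the unbiased importance-sampling estimator. First I would set up the estimator explicitly: writing $C = \{x_{i_1},\ldots,x_{i_m}\}$ with each index $i_j$ drawn i.i.d. according to $p_i = s_i/S$ and each sample carrying weight $w_{i_j} = S/(s_{i_j} m)$, I define the random variables
\begin{equation}
Y_j = \frac{S \cdot f(\theta, x_{i_j})}{s_{i_j}}, \qquad j = 1,\ldots,m,
\end{equation}
so that $f(\theta, C) = \frac{1}{m}\sum_{j=1}^m Y_j$. A direct computation shows $\mathbb{E}[Y_j] = \sum_{i=1}^n p_i \cdot \frac{S \cdot f(\theta,x_i)}{s_i} = \sum_{i=1}^n f(\theta,x_i) = f(\theta,X)$, so the estimator is unbiased for the fixed $\theta$.

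The second step is to control the range of each $Y_j$, which is the key ingredient that couples the algorithm to the sensitivity upper bounds. By definition $s_i \geq \sigma_i = \sup_{\theta'} f(\theta',x_i)/f(\theta',X)$, so in particular $f(\theta, x_i) \leq s_i \cdot f(\theta, X)$ for every $i$ and the fixed $\theta$ at hand. Substituting this into the definition of $Y_j$ gives $0 \leq Y_j \leq S \cdot f(\theta, X)$ almost surely, a uniform range bound that does not depend on which index is sampled.

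Because the $Y_j$ are i.i.d.\ and bounded, I would then apply Hoeffding's inequality to $\frac{1}{m}\sum_{j=1}^m Y_j$. This yields
\begin{equation}
\Pr\bigl[\,|f(\theta,C) - f(\theta,X)| > \varepsilon f(\theta,X)\,\bigr] \leq 2\exp\!\left(-\frac{2 m \varepsilon^2 f(\theta,X)^2}{(S \cdot f(\theta,X))^2}\right) = 2\exp\!\left(-\frac{2 m \varepsilon^2}{S^2}\right).
\end{equation}
Setting the right-hand side to at most $\eta$ and solving gives exactly $m \geq \frac{S^2}{2\varepsilon^2}\log\frac{2}{\eta}$, as claimed.

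There is no real obstacle here; the only subtlety is being careful that the sensitivity upper bound $s_i \geq \sigma_i$ delivers the same range bound $S f(\theta,X)$ for every sampled index, so that the Hoeffding constant factors into the clean $S^2/\varepsilon^2$ scaling. Since $\theta$ is fixed throughout, no union bound or covering argument is needed at this stage—those would enter only when upgrading from a single $\theta$ to a uniform bound over the whole parameter ball (as in Theorem~\ref{the-coreset-1}).
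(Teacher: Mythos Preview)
Your argument is correct and is precisely the approach the paper has in mind: the paper does not spell out a proof but simply attributes the lemma to Hoeffding's inequality (citing \cite{bachem2017practical}), and your write-up supplies exactly those details. The unbiasedness check, the range bound $Y_j\in[0,S\,f(\theta,X)]$ coming from $s_i\ge\sigma_i$, and the final Hoeffding computation all line up with the stated sample size $m\ge \tfrac{S^2}{2\varepsilon^2}\log\tfrac{2}{\eta}$.
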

Further,   we need to prove that $ |{f}(\theta,C)-{f}(\theta,X)|\leq \varepsilon {f}(\theta,X) $ holds for all $ \theta\in\mathbb{B}(\tilde{\theta},\ell) $. Let $ \mathbb{B}^{\varepsilon \ell} $ be an $ \varepsilon \ell $-net of $ \mathbb{B}(\tilde{\theta},\ell) $; so for any $ \theta\in\mathbb{B}(\tilde{\theta},\ell) $, there exists a $ \theta'\in\mathbb{B}^{\varepsilon \ell} $ such that $ \|\theta-\theta' \|\leq \varepsilon \ell $. To guarantee  $ |{f}(\theta',C)-{f}(\theta',X)|\leq \varepsilon {f}(\theta',X) $  for any $ \theta' $s in $ \mathbb{B}^{\varepsilon \ell} $, we can sample $ \frac{S^2}{2\varepsilon^2}\log\frac{2|\mathbb{B}^{\varepsilon \ell}|}{\eta} $ points instead of $\frac{S^2}{2\varepsilon^2}\log\frac{2}{\eta}$ as shown in Lemma~\ref{lem-proof-thm-4-1} (just take the union bound over all the points of $\mathbb{B}^{\varepsilon l}$). 
Also we can obtain the size $ |\mathbb{B}^{\varepsilon l}| $ by using the doubling dimension of the parameter space. Let $ \mathcal{M} $ be a metric space, and we say it has the \emph{doubling dimension} $ \mathtt{ddim} $ if $ \mathtt{ddim} $ is the smallest value satisfying that any ball in $ \mathcal{M} $ can be always covered by at most $ 2^{\mathtt{ddim}} $ balls of half the radius.
So we have  
$ |\mathbb{B}^{\varepsilon l}|=\left( \frac{\ell}{\varepsilon\ell}\right)^{\mathtt{ddim}}=\left( \frac{1}{\varepsilon}\right)^{\mathtt{ddim}} $, where $ \mathtt{ddim} $ is the doubling dimension of the parameter space.

Now the only remaining issue is to prove that $ |{f}(\theta,C)-{f}(\theta,X)|\leq \varepsilon {f}(\theta,X) $ holds for any $ \theta\in\mathbb{B}(\tilde{\theta},\ell)\setminus \mathbb{B}^{\varepsilon l}$. 
By using the triangle inequality,  we have
\begin{align*}
	\left|{f}(\theta,X)-{f}(\theta,C)\right|\leq \left|{f}(\theta,X)-{f}(\theta',X) \right|
	+\left|{f}(\theta',X)-{f}(\theta',C) \right|
	+\left|{f}(\theta',C)-{f}(\theta,C) \right|.
\end{align*}
The first and last item are both no more than $ \varepsilon n\xi(\ell) $ since $ \xi(\varepsilon\ell)\leq \varepsilon\xi(\ell) $. The middle term can be upper-bounded by $ \varepsilon ({f}(\theta,X)+n\xi(\ell)) $. Hence $ \left|{f}(\theta,X)-{f}(\theta,C)\right|\leq \varepsilon f(\theta,X)+3\varepsilon n\xi(\ell) $.
If replacing $ \varepsilon $ by $\min\{\varepsilon/2,\varepsilon\inf_{\theta\in\mathbb{B}(\tilde{\theta},\ell)}f(\theta,X)/6n\xi(\ell) \} $, through simple calculations we can obtain an $ \varepsilon $-coreset of size $|C|=\Theta\left(\frac{S^2}{\varepsilon^2}\left(\mathtt{ddim}\log\frac{1}{\varepsilon}+\log\frac{1}{\eta}\right) \right)$ with probability $ 1-\eta $. Since $ \xi(\ell) $ has an implicit parameter $\alpha$, the hidden constant of $|C|$ depends on $\alpha$ and $\inf_{\theta\in\mathbb{B}(\tilde{\theta},\ell)}\frac{f(\theta,X)}{n}$.

\subsection{Proof of Theorem \ref{thm:standard-coreset}}
\label{sec-appthestancoreset}
It is easy to see that the construction time is linear. So we only focus on the quality guarantee below. Recall that for any $ x $ in the $ i $-th layer, we have
\begin{equation}\label{value-range}
	f(\theta,x)\in\begin{cases}
		&[2^{i-1}T-\xi(\ell)+\varrho,2^iT+\xi(\ell)+\varrho] \quad i>0 \\
		&[\varrho,T+\xi(\ell)+\varrho]\quad i=0
	\end{cases}
\end{equation}
\begin{lemma}[\cite{Haussler92}]\label{Haussler92}
	Let $ g(\cdot) $ be a function defined on a set $ V $ and for all $ x\in V $, $ g(x)\in [a,b] $ such that $ b-a\leq h $. Let $ U\subseteq V $ be a set independently and uniformly sampled from $ V $. If $ |U|\geq (h^2/2\varepsilon^2)\ln(\frac{2}{\eta}) $, then $ \Pr\left[\left| \frac{g(V)}{|V|}-\frac{g(U)}{|U|} \right|\geq \varepsilon \right]\leq \eta $. 
\end{lemma}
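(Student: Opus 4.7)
The plan is to recognize this lemma as a direct instance of Hoeffding's inequality for i.i.d.\ bounded random variables applied to empirical means, so the proof will be essentially a one-line invocation once the random variables are set up properly.

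First I would write $m = |U|$ and $U = \{x_1,\ldots,x_m\}$, where each $x_i$ is drawn independently and uniformly from $V$, and define $Y_i := g(x_i)$. Then the $Y_i$ are i.i.d., take values in $[a,b]$, have common mean $\mathbb{E}[Y_i] = g(V)/|V|$, and satisfy $\frac{1}{m}\sum_{i=1}^m Y_i = g(U)/|U|$. So the probability in the statement is exactly the probability that the empirical mean of $m$ bounded i.i.d.\ variables deviates from its expectation by at least $\varepsilon$.

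Next I would invoke the standard two-sided Hoeffding bound,
\[
\Pr\!\left[\left|\frac{1}{m}\sum_{i=1}^m Y_i - \mathbb{E}[Y_1]\right| \geq \varepsilon\right] \;\leq\; 2\exp\!\left(-\frac{2m\varepsilon^2}{(b-a)^2}\right),
\]
and use $b-a \leq h$ together with $m \geq (h^2/2\varepsilon^2)\ln(2/\eta)$ to show the right-hand side is at most $2\exp(-\ln(2/\eta)) = \eta$. Combined with the identification of the empirical mean above, this immediately yields the claimed bound.

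There is essentially no substantive obstacle here; the only small subtlety is the meaning of ``$U \subseteq V$ independently and uniformly sampled,'' which might be read as sampling without replacement. If that reading is intended, I would simply note that Hoeffding's inequality also applies to sampling without replacement from a finite population with the same exponential tail (Hoeffding 1963, Section 6), so the conclusion is unchanged. Either interpretation reduces to a one-step application of a standard concentration inequality.
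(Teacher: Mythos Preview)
Your proposal is correct and is precisely the standard derivation; the paper itself does not give a proof of this lemma but simply cites it as a known result (attributed to~\cite{Haussler92}), so your Hoeffding-based argument is exactly what underlies the cited statement. Your handling of the with/without-replacement ambiguity is also appropriate.
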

We apply  (\ref{value-range}) to lemma \ref{Haussler92} and obtain the following result. 
\begin{lemma}
	If we sample a set of $ O\left(\frac{1}{\varepsilon^2}\left(\log\frac{1}{\eta}+\log\log n \right)\right) $ points uniformly at random from $ X_i $, which is denoted by $ C_i $, then for any fixed parameter $ \theta $, we have
	\begin{equation}\label{1layer-1}
		\left| \frac{f(\theta,X_i)}{|X_i|}-\frac{f(\theta,C_i)}{|C_i|} \right|\leq
		\begin{cases}
			\varepsilon(2^{i-1}T+2\xi(\ell)), \ i\geq 1; \\ \varepsilon(T+\xi(\ell)),  \ i=0 
		\end{cases}
	\end{equation}
	with probability $ 1-\eta $.
\end{lemma}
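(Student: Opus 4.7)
The plan is to obtain the lemma as a direct consequence of Haussler's inequality (Lemma~\ref{Haussler92}) applied layer by layer, using the bounded ranges already recorded in (\ref{value-range}). First I would fix the layer index $i$ and the parameter $\theta\in\mathbb{B}(\tilde{\theta},\ell)$, and consider the scalar function $g(x):=f(\theta,x)$ restricted to $V:=X_i$. The bound (\ref{value-range}) guarantees that $g$ takes values in a single interval of width
\[
h_i \;=\; \begin{cases} 2^{i-1}T+2\xi(\ell) & i\geq 1,\\ T+\xi(\ell) & i=0,\end{cases}
\]
which is precisely the factor multiplying $\varepsilon$ on the right-hand side of the claimed inequality. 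This width follows by combining the layering rule (dyadic windows for $f(\tilde{\theta},x)$) with the CnB estimate $|f(\theta,x)-f(\tilde{\theta},x)|\leq \xi(\ell)$ applied uniformly to every $x\in X_i$.

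Second, I would invoke Lemma~\ref{Haussler92} with $V=X_i$, $U=C_i$ the uniform sample, and target accuracy $\varepsilon':=\varepsilon h_i$ in the role of ``$\varepsilon$'' of that lemma. Since $g$ has range bounded by $h_i$, the sample-size requirement becomes $|C_i|\geq h_i^2/\bigl(2(\varepsilon h_i)^2\bigr)\cdot \ln(2/\eta)=\bigl(1/(2\varepsilon^2)\bigr)\ln(2/\eta)$, which is independent of $h_i$ and already of the form $O\bigl(\varepsilon^{-2}\log(1/\eta)\bigr)$. Haussler's conclusion then reads
\[
\left|\frac{f(\theta,X_i)}{|X_i|}-\frac{f(\theta,C_i)}{|C_i|}\right| \;\leq\; \varepsilon h_i,
\]
which matches the two cases of the stated bound exactly, and holds for the fixed $\theta$ with probability at least $1-\eta$.

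The additional $\log\log n$ term in the sample size is the cushion needed so that the per-layer guarantee can be applied simultaneously across all $L+1=O(\log n)$ layers for a given $\theta$. Substituting $\eta\mapsto \eta/(L+1)$ in the per-layer call to Haussler inflates $\ln(2/\eta)$ to $O\bigl(\log(1/\eta)+\log\log n\bigr)$, matching the statement and letting the subsequent aggregation over layers (and later, via an $\varepsilon\ell$-net, over $\theta$) proceed by a clean union bound. The argument is essentially mechanical; the only point of care is matching the per-layer range width from (\ref{value-range}) to the factor $2^{i-1}T+2\xi(\ell)$ (resp.\ $T+\xi(\ell)$ at $i=0$), after which Lemma~\ref{Haussler92} does the work.
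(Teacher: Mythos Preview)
Your proposal is correct and mirrors the paper's one-line proof: apply Lemma~\ref{Haussler92} to each layer using the range bound (\ref{value-range}), with the width $h_i$ computed exactly as you do. Your explanation of the $\log\log n$ term as the cushion for a union bound across the $O(\log n)$ layers is also consistent with how the paper subsequently aggregates the per-layer guarantees in Lemma~\ref{lemma-layer}.
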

For each $i$-th layer, we set the weight of each sampled point  to be $|X_i|/|C_i|$. Let $ C $ be the union of $ C_i $s and we can prove the following lemma.

\begin{lemma}\label{lemma-layer}
	Let $\theta$ be a fixed parameter vector. We can compute a weighted subset $ C\subseteq X $ of size $ \Theta\left(\frac{\log n}{\varepsilon^2}\left(\log\frac{1}{\eta}+\log\log n \right) \right) $ such that 
	\begin{equation}\label{fixed-parameter}
		|{f}(\theta,C)-{f}(\theta,X)|\leq 4\varepsilon({f}(\theta,X)+n\xi(\ell))
	\end{equation}
	holds  with probability $ 1-\eta $.
\end{lemma}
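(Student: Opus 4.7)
The plan is to combine the per-layer sampling guarantee from the previous lemma with a union bound over the $L+1 = O(\log n)$ layers, and then exploit the geometric structure of the layering to collapse the sum of weighted per-layer errors into a bound proportional to $f(\theta,X) + n\xi(\ell)$. First I would run the uniform sampling in each layer at failure probability $\eta' = \eta/(L+1)$ instead of $\eta$; the union bound then guarantees that all per-layer estimates hold simultaneously with probability $1-\eta$. This raises each $|C_i|$ to $\Theta\bigl(\tfrac{1}{\varepsilon^2}(\log\tfrac{1}{\eta}+\log\log n)\bigr)$, and summing over $L+1 = O(\log n)$ layers yields the total size $\Theta\bigl(\tfrac{\log n}{\varepsilon^2}(\log\tfrac{1}{\eta}+\log\log n)\bigr)$ claimed in the statement.

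Next, because each point of $C_i$ carries weight $|X_i|/|C_i|$, the weighted cost on layer $i$ equals $|X_i| \cdot \bigl(f(\theta, C_i)/|C_i|\bigr)$, so the per-layer estimate translates directly into
\[
\bigl|f(\theta, C\cap X_i)-f(\theta, X_i)\bigr| \;\leq\; \varepsilon\, |X_i| \bigl(2^{i-1}T + 2\xi(\ell)\bigr) \qquad (i\geq 1),
\]
and the analogous $\varepsilon|X_0|(T+\xi(\ell))$ for $i=0$. Applying the triangle inequality across layers,
\[
\bigl|f(\theta,C) - f(\theta,X)\bigr| \;\leq\; \varepsilon\sum_{i=0}^{L} |X_i|\cdot 2^{\max(i-1,0)}T \;+\; 2\varepsilon\,\xi(\ell)\sum_{i=0}^L |X_i|.
\]
The second sum is trivially $2\varepsilon n\xi(\ell)$. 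For the first, the definition of layering implies that for any $x\in X_i$ with $i\geq 1$ we have $f(\tilde\theta,x)\geq 2^i T + \varrho \geq 2^{i-1}T$, hence $|X_i|\cdot 2^{i-1}T \leq f(\tilde\theta, X_i)$; summing gives $\sum_{i\geq 1}|X_i|\cdot 2^{i-1}T \leq f(\tilde\theta, X)$, and the $i=0$ contribution is bounded by $|X_0|T \leq nT = f(\tilde\theta,X)$. Combining, the first sum is at most $2\varepsilon f(\tilde\theta,X)$.

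Finally I would convert $\tilde\theta$ back to $\theta$ using the continuity bound: $f(\tilde\theta,X) \leq f(\theta,X)+n\xi(\ell)$. Putting everything together,
\[
\bigl|f(\theta,C) - f(\theta,X)\bigr| \;\leq\; 2\varepsilon\bigl(f(\theta,X) + n\xi(\ell)\bigr) + 2\varepsilon n\xi(\ell) \;\leq\; 4\varepsilon\bigl(f(\theta,X)+n\xi(\ell)\bigr),
\]
which is the desired inequality.

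The main obstacle is the second step: reconciling the geometrically growing per-layer error $2^{i-1}T$ with the (a priori unbounded) layer sizes $|X_i|$. The key observation that unlocks the argument is the structural inequality $|X_i|\cdot 2^{i-1}T \leq f(\tilde\theta, X_i)$, which turns the suspicious-looking geometric sum into a telescoping bound by $f(\tilde\theta, X)$. Once that is in place, the remaining work is bookkeeping on constants to land exactly at the factor $4$, and being careful that the $i=0$ layer and the $\xi(\ell)$ slack introduced when moving from $\tilde\theta$ to $\theta$ are both absorbed into the additive $n\xi(\ell)$ term.
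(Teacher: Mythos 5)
Your proof is correct and follows essentially the same route as the paper: per-layer uniform-sampling error bounds, a triangle inequality over the $O(\log n)$ layers, the structural bounds $|X_i|\,2^{i-1}T \leq f(\tilde{\theta},X_i)$ and $|X_0|T \leq nT = f(\tilde{\theta},X)$, and finally $f(\tilde{\theta},X)\leq f(\theta,X)+n\xi(\ell)$ to land at the factor $4$. The only cosmetic difference is that you make the union bound over layers explicit via $\eta/(L+1)$, which is exactly what the $\log\log n$ term in the per-layer sample size accounts for in the paper.
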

\begin{proof}
	Based on the construction of $ C $, we have
	\begin{align*}
		|f(\theta&,C)-f(\theta,X)|\leq 
		\sum_{i=0}^L|f(\theta,C_i)-f(\theta,X_i)| \\
		&=|f(\theta,C_0)-f(\theta,X_0)|+\sum_{i=1}^L|f(\theta,C_i)-f(\theta,X_i)| \\
		&\leq  \varepsilon |X_0|(T+\xi(\ell)) +\sum_{i=1}^L\varepsilon|X_i|(2^{i-1}T+2\xi(\ell))\\
		&\leq \sum_{i=0}^L\varepsilon|X_i|2\xi(\ell)+ \varepsilon |X_0|T+\sum_{i=1}^L\varepsilon|X_i|(2^{i-1}T).	
	\end{align*}	
	Due to the definition of $ T $, we have $ |X_i|2^{i-1}T \leq f(\tilde{\theta},X_i) $ for $ i\geq 1 $, then 
		\begin{align*}
		|{f}(\theta,C)-{f}(\theta,X)|& \leq \varepsilon\cdot 2n\xi(\ell)+\varepsilon {f}(\tilde{\theta},X)+\varepsilon {f}(\tilde{\theta},X)\\
		&=2\varepsilon(n\xi(\ell)+{f}(\tilde{\theta},X))\\
		&\leq 2\varepsilon(2n\xi(\ell)+{f}(\theta,X))\\
		&\leq 4\varepsilon(n\xi(\ell)+{f}(\theta,X)).
		\end{align*}
	The second inequality holds because $ |f(\tilde{\theta},x)-f(\theta,x)|\leq \xi(\ell) $ holds for every $ x $.
\end{proof}

Then we can apply the similar idea of Section~\ref{sec:proof-thm-4} to achieve a union bound over $ \mathbb{B}(\tilde{\theta},\ell) $. Finally, we obtain the coreset size 
$ \Theta\left(\frac{\log n}{\varepsilon^2}\left(\mathtt{ddim}\log\frac{1}{\varepsilon}+\log\frac{1}{\eta}\right) \right)$\footnote{We omit the $ \log\log n $ item.}.

\end{document}